\definecolor{mygray}{gray}{0.6}
\newtheorem{definition}{Definition}
\newtheorem{theorem}{Theorem}
\newtheorem{lemma}{Lemma}
\newtheorem{corollary}{Corollary}
\newtheorem{proposition}{Proposition}
\newcommand{\ourmethod}{RepBM}
\newcommand{\IPV}{IPV}
\newcommand{\APV}{APV}
\title{Representation Balancing MDPs \\ for Off-Policy Policy Evaluation}
\author{
  Yao Liu \\
  %Computer Science Department\\
  Stanford University\\
  %Stanford, CA 94305 \\
  \texttt{yaoliu@stanford.edu} \\
  %% examples of more authors
   \And
   Omer Gottesman \\
   Harvard University \\
   %Cambridge, MA 02138\\
   \texttt{gottesman@fas.harvard.edu} \\
   \AND
   Aniruddh Raghu \\
   Cambridge University \\
   %The Old Schools, Cambridge CB2, England \\
   \texttt{aniruddhraghu@gmail.com} \\
   \And
   Matthieu Komorowski \\
   Imperial College London \\
   %Address \\
   \texttt{matthieu.komorowski@gmail.com } \\
   \And
   Aldo Faisal \\
   Imperial College London \\
   %Address \\
   \texttt{a.faisal@imperial.ac.uk} \\
   \And
   Finale Doshi-Velez \\
   Harvard University \\
   %Address \\
   \texttt{finale@seas.harvard.edu} \\
   \And
   Emma Brunskill \\
   Stanford University \\
   %Address \\
   \texttt{ebrun@cs.stanford.edu} \\
}
\begin{document}
% \nipsfinalcopy is no longer used

\maketitle

\begin{abstract}
We study the problem of off-policy policy evaluation (OPPE) in RL. In contrast to prior work, we consider how to estimate both the individual policy value and average policy value accurately. We draw inspiration from recent work in causal reasoning, and propose a new finite sample generalization error bound for value estimates from MDP models. Using this upper bound as an objective, we develop a learning algorithm of an MDP model with a balanced representation, and show that our approach can yield substantially lower MSE in common synthetic benchmarks and a HIV treatment simulation domain.
\end{abstract}

\section{Introduction}
% Broad introduction 
In reinforcement learning, off-policy (batch) policy evaluation is the task of estimating the performance of some \emph{evaluation} policy given 
data gathered under a different \emph{behavior} policy.  Off-policy policy evaluation (OPPE) is essential when deploying a new policy might be costly or risky, such as in consumer marketing, healthcare, and education.  
% FDV: I left this sentence, but not sure if it really fits in the flow here.
Technically off-policy evaluation relates to other fields that study counterfactual reasoning, including causal reasoning, statistics and economics. 

% Why it is hard 
Off-policy batch policy evaluation is challenging because the distribution of the data under the behavior policy will in general be different than the distribution under the desired evaluation policy. This difference in distributions comes from two sources. First, at a given state, the behavior policy may select a different action than the one preferred by the evaluation policy---for example, a clinician may chose to amputate a limb, %intubate a sick patient, 
whereas we may be interested in what might have happened if the clinician had not. We never see the counterfactual outcome. Second, the distribution of future states---not just the immediate outcomes---is also determined by the behavior policy. This challenge is unique to sequential decision processes and is not covered by most causal reasoning work: for example, the resulting series of a patient's health states observed after amputating a patient's limb is likely to be  significantly different than if the limb was not amputated. 
%choice of intubating the patient may rule out certain procedures in the next several hours while also keeping them in a certain range of vitals. Thus, we may lose an opportunity to evaluate policies that go through states that the clinician is actively avoiding (perhaps for good reason; perhaps simply because it is the standard of care).  

% What is lacking in current approaches 
Approaches for OPPE must make a choice about whether and how to address this data distribution mismatch. Importance sampling (IS) based approaches \cite{precup2000eligibility, thomas2015high,guo2017using, dudik2011doubly,jiang2015doubly,thomas2016data} are typically unbiased and strongly consistent, but despite recent progress tend to have high variance---especially if the evaluation policy is deterministic, as evaluating deterministic policies requires finding in the data sequences where the actions exactly match the evaluation policy. However, in most real-world applications deterministic evaluation policies are more common---policies are typically to either amputate or not, rather than a policy that that flips a biased coin (to sample randomness) to decide whether to amputate. %we intubate a patient or not, instead of what if we intubate a patient with $90\%$ chance. 
IS approaches also often rely on explicit knowledge of the behavior policy, which may not be feasible in situations such as medicine where the behaviors results from human actions. In contrast, some model based approaches ignore the data distribution mismatch, such as by fitting a  maximum-likelihood model of the rewards and dynamics from the behavioral data, and then using  that model to evaluate the desired evaluation policy. These methods may not converge to the true estimate of the evaluation policy's value, even in the limit of infinite data \cite{mandel2014offline}. However, such model based approaches often achieve better empirical performance than the IS-based estimators \cite{jiang2015doubly}.

% Our core insight 
In this work, we address the question of building model-based estimators for OPPE that both \emph{do} have theoretical guarantees and yield better empirical performance that model-based approaches that ignore the data distribution mismatch. Typically we evaluate the quality of an  OPPE estimate $\widehat{V}^{\pi_e}(s_0)$, where $s_0$ is an initial state, by evaluating its mean squared error (MSE).
%, we seek to minimize the MSE in our estimate of the value of the evaluation policy $\widehat{V}^{\pi_e}(s_0)$, where $s_0$ is initial state. 
Most previous research (e.g.~\cite{jiang2015doubly,thomas2016data}) evaluates their methods using 
MSE for the average policy value (\APV): $[ \mathbb{E}_{s_0}  \widehat{V}^{\pi_e}(s_0) - \mathbb{E}_{s_0}V^{\pi_e}(s_0) ]^2 $, rather than the MSE for individual policy values (\IPV): $  \mathbb{E}_{s_0} [ \widehat{V}^{\pi_e}(s_0) - V^{\pi_e}(s_0)]^2$.
%\begin{equation*}
%\left[ \mathbb{E}_{s_0}  \widehat{V}^{\pi_e}(s_0) - \mathbb{E}_{s_0}V^{\pi_e}(s_0) \right]^2 \quad\quad
%\mathbb{E}_{s_0} \left[ \widehat{V}^{\pi_e}(s_0) - V^{\pi_e}(s_0) \right]^2
%\end{equation*} 
This difference is crucial for applications such as personalized healthcare since ultimately we may want to assess the performance of a policy for an specific individual (patient) state. 
%we would like to know how much a new policy is beneficial for a each individual instead of on average. 

Instead, in this paper we develop an upper bound of the MSE for individual policy value estimates. Note that this bound is automatically an upper bound on the average treatment effect. Our work is inspired  by recent advances\cite{shalit2016estimating,johansson2016learning,johansson2018learning} in  estimating conditional averaged treatment effects (CATE), also known as heterogeneous treatment effects (HTE), in the contextual bandit setting with a single (typically binary) action choice.  CATE research aims to obtain precise estimates in the difference in outcomes for giving the treatment vs control intervention for an individual (state). 
%which considers estimating the difference in outcomes following a  by minimizing the MSE of CATE:
%\begin{equation*}
 %   \mathbb{E}_{s} \left[ \left(\widehat{V}^{t=1}(s) - \widehat{V}^{t=0}(s)\right) - \left(V^{t=1}(s) - V^{t=0}(s)\right) \right]^2
%\end{equation*}
%where we have used the notation $V^{t=1}(s)$ and $V^{t=0}(s)$ for the treatment and control interventions to be consistent with our RL notation.  

Recent work \cite{johansson2016learning, shalit2016estimating} on CATE\footnote{Shalit et al.~\cite{shalit2016estimating} use the term individual treatment effect (ITE) to refer to a criterion which is actually defined as CATE in most causal inference literature. We discuss the confusion about the two terms in the appendix \ref{appendix:cate}.} has obtained very promising results by learning a model to predict individual outcomes using a (model fitting) loss function that explicitly accounts for the data distribution shift between the treatment and control policies.  
%explicitly accounts for the data distribution shift by incorporating a metric between the distribution of state representation conditioned on receiving the treatment/control action, with model fitting. 
We build on this work to introduce a new bound on the MSE for individual policy values, and a new loss function for fitting a model-based OPPE estimator.
%, which can be viewed as an extension of the classic Simulation Lemma \cite{kearns2002near} that accounts for the underlying data distribution shift if the data used to estimate the value of an evaluation policy were collected by following an alternate policy. 
In contrast to most other OPPE theoretical analyses (e.g.~\cite{jiang2015doubly,dudik2011doubly,thomas2016data}), we provide a 
%bound differs from previous OPPE theoretical analysis by 
finite sample generalization error instead of asymptotic consistency. In contrast to previous model value generalization bounds such as the  Simulation Lemma \cite{kearns2002near}, our bound accounts for the underlying data distribution shift if the data used to estimate the value of an evaluation policy were collected by following an alternate policy. 

We use this to derive a loss function that we can use to fit a model for OPPE for deterministic evaluation policies. Conceptually, this process gives us a model that prioritizes fitting the trajectories in the batch data that match the evaluation policy. Our current estimation procedure works for deterministic evaluation policies % which is an additional assumption comparing with prior work, but also 
which covers a wide range of scenarios in real-world applications that are particularly hard for previous methods. Like recently proposed IS-based estimators \cite{thomas2016data,jiang2015doubly,farajtabar2018more}, and unlike the MLE model-based estimator that ignores the distribution shift \cite{mandel2014offline}, we prove that our model-based estimator is asymptotically consistent, as long as the true MDP model is realizable within our chosen model class; we use neural models to give our model class high expressivity.

% As we will discuss further below, 
% we draw inspiration from ITE and transfer learning to use performance 
% on the behavioral policy as a regularizer, but to account for the 
% data mismatch by using an integral probability metric on the distribution 
% of states observed under the evaluation and non-evaluation policy. 
% Our results apply under the typical assumption of strong ignorability 
% or "no-hidden confounders"-- that the behavioral policy depends 
% only on the observed states and that there are not latent variables 
% that influence the choice of action/decision. This assumption is 
% standard in the causal reasoning community, and is implicitly assumed 
% in almost all reinforcement learning settings where it is typically 
% assumed that the behavioral policy is under direct control of the 
% agent, and therefore known.

% Specific results paragraph 
%Given this model, we can now evaluate deterministic evaluation policies without knowing the behavior policy---settings in which IS-based estimators typically fail to produce usable estimates.  

We demonstrate that our resulting models can yield substantially lower mean squared error estimators than prior model-based and IS-based estimators on a classic benchmark RL task (even when the IS-based estimators are given access to the true behavior policy). We also demonstrate our approach can yield improved results on a HIV treatment simulator \citep{ernst2006clinical}.

% 1. What is the problem \\
%  - How do we estimate the value of a treatment policy that hasn’t been tried given access to old data where we don’t know the behavioral policy (maybe with limited data)\\
%  - Want to predict error for individual states (people!) 

% 2. Why is it important / hard \\
%  - Not always possible to try things online, may need confidence in policy before deploying \\
%  - Don’t always have access to behavioral policy (healthcare, other cases)\\

% 3. What is the key limitations of prior work\\
%  - Model based approaches; \\
%  - IS approaches; \\
%  - Doubly robust estimators;\\
%  - MAGIC / DR but don’t learn the model; \\
%  - MRDR but needs to know behavioral policy \\
%  - Most similar: Sontag’s group’s work, but limited to contextual bandit
% have to learn dynamics and reward. Could we modify their’s to do same thing as our’s? \\

% 4. What is our proposed approach \\

% 5. What results do we get / what can we do now?\\
\section{Related Work}
Most prior work on OPPE in reinforcement learning falls into one of three approaches. The first, 
%There are three different way to address OPPE problem in previous RL literature.
importance sampling (IS), reweights the trajectories to account for the data distribution shift. Under mild assumptions importance sampling estimators are guaranteed to be both unbiased and strongly consistent, and were first introduced to reinforcement learning OPPE by Precup et al.~\cite{precup2000eligibility}.  %Unfortunately such estimators typically have very high variance and 
Despite recent progress (e.g.\cite{thomas2015high,guo2017using})  
IS-only estimators still often yield very high variance estimates, particularly when the decision horizon is large, and/or when the evaluation policy is deterministic. IS estimators also typically result in extremely noisy estimates for policy values of individual states. A second common approach is to estimate a dynamics and reward model, which can substantially reduce variance, but can be biased and inconsistent (as noted by~\cite{mandel2014offline}). The third approach, doubly robust estimators, originates from the statistics community \cite{robins1994estimation}. Recently proposed doubly robust estimators for OPPE from the machine and reinforcement learning communities 
%these two form of estimators, and are known as 
%The second way is approximate model (AM) estimators, using MDP/Q models to evaluate policies, which often has lower variance, with the cost of bias. So far there has not been much work about what objective function we should use to learn AMs. 
%The third way is combining those two kinds of estimators, to borrow from the strengths and avoid the weaknesses of model-based and importance sampling estimates. Doubly robust estimators from the statistics community \cite{robins1994estimation} have been recently been considered in the machine learning and reinforcement learning community 
\cite{dudik2011doubly,jiang2015doubly,thomas2016data} have sometimes yielded orders of magnitude tighter estimates. %Thomas and Brunskill \cite{thomas2016data} also proposed a combination of IS and AM by minimizing an estimate of MSE.
However, most prior work that leverages an approximate model has largely ignored the choice of how to select and fit the model parameters. 
%so far most reinforcement learning work has largely ignored the issue of what model based estimator is employed. 
Recently, Farajtabar et al.~\cite{farajtabar2018more} introduced more robust doubly robust (MRDR), which involves fitting a Q function for the model-value function part of the doubly robust estimator based on 
%proposed an objective for training a Q function as the AM component in doubly robust, resulting in a doubly robust variant called more robust doubly robust (MRDR). They train Q functions by 
fitting a weighted return to minimize the variance of doubly robust. In contrast, our work learns a dynamics and reward model using a novel loss function, to estimate a model that yields accurate individual policy value estimates. While our method can be combined in doubly robust estimators, we will also see in our experimental results that directly estimating the performance of the model estimator can yield substantially benefits over estimating a Q function for use in doubly robust. 

%While MRDR focuses on fitting a Q representation for a particular DR estimator  The difference between MRDR and our work is that they studied how to learn Q function specifically for doubly robust, and we consider learning an MDP model as a good model estimator in general. Since we learn an MDP model in our method, it can also cooperate with blending estimators such as doubly robust, and we study doubly robust using our model in the experimental section. Another notable limitation of IS-based estimators, including doubly robust, is that they cannot directly be used to estimate individual policy value, and we think that is still an interesting open question now.

OPPE in contextual bandits and RL also has strong similarities with the treatment effect estimation problem common in causal inference and statistics.  Recently, different kinds of machine learning models such as Gaussian Processes \cite{alaa2017bayesian}, random forests \cite{wager2017estimation}, and GANs \cite{Yoon2018GANITE} have been used to estimate heterogeneous treatment effects (HTE), in non-sequential settings. Schulam and Saria \cite{schulam2017reliable} study using Gaussian process models for treatment effect estimation in continuous time settings. Their setting differs from MDPs by not having sequential states. Most theoretical analysis of treatment effects focuses on asymptotic consistency rather than generalization error. 

Our work is inspired by recent research that learns complicated outcome models (reward models in RL)  to estimate HTE using new loss functions to account for covariate shift %with balanced representation conditional on treatment or control
\cite{johansson2016learning,shalit2016estimating,atan2018learning,johansson2018learning}. In contrast to this prior work we consider the sequential state-action setting.
In particular, Shalit et al.~\cite{shalit2016estimating} provided an algorithm with a more general model class, and a corresponding generalization bound. We extend this idea from the binary treatment setting to sequential and multiple action settings. 
%Johansson et al.~\cite{johansson2016learning} proposed a method that first learned a balanced model and then fit ridge-regression on top of it. 
%Atan et al.~\cite{atan2018deep} use a similar idea about representation learning for policy learning. Johansson et al.~\cite{johansson2018learning} combine this idea about representation learning with the idea about learning a weighting function to re-weight data. 

\section{Preliminaries: Notation and Setting}
We consider undiscounted finite horizon MDPs, with finite horizon $H < \infty$, bounded state space $\mathcal{S} \subset \mathbb{R}^d$, and finite action space $\mathcal{A}$. Let $p_0(s)$ be the initial state distribution, and $T(s'|s,a)$ be the transition probability. Given a state action pair, the expectation of reward $r$ is $\mathbb{E}[r|x,a] = \bar{r}(x,a)$. %We also assume our data is from some stochastic behavior policy $\mu(a|s)$. 
Given $n$ trajectories collected from a stochastic behavior policy $\mu$, our goal is to evaluate the policy value of $\pi(s)$. We assume the policy $\pi(s)$ is deterministic. We will learn a model of both reward and transition dynamics, $\widehat{M}= \langle \widehat{r}(s,a), \widehat{T}(s',s,a) \rangle$, based on a learned representation. The representation function $\phi: \mathcal{S} \mapsto \mathcal{Z}$ is a reversible and twice-differentiable function, where $\mathcal{Z}$ is the representation space. $\psi$ is the reverse representation such that $\psi(\phi(s))=s$. The specific form of our MDP model is: $\widehat{M}_{\phi} =\langle \widehat{r}(s,a), \widehat{T}(s',s,a) \rangle  = \langle h_{r}(\phi(s),a), h_{T}(\phi(s'),\phi(s),a) \rangle$, where $h_{r}$ and $h_{T}$ is some function over space $\mathcal{Z}$. We will use the notation $\widehat{M}$ instead of $\widehat{M}_\phi$ later for simplicity.

Let $\tau = (s_0, a_0, \dots, s_H)$ be a trajectory of $H+1$ states and actions, sampled from the joint distribution of MDP $M$ and a policy $\mu$. The joint distributions of $\tau$ are:
$p_{M,\mu}(\tau) = p_0(s_0)\prod_{t=0}^{H-1} \left[ T(s_{t+1}|s_t,a_t) \mu(a_t|s_t) \right]$.
Given the joint distribution, we denote the associated marginal and conditional distributions  as $p_{M,\mu}(s_0), p_{M,\mu}(s_0, a_0), p_{M,\mu}(s_0|a_0)$ etc. %In the following sections we will use these notations to denote the marginal or conditional probability induced from this joint distribution. 
We also have the joint, marginal and conditional, distributions $p_{M,\mu}^{\phi}(\cdot)$ based on the representation space $\mathcal{Z}$. 
We focus on the undiscounted finite horizon case, using  $V_{M,t}^\pi(s)$ to denote the $t$-step value function of policy  $\pi$.

\section{Generalization Error Bound for MDP based OPPE estimator}
\label{sec:theory}
Our goal is to learn a MDP model $\widehat{M}$ that directly minimizes a good upper bound of the MSE for the individual evaluation policy $\pi$ values: $\mathbb{E}_{s_0} [ V^\pi_{\widehat{M}}(s_0) - V^\pi_{M}(s_0) ]^2$. This model can provide value function estimates of the policy $\pi$ and be used as part of doubly robust methods. 

In the on-policy case, the Simulation Lemma (~\cite{kearns2002near} and repeated for completeness in Lemma \ref{lem:simulation}) shows that MSE of a policy value estimate can be upper bounded by a function of the reward and transition prediction losses. Before we state this result, we first define some useful notation.
\begin{definition}
\label{def:lossoverexp}
The square error loss function of value function, reward, transition are:
\begin{gather}
\bar{\ell}_{V}(s,\widehat{M},H-t) = \left(V_{\widehat{M},H-t}^\pi(s) - V_{M,H-t}^\pi(s) \right)^2  \quad
\bar{\ell}_{r}(s_t,a_t,\widehat{M}) = \left( \widehat{r}(s_t,a_t) - \bar{r}(s_t,a_t) \right)^2 \nonumber \\
\bar{\ell}_{T}(s_t,a_t,\widehat{M}) = \left( \int_{\mathcal{S}} \left( \widehat{T}(s'|s_t,a_t) -T(s'|s_t,a_t) \right)V_{\widehat{M},H-t-1}^{\pi}(s') ds' \right)^2 
\label{eqn:def_loss}
\end{gather}
\end{definition}
Then the Simulation lemma ensures that 
\begin{equation}
\label{eqn:onpolicyloss}
\mathbb{E}_{s_0 } \left[ V^\pi_{\widehat{M}}(s_0) - V^\pi_{M}(s_0) \right]^2 \le 2H\sum_{t=0}^{H-1}\mathbb{E}_{s_t, a_t \sim p_{M,\pi}} \left[ \bar{l}_r(s_t,a_t,\hat{M})+\bar{l}_T(s_t,a_t,\hat{M}) \right],   
\end{equation}
The right hand side can be used to formulate an objective to fit a model for policy evaluation. In off-policy case our data is from a different policy $\mu$, and one can get unbiased estimation of the RHS of Equation \ref{eqn:onpolicyloss} by importance sampling.
% \[
% \mathbb{E}_{s_0 } \left[ V^\pi_{\widehat{M}}(s_0) - V^\pi_{M}(s_0) \right]^2 \le 2H\sum_{t=0}^{H-1}\mathbb{E}_{s_t, a_t \sim p_{M,\mu}} \omega_t(s_t,a_t)\left[ \bar{l}_r(t_t,a_t,\hat{M})+\bar{l}_T(s_t,a_t,\hat{M}) \right],
% \]
However, this will provide an objective function with high variance, especially for a long horizon MDP or a deterministic evaluation policy due to the product of IS weights.
%where most products of IS weights will be zero or near
%IS weights will be zero for most observational samples and enormous for very few samples. 
An alternative is to learn an MDP model by directly optimizing the prediction loss over our observational data, ignoring the covariate shift. From the Simulation Lemma this minimizes an upper bound of MSE of behavior policy value, but the resulting model may not be a good one for estimating the evaluation policy value.
% \[
% \mathbb{E}_{s_0 } \left[ V^\mu_{\widehat{M}}(s_0) - V^\mu_{M}(s_0) \right]^2 \le \mathbb{E}_{s_t, a_t \sim p_{M,\mu}} \left[ L(s_t,a_t; M, \widehat{M}) \right] 
% \]
In this paper we propose a new upper bound on the MSE of the individual evaluation policy values inspired by recent work in treatment effect estimation, and use this as a loss function for fitting models.%, in a different way with importance sampling re-weighting, which can be used as an objective.

%\subsection{Assumptions and Definitions}
%\emma{This is a larger notation change but I wonder if we should change from factual action to evaluation policy action sequence} 
%\begin{assumption}
%\label{def:overlap}
%\end{assumption}

Before proceeding we first state our assumptions, which are common in most OPPE algorithms: 
\begin{enumerate}
    \item Support of behavior policy covers the evaluation policy: for any state $s$ and action $a$, $\mu(a|s) = 0$ only if $\pi(a|s) = 0$. 
    \item Strong ignorability: there are no hidden confounders that influence the choice of actions other than the current observed state.  
\end{enumerate}

%As most OPPE algorithms, we assume the support of behavior policy covers the support of evaluation policy.\footnote{For simplicity in this paper we assume there is a single behavior policy but it is straight forward to adjust all results to when there are multiple behavior policies.} We also assume strong ignorability, which is a common assumption in the casual reasoning, and is a default setting in RL. In RL it means the reward, transition and policy are Markovian and have no hidden confounders besides the state. Now we introduce some losses over reward, transition and value functions, with respect to expected reward transition and value.

Denote a \textit{factual} sequence to be a trajectory that matches the evaluation policy, $a_0 = \pi(s_{0}), \dots, a_{t-1} = \pi(s_{t-1})$ as $a_{0:t-1} = \pi$. Let a \textit{counterfactual} action sequence $a_{0:t-1} \neq \pi$ be an action sequence with at least one action that does not match $\pi(s)$. $p_{M,\mu}(\cdot)$ is the distribution over trajectories under $M$ and policy $\mu$. We define the $H-t$ step value error with respect to the state distribution given the factual action sequence.
\begin{definition}
\label{def:valueerror}
$H-t$ step value error is:
$ \epsilon_{V}(\widehat{M},H-t) = \int_{\mathcal{S}} \bar{\ell}_{V}(s_t,H-t) p_{M,\mu}(s_t|a_{0:t-1} = \pi) ds_t $
\end{definition}
We use the idea of bounding the distance between representations given factual and counterfactual action sequences to adjust the distribution mismatch. Here the distance between representation distributions is formalized by Integral Probability Metric (IPM). 
\begin{definition}
\label{def:IPM}
Let $p,q$ be two distributions and let  $G$ be a family of real-valued functions defined over the same space.  The integral probability metric is: $\text{IPM}_{G}(p,q) = \sup_{g \in G} \left| \int g(x)(p(x)-q(x))dx \right| $
\end{definition}
Some important instances of IPM include the Wasserstein metric where $G$ is 1-Lipschitz continuous function class, and Maximum Mean Discrepancy where $G$ is norm-1 function class in RKHS.

%\subsection{Bounds for Representation-Balancing MDPs}
Let $p_{M,\mu}^{\phi, F}(z_t) = p_{M,\mu}^{\phi}(z_t|a_{0:t}=\pi)$ and $p_{M,\mu}^{\phi, CF}(z_t) = p_{M,\mu}^{\phi}(z_t|a_t \neq \pi, a_{0:t-1} = \pi)$, where $F$ and $CF$ denote factual and counterfactual. We first give an upper bound of MSE in terms of an expected loss term and then develop a finite sample bound which can be used as a learning objective. 
\begin{theorem}
\label{thm:MSEVpi}
For any MDP $M$, approximate MDP model $\widehat{M}$, behavior policy $\mu$ and deterministic evaluation policy $\pi$, let $B_{\phi,t}$ and $G_t$ be a real number and function family that satisfy the condition in Lemma \ref{lem:MSEVpirecursive}. Then:
\begin{small}
\begin{multline}
\mathbb{E}_{s_0}\left[ V^\pi_{\widehat{M}}(s_0) - V^\pi_{M}(s_0) \right]^2 \le 2H \sum_{t=0}^{H-1} \left[ B_{\phi,t} \text{IPM}_{G_{t}}\left(p_{M,\mu}^{\phi, F}(z_t),p_{M,\mu}^{\phi, CF}(z_t)\right) \right. \\
\left. + \int_{\mathcal{S}}\frac{1}{p_{M,\mu}(a_{0:t}=\pi)}\left( \bar{\ell}_{r}(s_t,\pi(s_t),\widehat{M})+\bar{\ell}_{T}(s_t,\pi(s_t),\widehat{M}) \right) p_{M,\mu}(s_t,a_{0:t}=\pi) d s_t  \right]
\label{eqn:thm1loss}
\end{multline}
\end{small}
\end{theorem}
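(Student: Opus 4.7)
The plan is to derive a Simulation-Lemma-style recursive bound on $\epsilon_V(\widehat{M}, H-t)$ (Definition~\ref{def:valueerror}) in which the on-policy expectations of Equation~\ref{eqn:onpolicyloss} are replaced by behavior-conditional factual expectations plus an IPM correction in representation space, and then to unroll the recursion over $t$. The starting point is the pointwise Bellman decomposition
\[ V^\pi_{\widehat{M}, H-t}(s_t) - V^\pi_{M, H-t}(s_t) = [\widehat{r}-\bar{r}](s_t, \pi(s_t)) + \int [\widehat{T}-T](s'\mid s_t,\pi(s_t))\, V^\pi_{\widehat{M}, H-t-1}(s')\, ds' + \int T(s'\mid s_t,\pi(s_t)) [V^\pi_{\widehat{M},H-t-1}(s') - V^\pi_{M,H-t-1}(s')]\, ds'. \]
Squaring, applying $(a+b+c)^2 \le 3(a^2+b^2+c^2)$, and using Jensen on the last term yields a pointwise upper bound on $\bar{\ell}_V(s_t,\widehat{M},H-t)$ by a constant times $[\bar{\ell}_r + \bar{\ell}_T](s_t,\pi(s_t),\widehat{M})$ plus a $T$-weighted integral of $\bar{\ell}_V(\cdot,\widehat{M},H-t-1)$.

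Integrating this pointwise bound against the per-step distribution appearing in $\epsilon_V$ and using the Markov structure, the propagated term gives back $\epsilon_V$ at step $t+1$, while the instantaneous loss term becomes an expectation of $\bar{\ell}_r+\bar{\ell}_T$ evaluated at $\pi(s_t)$ under the behavior-conditional marginal of $s_t$. I would then split this expectation according to whether the behavior action at time $t$ happened to match $\pi(s_t)$, using $p_{M,\mu}(s_t\mid a_{0:t-1}=\pi) = \alpha_t\, p_{M,\mu}(s_t\mid a_{0:t}=\pi) + (1-\alpha_t)\, p_{M,\mu}(s_t\mid a_{0:t-1}=\pi, a_t\neq\pi)$ with $\alpha_t = p_{M,\mu}(a_t=\pi(s_t)\mid a_{0:t-1}=\pi)$. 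The first piece is the factual expectation, which after rewriting the conditional as $p_{M,\mu}(s_t, a_{0:t}=\pi)/p_{M,\mu}(a_{0:t}=\pi)$ matches the displayed integral in the theorem; the second is a counterfactual expectation of the same loss.

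The heart of the argument is bounding this counterfactual expectation by the factual one plus an IPM distance in representation space, which is precisely what Lemma~\ref{lem:MSEVpirecursive} is meant to supply. Using the reversibility of $\phi$ I lift the per-step loss to $\tilde{\ell}(z) = [\bar{\ell}_r+\bar{\ell}_T](\psi(z),\pi(\psi(z)),\widehat{M})$ on $\mathcal{Z}$, and the lemma's hypothesis is that $\tilde{\ell}/B_{\phi,t}\in G_t$. Then directly from the definition of the IPM,
\[ \int \tilde{\ell}(z)\, p^{\phi,CF}_{M,\mu}(z)\, dz \le \int \tilde{\ell}(z)\, p^{\phi,F}_{M,\mu}(z)\, dz + B_{\phi,t}\, \text{IPM}_{G_t}\!\left(p^{\phi,F}_{M,\mu}(z_t), p^{\phi,CF}_{M,\mu}(z_t)\right). \]
Bounding $(1-\alpha_t)\le 1$ produces a recursion of the shape $\epsilon_V(\widehat{M}, H-t) \le c_1\, \epsilon_V(\widehat{M}, H-t-1) + c_2\bigl[\text{factual loss at } t + B_{\phi,t}\, \text{IPM}\bigr]$; unrolling from $t=0$ to $H-1$ with the standard Simulation-Lemma-style bookkeeping reproduces Equation~\ref{eqn:thm1loss}, the $2H$ prefactor coming from the telescoping of the recursion.

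The main obstacle I anticipate is verifying Lemma~\ref{lem:MSEVpirecursive}'s hypothesis that $\tilde{\ell}$ actually lies in $B_{\phi,t}\cdot G_t$: since $\bar{\ell}_T$ contains $V^\pi_{\widehat{M},H-t-1}$ inside the integral against $\widehat{T}-T$, the Lipschitz (or RKHS) norm of $\tilde{\ell}$ on $\mathcal{Z}$ can scale with the remaining horizon, with the smoothness of the inverse $\psi$, and with uniform bounds on the model value function. Pinning down a $B_{\phi,t}$ that both certifies $\tilde{\ell}/B_{\phi,t}\in G_t$ and keeps the resulting bound useful as a training objective is the delicate content of that lemma; once in hand, the remainder is a mechanical combination of the Bellman decomposition, the factual/counterfactual mixture identity, and the IPM inequality above.
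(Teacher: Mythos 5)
Your overall strategy matches the paper's: decompose the one-step error via the Simulation Lemma, split the behavior-conditional state distribution into factual and counterfactual pieces, move the counterfactual expectation to the factual one at the price of an IPM term in representation space, and unroll the recursion. However, there are two concrete gaps. First, your squaring step uses the uniform inequality $(a+b+c)^2\le 3(a^2+b^2+c^2)$, which produces a recursion of the form $\epsilon_V(\widehat{M},H-t)\le 3\,[\text{loss}_t+\text{IPM}_t]+3\,\epsilon_V(\widehat{M},H-t-1)$; unrolling this multiplies the step-$t$ loss by a factor growing like $3^{t}$, and no telescoping can recover the claimed $2H$ prefactor — you would obtain constants exponential in $H$, not Equation~\ref{eqn:thm1loss}. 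The paper instead uses a horizon-weighted Cauchy--Schwarz step: if $X_n=X_{n-1}+a_n+b_n$ then $X_n^2\le n\bigl(\tfrac{X_{n-1}^2}{n-1}+2a_n^2+2b_n^2\bigr)$ with $n=H-t$, so the propagated term carries coefficient $\tfrac{H-t}{H-t-1}$ and the products $\prod_j\tfrac{H-j}{H-j-1}$ telescope exactly against the $2(H-t)$ factors to yield $2H\sum_t(\cdot)$. Without this weighting the stated bound does not follow.

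Second, your claim that integrating the pointwise bound against $p_{M,\mu}(s_t\mid a_{0:t-1}=\pi)$ makes "the propagated term give back $\epsilon_V$ at step $t+1$" is false in general: $\int T(s'\mid s_t,\pi(s_t))\,p_{M,\mu}(s_t\mid a_{0:t-1}=\pi)\,ds_t$ is not $p_{M,\mu}(s_{t+1}\mid a_{0:t}=\pi)$, because conditioning on $a_t=\pi(s_t)$ reweights the law of $s_t$ by $\mu(\pi(s_t)\mid s_t)$, which depends on $s_t$. Consequently the propagated value-error term must itself be split into factual and counterfactual parts, and its counterfactual part must also be moved via the IPM — which is precisely why the condition in Lemma~\ref{lem:MSEVpirecursive} requires the \emph{whole} function $\bar{\ell}_{r}+\bar{\ell}_{T}+\tfrac{1}{2(H-t-1)}\int h_{T}(z'\mid\cdot)\,\bar{\ell}_{V}(\psi(z'),\widehat{M},H-t-1)\,dz'$, and not merely your $\tilde{\ell}=\bar{\ell}_r+\bar{\ell}_T$, to lie in $B_{\phi,t}\cdot G_t$ after rescaling. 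With these two repairs — including the propagated term in the IPM-bounded function and replacing the factor-3 split by the weighted Cauchy--Schwarz — your outline coincides with the paper's argument.
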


\textbf{(Proof Sketch)}
The key idea is to use Equation \ref{eqn:simulationlemmarecursive} in Lemma \ref{lem:simulation} to view each step as a contextual bandit problem, and bound  $\epsilon_{V}(\widehat{M},H)$ recursively. 
%Since $\epsilon_{V}(\widehat{M},0)$ is the MSE of the evaluation policy estimation we will finish the proof. 
We decompose the value function error into a one step reward loss, a transition loss and a next step value loss, with respect to the on-policy distribution. We can treat this as a contextual bandit problem, and we build on the method in Shalit et al.'s work \cite{shalit2016estimating} about binary action bandits to bound the distribution mismatch by a representation distance penalty term; however, additional care is required due to the sequential setting since the next states are also influenced by the policy. 
By adjusting the distribution for the next step value loss, we reduce it into $\epsilon_V(\widehat{M},H-t-1)$, allowing us recursively repeat this process for H steps. %This result is not straightforward from similar bound on binary bandit \cite{shalit2016estimating}, due to the unique difficulty in sequential setting: not only the rewards but also the next states are influenced by the policy.
\qed

This theorem bounds the MSE for the individual evaluation policy value by a loss on the distribution of the behavior policy, with the cost of an additional representation distribution metric. The first IPM term measures how different the state representations are conditional on factual and counterfactual action history. Intuitively, a balanced representation can generalize better from the observational data distribution to the data distribution under the evaluation policy, but we also need to consider the prediction ability of the representation on the observational data distribution. This bound quantitatively describes those two effects about MSE by the IPM term and the loss terms. The re-weighted expected loss terms over the \emph{observational} data distribution is weighted by the marginal action probabilities ratio instead of the conditional action probability ratio, which is used in importance sampling. The marginal probabilities ratio has lower variance than the importance sampling weights (See Appendix \ref{appendix:theory_variance}). 

%which we prove allows us to, 
%similar to Equation \ref{eqn:simulationlemmarecursive} from %simulation lemma, 
% EB but why can't we use this alone? not regularized enough using existing data
One natural approach might be to use the right hand side of Equation \ref{eqn:thm1loss} as a loss, and try to directly optimize a representation and model that minimizes this upper bound on the mean squared error in the individual value estimates. Unfortunately, doing so can suffer from two important issues. (1) The subset of the data that matches the evaluation policy can be very sparse for large $t$, and though the above bound re-weights data, fitting a model to it can be challenging due to the limited data size. (2) Unfortunately this approach ignores all the other data present that do not match the evaluation policy. If we are also learning a representation of the domain in order to scale up to very large problems, we suspect that we may benefit from framing the problem as related to transfer or multitask learning. 

Motivated by viewing off-policy policy evaluation as a transfer learning task, we can view the source task as the evaluating the behavior policy, for which we have on-policy data, and view the target task as evaluating the evaluation policy, for which we have the high-variance re-weighted data from importance sampling. %For long horizon cases, the weights are close to zero in most cases and extremely large with exponentially small probability. 
This is similar to transfer learning where we only have a few, potentially noisy, data points for the target task. Thus we can take the idea of co-learning a source task and a target task at the same time as a sort of regularization given limited data. More precisely, we now bound the OPPE error by an upper bound of the sum of two terms:
\begin{small}
\begin{equation}
\label{eqn:totalobj}
\underbrace{\mathbb{E}_{s_0 } \left[ V^\pi_{\widehat{M}}(s_0) - V^\pi_{M}(s_0) \right]^2}_{\text{MSE}_\pi} + \underbrace{\mathbb{E}_{s_0 } \left[ V^\mu_{\widehat{M}}(s_0) - V^\mu_{M}(s_0) \right]^2}_{\text{MSE}_\mu},
\end{equation}
\end{small}
where we bound the former part using Theorem \ref{thm:MSEVpi}. Thus our upper bound of this objective can address the issues with separately using $\text{MSE}_{\pi}$ and $\text{MSE}_{\mu}$ as objective: compared with IS estimation of $\text{MSE}_{\pi}$, the "marginal" action probability ratio has lower variance. The representation distribution distance term regularizes the representation layer such that the learned representation would not vary significantly between the state distribution under the evaluation policy and the state distribution under the behavior policy. That reduces the concern that using $\text{MSE}_{\mu}$ as an objective will force our model to evaluate the behavior policy, rather than the evaluation policy, more effectively.

Our work is also inspired by treatment effect estimation in the casual inference literature, where we estimate the difference between the treated and control groups. An analogue in RL would be estimating the difference between the target policy value and the behavior policy value, by minimizing the MSE of policy difference estimation. The objective above is an upper bound of the MSE of policy difference estimator:
$
\frac{1}{2}\mathbb{E}_{s_0 } \left[ \left(V^\pi_{\widehat{M}}(s_0) -  V^\mu_{\widehat{M}}(s_0) \right) - \left( V^\pi_{M}(s_0) - V^\mu_{M}(s_0)  \right) \right]^2 
\le \text{MSE}_{\pi} + \text{MSE}_{\mu}
$
%Applying this result for $t$ from $0$ to $H$ we will have the bound for $R_{\pi}$, stated as a theorem below. 

We now bound Equation \ref{eqn:totalobj} further by finite sample terms. For the finite sample generalization bound, we first introduce a minor variant of the loss functions, with respect to the sample set.
\begin{definition}
\label{def:lossoversample}
Let $r_t$ and $s'_t$ be an observation of reward and next step given state action pair $s_t, a_t$. Define the loss functions as:
%\begin{small}
\begin{align}
\ell_{r}(s_t,a_t,r_t,\widehat{M}) =& \left( \widehat{r}(s_t,a_t) - r_t \right)^2 \label{eqn:def_rewardlossonsample}\\
\ell_{T}(s_t,a_t,s'_t,\widehat{M}) =& \left( \int_{\mathcal{S}} \widehat{T}(s'|s_t,a_t) V_{\widehat{M},H-t-1}^{\pi}(s') ds' - V_{\widehat{M},H-t-1}^{\pi}(s'_t)  \right)^2
\label{eqn:def_transitionlossonsample}
\end{align}
%\end{small}
\end{definition}

\begin{definition}
\label{def:emprisk}
Define the empirical risk over the behavior distribution and weighted distribution as:
\begin{small}
\begin{align}
%R_{\mu}(\widehat{M}) =& \sum_{t=0}^{H-1}\mathbb{E}_{p_{M,\mu}} \left( \ell_r(s_t,a_t,r,\widehat{M}) + \ell_T(s_t,a_t,\widehat{M}) \right) \nonumber \\
\widehat{R}_{\mu}(\widehat{M}) =& \frac{1}{n} \sum_{i=1}^n \sum_{t=0}^{H-1} \ell_r(s_t^{(i)},a_t^{(i)},r^{(i)},\widehat{M}) + \ell_T(s_t^{(i)},a_t^{(i)},s'^{(i)}_t,\widehat{M}) \\
%R_{\pi,u}(\widehat{M}) =& \sum_{t=0}^{H-1}\mathbb{E}_{p_{M,\mu}} \left[ \frac{\mathds{1}(a_{0:t}=\pi)}{u_{0:t}} \left( \ell_r(s_t,a_t,r,\widehat{M}) +  \ell_T(s_t,a_t,\widehat{M})\right) \right] \\
\widehat{R}_{\pi,u}(\widehat{M}) =& \frac{1}{n} \sum_{i=1}^n \sum_{t=0}^{H-1} \frac{\mathds{1}(a^{(i)}_{0:t}=\pi)}{\widehat{u}_{0:t}}\left[ \ell_r(s_t^{(i)},a_t^{(i)},r^{(i)},\widehat{M}) + \ell_T(s_t^{(i)},a_t^{(i)},s'^{(i)}_t,\widehat{M}) \right],
\end{align}
\end{small}
where n is the dataset size, $s^{(i)}_t$ is the state of the $t^{\text{th}}$ step in the $i^{\text{th}}$ trajectory, and $\widehat{u}_{0:t} = \sum_{i=1}^n \frac{\mathds{1}(a^{(i)}_{0:t}=\pi)}{n}$.
\end{definition}

\begin{theorem}
\label{thm:finite}
Suppose $\mathcal{M}_{\Phi}$ is a model class of MDP models based on representation $\phi$. For n trajectories sampled by $\mu$, let $\ell_t(s_t,a_t, \widehat{M}_{\phi}) = \ell_{r}(s_t,a_t,r_t,\widehat{M}) + \ell_{T}(s_t,a_t,s'_t,\widehat{M})$, and $d_t$ be the pseudo-dimension of function class $\{ \ell_t(s_t,a_t, \widehat{M}_{\phi}), \widehat{M}_{\phi} \in \mathcal{M}_{\Phi}\}$. Suppose $\mathcal{H}$ is the reproducing kernel Hilbert space induced by k, and $\mathcal{F}$ is the unit ball in it. Assume there exists a constant $B_{\phi,t}$ such that $\frac{1}{B_{\phi,t}}\ell_t(\psi(z),\pi(\psi(z)), \widehat{M}_{\phi}) \in \mathcal{F}$. With probability $1-3\delta$, for any $\widehat{M} \in \mathcal{M}_{\Phi}$:
\begin{multline}
\label{eqn:thmfinite}
\mathbb{E}_{s_0} \left[ V^\pi_{\widehat{M}}(s_0) - V^\pi_{M}(s_0) | \widehat{M}  \right]^2 \le \text{MSE}_{\pi} + \text{MSE}_{\mu} \le 2H\widehat{R}_{\mu}(\widehat{M}) + 2H\widehat{R}_{\pi,u}(\widehat{M}) \\ 
+ 2H\sum_{t=0}^{H-1} B_{\phi,t} \left( \text{IPM}_{\mathcal{F}}\left(\widehat{p}_{M,\mu}^{\phi,F}(z_t),\widehat{p}_{M,\mu}^{\phi, CF}(z_t) \right) + \min \left\{ \mathcal{D}^{\mathcal{F}}_{\delta} \left( \frac{1}{\sqrt{m_{t,1}}}+\frac{1}{\sqrt{m_{t,2}}} \right), 2\nu \right\} \right) \\
+ 2H\sum_{t=0}^{H-1}\frac{\mathcal{C}^{\mathcal{M}}_{n,\delta, t}}{n^{3/8}} \left( \mathbb{V}[\frac{\mathds{1}(a_{0:t}=\pi)}{\widehat{u}_{0:t}},\ell_t] + \mathbb{V}[1,\ell_t] +\ell_{t,\max}\mathbb{V}[\frac{\mathds{1}(a_{0:t}=\pi)}{u_{0:t}},1] \right)
\end{multline}
$m_{t,1}$ and $m_{t,2}$ are the number of samples used to estimate $\widehat{p}_{M,\mu}^{\phi,F}(z_t)$ and $\widehat{p}_{M,\mu}^{\phi,CF}(z_t)$ respectively. $\mathcal{D}^{\mathcal{F}}_{\delta}$ is a function of the kernel $k$. $\mathcal{C}^{\mathcal{M}}_{n,\delta, t}$ is a function of $d_t$. $\mathbb{V}[w,\ell_t] = \max \{ \sqrt{\mathbb{E}_{p_{M,\mu}}[w^2\ell_t^2] }, \sqrt{\mathbb{E}_{\widehat{p}_{M,\mu}}[w^2\ell_t^2] } \}$. $\ell_{t,\max} = \max_{s_t,a_t} |\ell_t(s_t,a_t)|$.
\end{theorem}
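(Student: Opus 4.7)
The plan is to start from Theorem \ref{thm:MSEVpi} to control $\text{MSE}_\pi$ and apply the Simulation Lemma (Equation \ref{eqn:onpolicyloss}) to $\text{MSE}_\mu$, since the data is drawn on-policy with respect to the behavior policy $\mu$. After this step both MSE terms are upper bounded by a sum over $t$ of expected reward and transition losses $\bar\ell_r+\bar\ell_T$ (plus, for the $\pi$ piece, an IPM penalty), taken either with respect to $p_{M,\mu}$ or with respect to the factual reweighted distribution $p_{M,\mu}(s_t,a_{0:t}{=}\pi)/p_{M,\mu}(a_{0:t}{=}\pi)$. The first task is to pass from the expected losses $\bar\ell$ of Definition \ref{def:lossoverexp} to the sample-based losses $\ell$ of Definition \ref{def:lossoversample} in expectation; since $\bar\ell_r(s,a,\widehat M)\le\mathbb E[\ell_r(s,a,r,\widehat M)\mid s,a]$ and similarly for $\ell_T$ (Jensen's inequality, the gap being an intrinsic noise variance independent of $\widehat M$), this substitution is monotone and free.

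Next I would convert expectations into empirical averages by uniform concentration over the model class $\mathcal M_\Phi$. For $\widehat R_\mu(\widehat M)$, the loss class $\{\ell_t(\cdot,\cdot,\widehat M_\phi):\widehat M_\phi\in\mathcal M_\Phi\}$ has pseudo-dimension $d_t$, so a standard Pollard-style uniform deviation bound (or a Bernstein-type variant) yields, with probability $1-\delta$, a deviation whose leading factor is $\mathcal C^{\mathcal M}_{n,\delta,t}$ and whose variance coefficient is $\mathbb V[1,\ell_t]$. For $\widehat R_{\pi,u}(\widehat M)$ the estimator is self-normalized: the weights $\mathds 1(a_{0:t}{=}\pi)/\widehat u_{0:t}$ depend on the same sample as the summands, and $\widehat u_{0:t}$ concentrates around $u_{0:t}=p_{M,\mu}(a_{0:t}{=}\pi)$. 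I would decompose
\[
\widehat R_{\pi,u}(\widehat M)-\mathbb E\!\left[\tfrac{\mathds 1(a_{0:t}{=}\pi)}{u_{0:t}}\ell_t\right]
\;=\;\underbrace{(\widehat R_{\pi,u}-\widehat R_{\pi,u^*})}_{\text{normalizer error}}+\underbrace{(\widehat R_{\pi,u^*}-\mathbb E)}_{\text{fixed-weight deviation}},
\]
bound the first term by $\ell_{t,\max}|\widehat u_{0:t}-u_{0:t}|/u_{0:t}$ (giving the $\ell_{t,\max}\mathbb V[\mathds 1/u_{0:t},1]$ term), and bound the second by a weighted uniform-convergence inequality in the style of Cortes et al.\ for importance-weighted ERM, which produces the $n^{-3/8}$ rate and the variance coefficient $\mathbb V[\mathds 1/\widehat u_{0:t},\ell_t]$. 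A final step passes from the reweighted expectation back to the population factual loss appearing in Theorem \ref{thm:MSEVpi}.

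The IPM term in Theorem \ref{thm:MSEVpi} is $\text{IPM}_{G_t}(p^{\phi,F}_{M,\mu},p^{\phi,CF}_{M,\mu})$ for some function family $G_t$; by hypothesis $\ell_t\circ(\psi,\pi\circ\psi,\widehat M_\phi)/B_{\phi,t}$ lies in the unit ball $\mathcal F$ of the RKHS $\mathcal H$, which lets us take $G_t=\mathcal F$ and write the penalty as an MMD. I would then invoke the standard concentration bound for empirical MMDs with bounded kernel (Sriperumbudur et al.), which gives, with probability $1-\delta$, a deviation of order $\mathcal D^{\mathcal F}_\delta(1/\sqrt{m_{t,1}}+1/\sqrt{m_{t,2}})$ between the population MMD and $\text{IPM}_{\mathcal F}(\widehat p^{\phi,F}_{M,\mu},\widehat p^{\phi,CF}_{M,\mu})$; the alternative $2\nu$ is simply the trivial upper bound when the kernel is bounded by $\nu$, which we take via the $\min$. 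A union bound over the three high-probability events, with budget $\delta$ each, yields the total failure probability $3\delta$ in the statement.

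The main obstacle is the self-normalized weighted concentration for $\widehat R_{\pi,u}$: the normalizer $\widehat u_{0:t}$ is correlated with the weighted summands, the importance weights can be heavy-tailed when $u_{0:t}$ is small, and the uniform-over-$\widehat M_\phi$ requirement forces us to carry the pseudo-dimension $d_t$ through the weighted deviation inequality. This is precisely why we end up with the non-standard $n^{-3/8}$ rate and why all three variance quantities $\mathbb V[\mathds 1/\widehat u_{0:t},\ell_t]$, $\mathbb V[1,\ell_t]$, and $\ell_{t,\max}\mathbb V[\mathds 1/u_{0:t},1]$ appear together in the final bound, each tracking a distinct source of stochastic error (weighted-loss deviation, unweighted-loss deviation, and normalizer deviation).
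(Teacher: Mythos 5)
Your proposal is correct and follows essentially the same route as the paper: it reduces to the expected-loss bound of Theorem \ref{thm:MSEVpi} plus a Simulation-Lemma bound on $\text{MSE}_\mu$ (the paper's Corollary \ref{cor:generalization}), passes from $\bar\ell$ to $\ell$ via the bias--variance decomposition (Lemma \ref{lem:variance}), handles the self-normalized weighted risk by splitting off the normalizer error and applying the Cortes et al.\ weighted-concentration bound (Lemma \ref{lem:ISbound}, which accounts for the $n^{-3/8}$ rate and the three variance terms), controls the IPM via the Sriperumbudur empirical-MMD concentration with the trivial $2\nu$ cap, and takes a union bound over the three events to get $1-3\delta$. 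The only cosmetic difference is the direction in which you write the weighted-risk decomposition, which is equivalent to the paper's.
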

%The proof follows from lemmas in previous work about the generalization gap of weighted losses \cite{cortes2010learning} and IPM terms \cite{sriperumbudur2009integral}. We prove the finite sample MSE bound by putting these generalization gaps together.

The first term is the empirical loss over the observational data distribution. The second term is a re-weighted empirical loss, which is an empirical version of the first term in Theorem \ref{thm:MSEVpi}. As said previously, this re-weighting has less variance than importance sampling in practice, especially when the sample size is limited. Theorem \ref{thm:variance} in Appendix \ref{appendix:theory_variance} shows that the variance of this ratio is also no greater than the variance of IS weights. Our bound is  based on the empirical estimate of the marginal probability $u_{0:t}$ and we are not required to know the behavior policy. Our method's independence of the behavior policy is a significant advantage over IS methods which are very susceptible to errors its estimation, as we discuss in appendix \ref{appendix:empiricalbehavior}. In practice, this marginal probability $u_{0:t}$ is easier to estimate than $\mu$ when $\mu$ is unknown. The third term is an empirical estimate of IPM, which we described in Theorem \ref{thm:MSEVpi}. We use norm-1 RKHS functions and MMD distance in this theorem and our algorithm. There are similar but worse results for Wasserstein distance and total variation distance \cite{sriperumbudur2009integral}. $\mathcal{D}^{\mathcal{F}}_{\delta}$ measures how complex $\mathcal{F}$ is. It is obtained from concentration measures about empirical IPM estimators \cite{sriperumbudur2009integral}. The constant $\mathcal{C}^{\mathcal{M}}_{n,\delta, t}$ measures how complex the model class is and it is derived from traditional learning theory results \cite{cortes2010learning}.

We compare our bound with the upper bound of  model error for OPPE in \cite{hanna2017bootstrapping}. In the corrected version of corollary 2 in \cite{hanna2017bootstrapping}, the upper bound of absolute error has a linear dependency on $\sqrt{\bar{\rho}_{1:H}}$ where $\bar{\rho}_{1:H}$ is an upper bound of the importance ratio, which is usually a dominant term in long horizon cases. As we clarified in last paragraph, the re-weighting weights in our bound, which are marginal action probability ratios, enjoy a lower variance than IS weights (See Appendix \ref{appendix:theory_variance}).

\section{Algorithm for Representation Balancing MDPs}
\label{sec:algorithm}
%Our algorithmic contribution is an end-to-end MDP model learning algorithm, and the condition of consistency of this model-based off-policy policy value estimator. 
Based on our generalization bound above, we propose an algorithm to learn an MDP model for OPPE, minimizing the following objective function:
\begin{equation}
\mathcal{L}(\widehat{M}_{\phi}; \alpha_t) =  \widehat{R}_{\mu}(\widehat{M}_{\phi}) + \widehat{R}_{\pi,u}(\widehat{M}_{\phi}) + \sum\nolimits_{t=0}^{H-1}\alpha_t \text{IPM}_{\mathcal{F}}\left(\widehat{p}_{M,\mu}^{\phi,F}(z_t),\widehat{p}_{M,\mu}^{\phi,CF}(z_t) \right) 
 + \frac{\mathfrak{R}(\widehat{M}_{\phi})}{n^{3/8}}
\end{equation} 
This objective is based on Equation \ref{eqn:thmfinite} in Theorem \ref{thm:finite}. We minimize the terms in this upper bound that are related to the model $\widehat{M}_{\phi}$. Note that since $B_{\phi,t}$ depends on the loss function, we cannot know $B_{\phi,t}$ in practice. We therefore use a tunable factor $\alpha$ in our algorithm. 
%$\mathcal{D}^{\mathcal{F}}_{\delta}$ in Equation \ref{eqn:thmfinite} only depends on the function class in IPM term; therefore it does not appear in this objective. 
$\mathfrak{R}(\widehat{M}_{\phi})$ here is some kind of bounded regularization term of model that one could choose, corresponding to the model class complexity term in Equation \ref{eqn:thmfinite}. This objective function matches our intuition about using lower-variance weights for the re-weighting component and using IPM of the representation to avoid fitting the behavior data distribution.

In this work, $\phi(s)$ and $\widehat{M}_{\phi}$ are parameterized by neural networks, due to their strong ability to learn representations. We use an estimator of IPM term from Sriperumbudur et al. \cite{sriperumbudur2012empirical}. All terms in the objective function are differentiable, allowing us to train them jointly by minimizing the objective by a gradient based optimization algorithm.

After we learn an MDP by minimizing the objective above, we use Monte-Carlo estimates or value iteration to get the value for any initial state $s_0$ as an estimator of policy value for that state. We show that if there exists an MDP and representation model in our model class that could achieve:
\[
\min_{\widehat{M}_{\phi}} \left(R_{\mu}(\widehat{M}_{\phi}) + R_{\pi,u}(\widehat{M}_{\phi}) + \sum\nolimits_{t=0}^{H-1}\alpha_t \text{IPM}_{\mathcal{F}}\left(p_{M,\mu}^{\phi,F}(z_t),p_{M,\mu}^{\phi,CF}(z_t) \right) \right) = 0,
\]
then $\lim_{n\to \infty}\mathbb{E}_{s_0} [ V^\pi_{\widehat{M}^*_{\phi^*}}(s_0) - V^\pi_{M}(s_0) ]^2 \to 0$ and estimator $V^\pi_{\widehat{M}^*_{\phi^*}}(s_0)$ is a consistent estimator for any $s_0$. See Corollary \ref{cor:consistency} in Appendix for detail.

We can use our model in any OPPE estimators that leverage model-based estimators, such as doubly robust \cite{jiang2015doubly} and MAGIC \cite{thomas2016data}, though our generalization MSE bound is just for the model value.

\section{Experiments}
\subsection{Synthetic control domain: Cart Pole and Montain Car}
% Description of the setting 
We test our algorithm on two continuous-state benchmark domains. We use a greedy policy from a learned Q function as the evaluation policy, and an $\epsilon$-greedy policy with $\epsilon=0.2$ as the behavior policy. We collect 1024 trajectories for OPPE. In Cart Pole domain the average length of trajectories is around 190 (long horizon variant), or around 23 (short horizon variant). In Mountain Car the average length of trajectories is around 150. The long horizon setting (H>100) is challenging for IS-based OPPE estimators due to the deterministic evaluation policy and long horizon, which will give the IS weights high variance. Deterministic dynamics and long horizons are common in real-world domains, and most off policy policy evaluation algorithms struggle in such scenarios. %they are key limitations of current OPPE algorithms. 

We compare our method \ourmethod, with two baseline approximate models (AM and AM($\pi$)), doubly robust (DR), more robust doubly robust (MRDR), and importance sampling (IS). The baseline approximate model (AM) is an MDP model-based estimator trained by minimizing the empirical risk, using the same model class as \ourmethod. AM($\pi$) is an MDP model trained with the same objective as our method but without the $\text{MSE}_{\mu}$ term. DR is a doubly robust estimator using our model and DR(AM) is a doubly robust estimator using the baseline model. MRDR \cite{farajtabar2018more} is a recent method that trains a Q function as the model-based part in DR to minimize the resulting variance. We include their Q function estimator (MRDR Q), the doubly robust estimator that combines this Q function with IS (MRDR). 

The reported results are square root of the average MSE over $100$ runs. $\alpha$ is set to $0.01$ for \ourmethod. We report mean and individual MSEs, corresponding to MSEs of average policy value and individual policy value, $[\mathbb{E}_{s_0}\widehat{V}(s_0) - \mathbb{E}_{s_0}V(s_0)]^2$ and $\mathbb{E}_{s_0}[\widehat{V}(s_0) - V(s_0)]^2$ respectively. IS and DR methods re-weight samples, so their estimates for single initial states are not applicable, especially in continuous state space. A comparison across more methods is included in the appendix.

% Table of results  
\begin{table}[ht!]
\vspace{-0.5em}
\caption{Root MSE for Cart Pole}
\small
\centering
\begin{tabular}{ccccccccc}
\toprule
\multicolumn{1}{c}{Long Horizon} & 
\multicolumn{1}{c}{\ourmethod} & 
\multicolumn{1}{c}{DR} & 
\multicolumn{1}{c}{AM} & 
\multicolumn{1}{c}{DR(AM)} &
\multicolumn{1}{c}{AM($\pi$)} & 
\multicolumn{1}{c}{MRDR Q} &
\multicolumn{1}{c}{MRDR} &
\multicolumn{1}{c}{IS} 
\\
\midrule
Mean &\textbf{0.4121} &1.359 &0.7535 &1.786 &41.80 &151.1 &202 &194.5 \\
Individual  &\textbf{1.033} &- &1.313 &- &47.63 &151.9 &- &- \\
\toprule
\multicolumn{1}{c}{Short Horizon} & 
\multicolumn{1}{c}{\ourmethod} & 
\multicolumn{1}{c}{DR} & 
\multicolumn{1}{c}{AM} & 
\multicolumn{1}{c}{DR(AM)} &
\multicolumn{1}{c}{AM($\pi$)} & 
\multicolumn{1}{c}{MRDR Q} &
\multicolumn{1}{c}{MRDR} &
\multicolumn{1}{c}{IS} 
\\
\midrule
Mean &0.07836 &\textbf{0.02081} &0.1254 &0.0235 &0.1233 &3.013 &0.258 &2.86 \\
Individual  &\textbf{0.4811} &- &0.5506 &- &0.5974 &3.823 &- &- \\
\bottomrule
\end{tabular}
\vspace{-0.5em}
\end{table}

\begin{table}[ht!]
\vspace{-0.5em}
\caption{Root MSE for Mountain Car}
\small
\centering
\begin{tabular}{ccccccccc}
\toprule
\multicolumn{1}{c}{} & 
\multicolumn{1}{c}{\ourmethod} & 
\multicolumn{1}{c}{DR} & 
\multicolumn{1}{c}{AM} & 
\multicolumn{1}{c}{DR(AM)} &
\multicolumn{1}{c}{AM($\pi$)} & 
\multicolumn{1}{c}{MRDR Q} &
\multicolumn{1}{c}{MRDR} &
\multicolumn{1}{c}{IS} 
\\
\midrule
Mean &\textbf{12.31} &135.8 &17.15 &141.6 &72.61 &135.4 &172.7 &149.7 \\
Individual  &\textbf{31.38} &- &36.36 &- &79.46 &138.1 &- &- \\
\bottomrule
\end{tabular}
\vspace{-0.5em}
\end{table}

% Description of results
\textbf{Representation Balancing MDPs outperform baselines for long time horizons.} We observe that MRDR variants and IS methods have high MSE in the long horizon setting. The reason is that the IS weights for 200-step trajectories are extremely high-variance, and MRDR whose objective depends on the square of IS weights, also fails. Compared with the baseline model, we can see that our method is better than AM for both the pure model case and when used in doubly robust. We also observe that the IS part in doubly robust actually hurts the estimates, for both \ourmethod~and AM. %We also notice that both DR(\ourmethod) and DR(AM) do worse for individual MSE, because IS re-weighting will hurt the individual estimates and doubly robust is influenced by this.

\textbf{Representation Balancing MDPs outperform baselines in deterministic settings.} To observe the benefit of our method beyond long horizon cases, we also include results on Cart Pole with a shorter horizon, by using weaker evaluation and behavior policies. The average length of trajectories is about 23 in this setting. Here, we observe that \ourmethod~is still better than other model-based estimators, and doubly robust that uses \ourmethod~is still better than other doubly robust methods. Though MRDR produces substantially lower MSE than IS, which matches the report in Farajtabar et al.~\cite{farajtabar2018more}, it still has higher MSE than \ourmethod~and AM, due to the high variance of its learning objective when the evaluation policy is deterministic.

\textbf{Representation Balancing MDPs produce accurate estimates even when the behavior policy is unknown.} For both horizon cases, we observe that \ourmethod~learned with no knowledge of the behavior policy is better than methods such as MRDR and IS that use the true behavior policy. 

\subsection{HIV simulator}
\label{sec:hiv_sepsis}
We demonstrate our method on an HIV treatment simulation domain. The simulator is described in Ernst et al. \cite{ernst2006clinical}, and consists of 6 parameters describing the state of the patient and 4 possible actions. The HIV simulator has richer dynamics than the two simple control domains above. We learn an evaluation policy by fitted Q iteration and use the $\epsilon$-greedy policy of the optimal Q function as the behavior policy.

We collect 50 trajectories from the behavior policy and learn our model with the baseline approximate model (AM). We compare the root average MSE of our model with the baseline approximate MDP model, importance sampling (IS), per-step importance sampling (PSIS) and weighted per-step importance sampling (WPSIS). The root average MSEs reported are averaged over 80 runs. We observe that \ourmethod~has the lowest root MSE on estimating the value of the evaluation policy.

\begin{table}[ht!]
\vspace{-0.5em}
\caption{Relative Root MSE for HIV}
\small
\centering
\begin{tabular}{cccccc}
\toprule
\multicolumn{1}{c}{} & 
\multicolumn{1}{c}{\ourmethod} & 
\multicolumn{1}{c}{AM} & 
\multicolumn{1}{c}{IS} &
\multicolumn{1}{c}{PSIS} &
\multicolumn{1}{c}{WPSIS} 
\\
\midrule
Mean &\textbf{0.062} &0.067 &0.95 &0.273 &0.146  \\
\bottomrule
\end{tabular}
\vspace{-0.5em}
\end{table}

\section{Discussion and Conclusion}
%Discussion of why alpha matters in some cases and not others?  (If that's still the case.)
One interesting issue for our method is the effect of the hyper-parameter $\alpha$ on the quality of estimator. In the appendix, we include the results of \ourmethod~across different values of $\alpha$. We find that our method outperforms prior work for a large range of alphas, for both domains. %, though we find in Cart Pole that the best results use an $\alpha > 0$.  
%Hoewver in cartpole best resu best results are obtained with alpha =0.
%We observe that \ourmethod~can outperform previous methods without a strict requirement on $\alpha$, and a middle-value $\alpha$ achieved the best as expected. On the other hand, for the sepsis data, the smallest MSE can be obtained using $\alpha=0$. 
In both domains we observe that the effect of IPM adjustment (non-zero $\alpha$) is less than the effect of "marginal" IS re-weighting, which matches the results in Shalit et al.'s work in the binary action bandit case \cite{shalit2016estimating}. 
%In general an adaptive method to choose $\alpha$ is still an open problem.

To conclude, in this work we give an MDP model learning method for the individual OPPE problem in RL, based on a new finite sample generalization bound of MSE for the model value estimator. 
%Our method focuses on the deterministic evaluation policy setting, which greatly challenges previous IS-based method. In many real-world applications, most policies that we are interested to evaluate before deployment are deterministic. 
We show our method results in substantially smaller MSE estimates compared to state-of-the-art baselines in common benchmark control tasks and on a more challenging HIV simulator. 

%Discussion of limitations (e.g. we were hitting some modeling limitations in sepsis, how there still need to be enough factuals left in the data to build a model -> in this sense, the time horizon is still a limitation).

% \emma{Maybe move this to the end}
% %In particular, our approach fits models that allow us to estimating the difference between the performance of the (multi time step) behavior policy and evaluation policy for individual states, which is a close analogue of CATE estimation in the one step decision making setting. 
% This may be of additional interest since a common reason for performing 
% OPPE is as a tool for making future policy selection decisions, 
% and in many situations we are interested in whether we should 
% switch from a status quo behavior policy to a new alternate 
% decision strategy, either for individuals or for the whole 
% population.\emma{This should be a bit nuanced because we would 
% be unlikely to actually deploy a stochastic policy for 
% an individual, but our behavior policy has to be stochastic 
% else we can't evaluate variants. If there are only two decisions 
% at each state, then I think we could with a minor variant 
% compare the evaluation policy with the non evaluation policy-- 
% essentially if our behavior policy is a mix of two policies).}
% \emma{end inserted text}

\subsubsection*{Acknowledgments}
This work was supported in part by the Harvard Data Science Initiative, Siemens, and a NSF CAREER grant.

% \section*{References}
% \medskip
\begin{small}
\bibliographystyle{abbrv}
\bibliography{ref.bib}
\end{small}

\newpage
\appendix
\setcounter{theorem}{0}
\setcounter{corollary}{0}

\section{IS with approximate behavior policy}
\label{appendix:empiricalbehavior}
In this section, we include some theoretical and empirical results about the effect of using an estimated behavior policy in importance sampling, when the true behavior policy is not accessible.
% \begin{lemma}
% For an MDP such that with reward in $[0,R_{\max}]$, let $\widehat{\mu}(a|s)$ be an unbiased estimator of behavior policy $\mu(a|s)$. If $\widehat{\mu}(a|s)$ is learned independently with the data used for IS estimation, then the IS estimator with $\widehat{\mu}(a|s)$: $V_{IS}(\hat{\mu})$ satisfies:
% \[ \mathbb{E}[V_{IS}(\hat{\mu}) ] \ge  \mathbb{E}[V_{IS}(\mu)] = v, \]
% where $V_{IS}(\mu)$ is the IS estimator with true behavior policy and $v$ is the true value.
% \end{lemma}

\begin{proposition}
Assume the reward is in $[0,R_{\max}]$. For any estimator $\widehat{\mu}(a|s)$ of the true behavior policy $\mu(a|s)$, let $V_{IS}(\hat{\mu})$ be the IS estimator using this estimated $\hat{\mu}(a|s)$ and $V_{IS}(\mu)$ be the IS estimator with true behavior policy. Both of the IS estimators are computed using $n$ trajectories that are independent from the data used to estimate $\widehat{\mu}(a|s)$. If the relative error of $\widehat{\mu}(a|s)$ is bounded by $\delta$: $\| \frac{\widehat{\mu}(a|s)-\mu(a|s)}{\mu(a|s)} \|_{\infty} \le \delta $, then for any given dataset:
\[ | V_{IS}(\hat{\mu}) - V_{IS}(\mu)  | \le\max \left\{ \left( \frac{1}{1-\delta} \right)^{H}-1, 1-\left( \frac{1}{1+\delta} \right)^{H} \right\} V_{IS}(\mu)  \]
The bias of $V_{IS}(\hat{\mu})$ is bounded by:
\[ | \mathbb{E}V_{IS}(\hat{\mu}) - v  | \le \max \left\{ \left( \frac{1}{1-\delta} \right)^{H}-1, 1-\left( \frac{1}{1+\delta} \right)^{H} \right\}v,  \]
where $v$ is the true evaluation policy value.
\end{proposition}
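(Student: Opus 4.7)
The plan is to leverage the trajectory-wise relative-error bound on the importance weights and then push it through both the sample average (for the first inequality) and the expectation (for the bias inequality).

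First I would write the IS estimator in its standard form $V_{IS}(\mu) = \frac{1}{n}\sum_{i=1}^n \left(\prod_{t=0}^{H-1}\frac{\pi(a^{(i)}_t\mid s^{(i)}_t)}{\mu(a^{(i)}_t\mid s^{(i)}_t)}\right) R^{(i)}$, with $R^{(i)} = \sum_t r^{(i)}_t \ge 0$ since rewards lie in $[0,R_{\max}]$. The relative-error assumption $\|(\widehat\mu-\mu)/\mu\|_\infty \le \delta$ rearranges to $(1-\delta)\mu(a\mid s) \le \widehat\mu(a\mid s) \le (1+\delta)\mu(a\mid s)$ pointwise, which inverts to $\tfrac{1}{1+\delta}\tfrac{1}{\mu(a\mid s)} \le \tfrac{1}{\widehat\mu(a\mid s)} \le \tfrac{1}{1-\delta}\tfrac{1}{\mu(a\mid s)}$. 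Multiplying across the $H$ factors of the weight product yields, for every trajectory,
\begin{equation*}
\Bigl(\tfrac{1}{1+\delta}\Bigr)^{H}\prod_{t=0}^{H-1}\tfrac{\pi(a_t\mid s_t)}{\mu(a_t\mid s_t)}
\;\le\;
\prod_{t=0}^{H-1}\tfrac{\pi(a_t\mid s_t)}{\widehat\mu(a_t\mid s_t)}
\;\le\;
\Bigl(\tfrac{1}{1-\delta}\Bigr)^{H}\prod_{t=0}^{H-1}\tfrac{\pi(a_t\mid s_t)}{\mu(a_t\mid s_t)}.
\end{equation*}

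Next, because $R^{(i)} \ge 0$ and the IS weights are nonnegative, I can multiply both sides of the per-trajectory inequality by $R^{(i)}/n$ and sum over $i$ to get $(1+\delta)^{-H} V_{IS}(\mu) \le V_{IS}(\widehat\mu) \le (1-\delta)^{-H} V_{IS}(\mu)$. Subtracting $V_{IS}(\mu)$ and taking absolute values gives the first claim with the $\max$ of the two one-sided gaps $(1-\delta)^{-H}-1$ and $1-(1+\delta)^{-H}$, which are exactly the two factors in the statement. For the bias inequality, I use that $\widehat\mu$ is fit on an independent batch, so the pointwise bound above holds deterministically and can be taken inside the expectation. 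Combining with the unbiasedness of vanilla IS, $\mathbb{E}[V_{IS}(\mu)] = v$, I obtain $(1+\delta)^{-H} v \le \mathbb{E}[V_{IS}(\widehat\mu)] \le (1-\delta)^{-H} v$, and the bias bound follows by the same $\max$ argument.

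I do not foresee a hard step in this proof; the main thing to be careful about is the role of independence of $\widehat\mu$ from the rollouts used to evaluate $V_{IS}(\widehat\mu)$, which is what allows the deterministic per-trajectory bound to survive the expectation in the bias statement. A minor subtlety is that the nonnegativity of rewards is essential: without it, the multiplicative sandwich on the weights would not translate into a sandwich on the estimator, because multiplying an inequality by a possibly negative $R^{(i)}$ flips its direction. Since the hypothesis assumes $r \in [0,R_{\max}]$, this is immediate. Everything else is bookkeeping.
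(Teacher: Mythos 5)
Your proposal is correct and follows essentially the same route as the paper's proof: both hinge on the pointwise sandwich $(1+\delta)^{-H}\le\prod_t \mu/\widehat\mu\le(1-\delta)^{-H}$, the nonnegativity of rewards and weights, and the unbiasedness of IS with the true $\mu$ for the bias part. The only cosmetic difference is that you propagate a two-sided sandwich through the estimator directly, whereas the paper factors out the multiplicative deviation $\bigl|\prod_t \mu/\widehat\mu-1\bigr|$ and applies the triangle inequality per trajectory; the conclusions and the key algebraic fact are identical.
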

\begin{proof}
\begin{eqnarray}
\left| V_{IS}(\hat{\mu}) - V_{IS}(\mu) \right|  &=& \left|\frac{1}{n}\sum_{i=1}^n \prod_{t=0}^{H-1}\frac{\pi(a^{(i)}_t|s^{(i)}_t)}{\hat{\mu}(a^{(i)}_t|s^{(i)}_t)} R^{(i)}_{0:H-1} - \frac{1}{n}\sum_{i=1}^n \prod_{t=0}^{H-1}\frac{\pi(a^{(i)}_t|s^{(i)}_t)}{\mu(a^{(i)}_t|s^{(i)}_t)} R^{(i)}_{0:H-1}\right| \nonumber \\
&\le& \frac{1}{n}\sum_{i=1}^n \left| \prod_{t=0}^{H-1}\frac{\pi(a^{(i)}_t|s^{(i)}_t)}{\hat{\mu}(a^{(i)}_t|s^{(i)}_t)} R^{(i)}_{0:H-1}- \prod_{t=0}^{H-1}\frac{\pi(a^{(i)}_t|s^{(i)}_t)}{\mu(a^{(i)}_t|s^{(i)}_t)} R^{(i)}_{0:H-1} \right| \\
&\le& \frac{1}{n}\sum_{i=1}^n \left| \prod_{t=0}^{H-1}\frac{\mu(a^{(i)}_t|s^{(i)}_t)}{\hat{\mu}(a^{(i)}_t|s^{(i)}_t)}- 1 \right| \left| \prod_{t=0}^{H-1}\frac{\pi(a^{(i)}_t|s^{(i)}_t)}{\mu(a^{(i)}_t|s^{(i)}_t)} R^{(i)}_{0:H-1} \right|
\label{eqn:empiricalpibdiff}
\end{eqnarray}
According to the condition, for any $a^{(i)}$ and $s^{(i)}$, $1-\delta \le \frac{\widehat{\mu}(a^{(i)}_t|s^{(i)}_t)}{\mu(a^{(i)}_t|s^{(i)}_t)} \le 1+\delta$. Then 
\[ \frac{1}{1+\delta} \le \frac{\mu(a^{(i)}_t|s^{(i)}_t)}{\hat{\mu}(a^{(i)}_t|s^{(i)}_t)} \le \frac{1}{1-\delta}, \]
and:
\[ \left( \frac{1}{1+\delta} \right)^{H} \le \prod_{t=0}^{H-1} \frac{\mu(a^{(i)}_t|s^{(i)}_t)}{\hat{\mu}(a^{(i)}_t|s^{(i)}_t)} \le \left( \frac{1}{1-\delta} \right)^{H}, \]
So:
\[ \left| \prod_{t=0}^{H-1}\frac{\mu(a^{(i)}_t|s^{(i)}_t)}{\hat{\mu}(a^{(i)}_t|s^{(i)}_t)}- 1 \right| \le \max \left\{ \left( \frac{1}{1-\delta} \right)^{H}-1, 1-\left( \frac{1}{1+\delta} \right)^{H} \right\} \]
Plug this into Equation \ref{eqn:empiricalpibdiff}:
\begin{eqnarray}
\left| V_{IS}(\hat{\mu}) - V_{IS}(\mu) \right|  &\le& \max \left\{ \left( \frac{1}{1-\delta} \right)^{H}-1, 1-\left( \frac{1}{1+\delta} \right)^{H} \right\}\frac{1}{n}\sum_{i=1}^n \prod_{t=0}^{H-1}\frac{\pi(a^{(i)}_t|s^{(i)}_t)}{\mu(a^{(i)}_t|s^{(i)}_t)} R^{(i)}_{0:H-1} \nonumber \\
&=& \max \left\{ \left( \frac{1}{1-\delta} \right)^{H}-1, 1-\left( \frac{1}{1+\delta} \right)^{H} \right\}  V_{IS}(\mu)
\end{eqnarray}
Similarly, for the bias:
\begin{eqnarray}
\left| \mathbb{E} V_{IS}(\hat{\mu}) - v \right| & = & \left| \mathbb{E} V_{IS}(\hat{\mu}) - \mathbb{E}V_{IS}(\mu) \right| \\
&=& \left| \mathbb{E}\prod_{t=0}^{H-1}\frac{\pi(a^{(0)}_t|s^{(0)}_t)}{\hat{\mu}(a^{(0)}_t|s^{(0)}_t)} R^{(0)}_{0:H-1} - \mathbb{E} \prod_{t=0}^{H-1}\frac{\pi(a^{(0)}_t|s^{(0)}_t)}{\mu(a^{(0)}_t|s^{(0)}_t)} R^{(0)}_{0:H-1}\right| \\
&\le& \left| \prod_{t=0}^{H-1}\frac{\mu(a^{(0)}_t|s^{(0)}_t)}{\hat{\mu}(a^{(0)}_t|s^{(0)}_t)}- 1 \right| \left| \mathbb{E} \left[ \prod_{t=0}^{H-1}\frac{\pi(a^{(0)}_t|s^{(0)}_t)}{\mu(a^{(0)}_t|s^{(0)}_t)} R^{(0)}_{0:H-1} \right] \right| \\
&\le& \max \left\{ \left( \frac{1}{1-\delta} \right)^{H}-1, 1-\left( \frac{1}{1+\delta} \right)^{H} \right\} v
\end{eqnarray}
\end{proof}
We bound the error of IS estimates by the relative error of behavior policy estimates. Proposition 3 from Farajtabar et al.~\cite{farajtabar2018more} gave an expression for the bias when using an empirical estimate of behavior policy in IS. The result in Farajtabar et al.~\cite{farajtabar2018more} is similar to this proposition, but the authors did not explicitly bound the bias by the error of behavior policy. Note that this bound increases exponentially with the horizon $H$, which shows the accumulated error effect of behavior policy error. 

By using the tree MDP example in Jiang and Li \cite{jiang2015doubly} we can show that the order of magnitude $O(\exp(H))$ is tight: there exists an MDP and a policy estimator $\hat{\mu}$ with $\| (\hat{\mu} - \mu)/\mu \|_{\infty} = \delta$ such that the bias of $V_{IS}(\hat{\mu})$ is $O(\exp{(H)})$. Define a binary discrete tree MDP \cite{jiang2015doubly} as following: At each node in a binary tree, we can take two actions $a=0, a=1$, leading to the two next nodes with observations $o=0, o=1$. The state of a node is defined by the whole path to the root: $o_0a_0o_1a_i\dots o_h$. That means each node in the tree will have a unique state. The depth of the tree, as well as the horizon of the trajectories, is $H$. Only the leftmost leaf node (by always taking $a=0$) has non-zero reward $r=1$. Denote this state as the target state. The evaluation policy always takes action $a=0$ and the behavior policy $\mu$ is a uniform random policy. Let the estimated policy $\hat{\mu}$ differ from $\mu$ with $-\delta/2$ in all the state-action pairs on the path to target state. That means the action probability in $\hat{\mu}$ is $1/2 -\delta/2$ for all state-action pairs on the path to target state. The IS estimator with $\mu$ has expectation $1$ since it is unbiased. It is easy to verify that the IS estimator using $\hat{\mu}$ has expectation $\left(1-\delta \right)^{-H}$. Thus the bias is $O(\exp(H))$.

This result represents the worst-case upper bound on the bias of IS when using an estimated behaviour policy; the fact it is exponential in the trajectory length illustrates the problem when using IS without knowing the true behaviour policy. 
To support this result with an empirical example illustrating the challenge of using IS with an unknown behaviour policy for a real data distribution, consider Figure \ref{fig:bp_ope_err} which represents the error in OPPE (found using Per-Decision WIS) as we vary the accuracy of the behaviour policy estimation. Two different behaviour policies are considered.
The domain used in this example is a continuous 2D map ($s \in \mathbb{R}^2$) with a discrete action space, $\mathcal{A} = \{1, 2, 3, 4, 5\}$, with actions representing a movement of one unit in one of the four coordinate directions or staying in the current position. Gaussian noise of zero mean and specifiable variance is added onto state of the agent after each action, to provide environmental stochasticity. An agent starts in the top left corner of the domain and receives a positive reward within a given radius of the top right corner, and a negative reward within a given radius of the bottom left corner. 
The horizon is set to be 15. A k-Nearest Neighbours (kNN) model is used to estimate the behaviour policy distribution, given a set of training trajectories. The accuracy of the model is varied by changing the number of trajectories available and the number of neighbours used for behaviour policy estimation. 

This plot shows how IS suffers from very poor estimates with even slight errors in the estimated behaviour policy -- average absolute errors of as small as 0.06 can incur errors of up over 50\% in OPPE. This provides additional motivation for our approach -- we do not require the behaviour policy to be known for OPPE, avoiding the significant errors incurred by using incorrectly estimated behaviour policies. 

\begin{figure}[ht]
 \centering 
 \centerline{\includegraphics[width=5in]{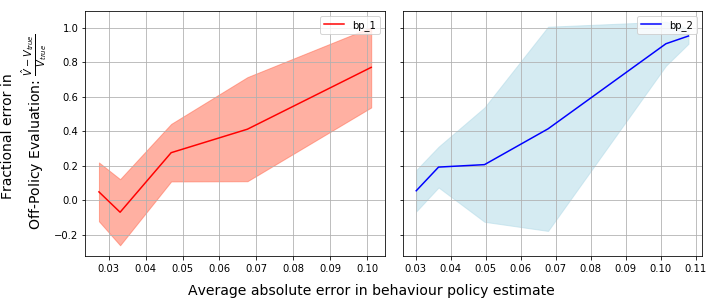} }
 \caption{Plots showing the mean and standard deviation of the fractional error in OPE, $\frac{\hat{V} - V}{V} $, as a function of the average absolute error in behaviour policy estimation, ${\frac{1}{n}\sum_{i=1}^{n}|\mu(a^{(i)}|s^{(i)}) - \hat{\mu}(a^{(i)}|s^{(i)})|}$, for two different behaviour policies. The quality of OPE with IS has a very significant dependence on the accuracy of the behaviour policy estimation.}
 \label{fig:bp_ope_err} 
\end{figure}

\section{Clarification of CATE/HTE and ITE}
\label{appendix:cate}
In the causal inference literature\cite{kunzel2017meta}, for an single unit $i$ with covariate (state) $x_i$, we observe $Y_i(1)$ if we give the unit treatment and $Y_i(0)$ if not. The Individual Treatment Effect (ITE) is defined as:
\begin{equation}
D_i = Y_i(1) - Y_i(0),
\end{equation}
for this particular set of observations $Y_i, x_i$. However $Y_i(1)$ and $Y_i(0)$ cannot be observed at the same time, which makes ITE unidentifiable without strong additional assumptions. Thus the conditional average treatment effect (CATE), also known as heterogeneous treatment effect (HTE) is defined as:
\begin{equation}
\tau(x) = \mathbb{E}[Y_i(1) - Y_i(0)|x]
\end{equation}
which is a function of $x$ and is identifiable. Shalit et al.~\cite{shalit2016estimating} defined ITE as $\tau(x)$, which is actually named as CATE or HTE in most causal reasoning literature. So we use the name CATE/HTE to refer to this quantity and it is inconsistent with Shalit et al.~'s work. We clarify it here so that it does confuse the reader.

\section{Proofs of Section \ref{sec:theory}}
\subsection{Proofs of Theorem \ref{thm:MSEVpi} and Corollary \ref{cor:generalization}}
Before we prove Lemma \ref{lem:MSEVpirecursive} and Theorem \ref{thm:MSEVpi}, we need some useful lemmas and assumptions. We restate a well-known variant of Simulation Lemma \cite{kearns2002near} in finite horizon case here:
\begin{lemma} (Simulation Lemma with finite horizon case)
\label{lem:simulation}
Define that $V_{\widehat{M},0}^\pi(s) = V_{M,0}^\pi(s) = 0$. For any approximate MDP model $\widehat{M}$, any policy $\pi$, and $t = 0, \dots, H-1$:
\begin{multline}
\label{eqn:simulationlemmarecursive}
V_{\widehat{M},H-t}^\pi(s_t) - V_{M,H-t}^\pi(s_t) = 
\mathbb{E}_{a_t \sim \pi}\left[ \widehat{r}(s_t,a_t) - \bar{r}(s_t,a_t) + \int_{\mathcal{S}} \left( \widehat{T}(s'|s_t,a_t) -T(s'|s_t,a_t) \right) \right. \\
\left. V_{\widehat{M},H-t-1}^{\pi}(s')  ds'  + \int_{\mathcal{S}}T(s'|s_t,a_t) \left(V_{\widehat{M},H-t-1}^\pi(s') - V_{M,H-t-1}^\pi(s') \right) ds' \right]
\end{multline}
Then:
\begin{multline}
V_{\widehat{M},H}^\pi(s) - V_{M,H}^\pi(s) = \mathbb{E}_{\pi,M} \sum_{t=0}^{H-1}\left[ \widehat{r}(s_t,a_t) - \bar{r}(s_t,a_t)\right.  \\
 \left. + \int_{\mathcal{S}} \left( \widehat{T}(s'|s_t,a_t) -T(s'|s_t,a_t) \right)V_{\widehat{M},H-t-1}^{\pi}(s') ds' |s_0 = s\right]
\end{multline}
\end{lemma}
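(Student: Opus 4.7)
The plan is to prove both displayed identities by direct Bellman unrolling, treating the first (one-step) identity as the building block and the second (fully telescoped) identity as its iterated consequence.

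For the one-step decomposition, I would start from the finite-horizon Bellman equations in each MDP: $V_{\widehat M,H-t}^\pi(s_t)=\mathbb{E}_{a_t\sim\pi}[\widehat r(s_t,a_t)+\int_{\mathcal S}\widehat T(s'|s_t,a_t)V_{\widehat M,H-t-1}^\pi(s')\,ds']$ and the analogous identity for $M$ with $\bar r$ and $T$. Subtracting these two expressions and bringing the outer $\mathbb{E}_{a_t\sim\pi}$ out gives a reward gap plus a next-value gap under two different kernels. To split the kernel discrepancy from the value discrepancy, I would add and subtract $\int_{\mathcal S}T(s'|s_t,a_t)V_{\widehat M,H-t-1}^\pi(s')\,ds'$ inside the expectation, which cleanly regroups the next-value term into (i) $\int_{\mathcal S}(\widehat T-T)V_{\widehat M,H-t-1}^\pi\,ds'$ and (ii) $\int_{\mathcal S}T(s'|s_t,a_t)(V_{\widehat M,H-t-1}^\pi(s')-V_{M,H-t-1}^\pi(s'))\,ds'$. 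This is exactly the claimed per-step identity, and the only routine check is that everything under the expectation is integrable and that the add/subtract trick matches terms in the stated order.

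For the telescoped identity, I would argue by induction on $t$ running downward from $t=0$ to $t=H-1$, using the base case $V_{\widehat M,0}^\pi=V_{M,0}^\pi=0$ from the hypothesis. Starting at $s_0=s$ with the one-step identity gives the $t=0$ summand of the claimed sum plus the residual term $\mathbb{E}_{a_0\sim\pi}\int_{\mathcal S}T(s'|s_0,a_0)(V_{\widehat M,H-1}^\pi(s_1)-V_{M,H-1}^\pi(s_1))\,ds_1$. That residual is precisely $\mathbb{E}_{\pi,M}[V_{\widehat M,H-1}^\pi(s_1)-V_{M,H-1}^\pi(s_1)\mid s_0=s]$, so I can re-apply the one-step identity at $s_1$ and pull the resulting outer expectations together by the tower property. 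Repeating $H$ times, the residual telescope lands on $V_{\widehat M,0}^\pi(s_H)-V_{M,0}^\pi(s_H)=0$ and the accumulated non-residual terms assemble into the claimed $\mathbb{E}_{\pi,M}\sum_{t=0}^{H-1}[\cdot\mid s_0=s]$ sum.

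The only real subtlety, and the place I would be careful, is the bookkeeping when pulling the nested conditional expectations out in the induction step: I need to verify that the $\mathbb{E}_{a_t\sim\pi}$ and $\int T(s'|s_t,a_t)(\cdot)\,ds'$ wrapping the $(t+1)$-st residual composes correctly with the sampling distribution $p_{M,\pi}(s_{t+1},a_{t+1}|s_0=s)$ so that the final outer expectation is $\mathbb{E}_{\pi,M}[\cdot\mid s_0=s]$ as asserted. This is a standard tower-property argument but is the one place where sign, ordering, and conditioning must line up with the statement. Everything else is algebraic manipulation of Bellman equations and the terminal condition $V^\pi_{\cdot,0}\equiv 0$.
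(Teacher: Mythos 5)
Your proposal is correct: the one-step identity follows exactly as you describe from subtracting the two Bellman recursions and adding/subtracting the cross term $\int_{\mathcal S}T(s'|s_t,a_t)V_{\widehat M,H-t-1}^\pi(s')\,ds'$, and the telescoped form follows by unrolling with the tower property, noting that the residual propagates through the \emph{true} kernel $T$ (which is why the outer expectation is under $p_{M,\pi}$) and vanishes at the terminal condition $V^\pi_{\cdot,0}\equiv 0$. The paper itself gives no proof of this lemma --- it restates it as a known finite-horizon variant of the Simulation Lemma of Kearns and Singh --- so your argument is precisely the standard derivation the paper implicitly relies on, and there is nothing to compare against or correct.
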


\begin{lemma}
\label{lem:changeofvar_seq}
Let $J_{\psi}(z)$ be the absolute of the determinant of the Jacobian of $\psi(z)$.
%\footnote{Here we need to assume that dim($z$) = dim($x$) to yield that the Jacobian is square and the determinant exists. If not, then $J_{\psi}(z)$ should be defined as det($\mathbf{J}^T(z)\mathbf{J}(z)$), where $\mathbf{J}(z)$ is the Jacobian matrix. I need to check if the main result still holds in this case.} 
 Then for any $z_t = \phi(s_t)$ and any sequence of actions $a_{0:t} = a_0, \dots, a_t$:
\begin{gather*}
p_{M,\mu}^{\phi}(z_t|a_{0:t})=J_{\psi}(z_t) p_{M,\mu}(\psi(z_t)|a_{0:t})
\end{gather*}
\begin{proof}
By the change of variable formula in a probability density function, we have:
$$p_{M,\mu}^{\phi}(z_t|a_{0:t}) = \frac{p_{M,\mu}^{\phi}(z_t, a_{0:t})}{p_{M,\mu}^{\phi}(a_{0:t})} 
= \frac{p_{M,\mu}(\psi(z_t),a_{0:t})J_{\psi}(z_t) }{p_{M,\mu}(a_{0:t})} = J_{\psi}(z_t) p_{M,\mu}(\psi(z_t)|a_{0:t}) $$
\end{proof}
\end{lemma}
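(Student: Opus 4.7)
The plan is to unfold the conditional density as a ratio of joint over marginal, apply the standard Jacobian change-of-variable rule in the single coordinate $s_t$, and then observe that the action marginal is invariant under the representation map. Concretely, I would first write
$$p^{\phi}_{M,\mu}(z_t \mid a_{0:t}) = \frac{p^{\phi}_{M,\mu}(z_t, a_{0:t})}{p^{\phi}_{M,\mu}(a_{0:t})},$$
and then treat numerator and denominator separately.

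For the numerator, since $\psi$ is stated in the Preliminaries to be reversible and twice differentiable, the inverse function theorem guarantees that $\phi$ is a diffeomorphism with $J_\psi(z_t) = \lvert \det D\psi(z_t)\rvert$ finite and positive. Because only the state coordinate is transformed (actions are discrete and untouched), the ordinary change-of-variables formula gives
$$p^{\phi}_{M,\mu}(z_t, a_{0:t}) = J_\psi(z_t)\, p_{M,\mu}(\psi(z_t), a_{0:t}).$$
For the denominator, the action marginal is invariant under the reparametrization: integrating out $z_t$ and applying change of variables,
$$p^{\phi}_{M,\mu}(a_{0:t}) = \int_{\mathcal Z} p^{\phi}_{M,\mu}(z_t, a_{0:t})\, dz_t = \int_{\mathcal S} p_{M,\mu}(s_t, a_{0:t})\, ds_t = p_{M,\mu}(a_{0:t}),$$
where the Jacobian factor in the integrand is canceled by the volume element $dz_t = J_\phi(s_t)\, ds_t$. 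Dividing the two expressions yields the claim
$$p^{\phi}_{M,\mu}(z_t\mid a_{0:t}) = \frac{J_\psi(z_t)\, p_{M,\mu}(\psi(z_t), a_{0:t})}{p_{M,\mu}(a_{0:t})} = J_\psi(z_t)\, p_{M,\mu}(\psi(z_t)\mid a_{0:t}).$$

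The main obstacle, modest though it is, is justifying that only a single Jacobian factor $J_\psi(z_t)$ appears even though the full trajectory joint $p_{M,\mu}(\tau)$ involves all states $s_0, \ldots, s_H$. To handle this cleanly I would start from the full trajectory density expressed through $\phi$-coordinates (so that the global change of variables picks up $\prod_{k} J_\psi(z_k)$), then marginalize out all states $s_0,\ldots,s_{t-1},s_{t+1},\ldots,s_H$ to reach $p^{\phi}_{M,\mu}(z_t, a_{0:t})$. By the same integral change-of-variables applied coordinatewise, each marginalized Jacobian cancels against the corresponding volume element, and only the unmarginalized $J_\psi(z_t)$ survives. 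After this bookkeeping is done once, the proof collapses to the three-line computation above.
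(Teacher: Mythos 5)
Your proposal is correct and follows essentially the same route as the paper: express the conditional as joint over marginal, apply the change-of-variables formula to the numerator, and note that the action marginal is unchanged by the reparametrization. Your extra bookkeeping about why only the single factor $J_{\psi}(z_t)$ survives after marginalizing the other states is a welcome clarification, but the core three-line computation is identical to the paper's.
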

\begin{lemma}
\label{lem:IPMbound}
Let $p$ and $q$ be two distributions over the state space with the form of $p_{M, \mu}(s_t|a_{0:t})$(the action sequence $a_{0:t}$ might be different for p and q), and $p^{\phi}$ and $q^{\phi}$ be the corresponding distributions over the representation space. For any real valued function over the state space $f$, if there exists a constant $B_{\phi}>0$ and a function class $G$ such that:
$ \frac{1}{B_{\phi}}f(\psi(z)) \in G $
then we have that
$$ \int_{\mathcal{S}} f(s)p(s)ds - \int_{\mathcal{S}} f(s)q(s)ds \le B_{\phi} \text{IPM}_{G}(p^{\phi},q^{\phi}) $$
\begin{proof}
\begin{eqnarray}
 \int_{\mathcal{S}} f(s)p(s)ds - \int_{\mathcal{S}} f(s)q(s)ds &=& \int_{\mathcal{S}} f(s)(p(s) - q(s))ds \\
 &=& \int_{\mathcal{Z}} f(\psi(z))(p(\psi(z)) - q(\psi(z)))J_{\psi}(z) dz \\
 &=& \int_{\mathcal{Z}} f(\psi(z))(p^{\phi}(z) - q^{\phi}(z)) dz \\
 &=& B_{\phi}\int_{\mathcal{Z}} \frac{1}{B_{\phi}} f(\psi(z))(p^{\phi}(z) - q^{\phi}(z))dz \\
 &\le&  B_{\phi} \left|  \int_{\mathcal{Z}} \frac{1}{B_{\phi}} f(\psi(z))(p^{\phi}(z) - q^{\phi}(z))dz \right|  \\
 &\le& B_{\phi}\sup_{g \in G}\left|  \int_{\mathcal{Z}} g(z)(p^{\phi}(z) - q^{\phi}(z))dz \right| \\
 &=& B_{\phi}\text{IPM}_{G}(p^{\phi},q^{\phi})
\end{eqnarray}
\end{proof}
\end{lemma}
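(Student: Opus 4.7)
The plan is to prove the stated inequality in three moves: transport both integrals from the state space $\mathcal{S}$ to the representation space $\mathcal{Z}$ by change of variables, insert the scaling $1/B_\phi$ so that the integrand lives in $G$, and then invoke the definition of the IPM. Lemma \ref{lem:changeofvar_seq} (stated just above) does most of the heavy lifting for the first step, and the remaining steps are essentially bookkeeping.

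First I would combine the two integrals into $\int_\mathcal{S} f(s)(p(s)-q(s))\,ds$ and substitute $s = \psi(z)$. Since $\phi$ is reversible and twice-differentiable, $\psi$ is a valid $C^1$ diffeomorphism onto its image and the substitution introduces the Jacobian factor $J_\psi(z)$, giving
\begin{equation*}
\int_\mathcal{S} f(s)(p(s)-q(s))\,ds = \int_\mathcal{Z} f(\psi(z))\bigl(p(\psi(z)) - q(\psi(z))\bigr) J_\psi(z)\,dz.
\end{equation*}
By Lemma \ref{lem:changeofvar_seq} applied separately to $p$ and $q$ (both have the conditional form $p_{M,\mu}(s_t\mid a_{0:t})$ required by that lemma), the product $J_\psi(z)\, p(\psi(z))$ equals $p^\phi(z)$, and similarly for $q^\phi$. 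Substituting yields the key identity
\begin{equation*}
\int_\mathcal{S} f(s)(p(s)-q(s))\,ds = \int_\mathcal{Z} f(\psi(z))\bigl(p^\phi(z) - q^\phi(z)\bigr)\,dz,
\end{equation*}
so the integral has been transferred to $\mathcal{Z}$ with the same pulled-back integrand $f\circ\psi$.

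Second, I would factor out $B_\phi$ to write the right hand side as $B_\phi \int_\mathcal{Z} \bigl(\tfrac{1}{B_\phi} f(\psi(z))\bigr)(p^\phi(z)-q^\phi(z))\,dz$. By hypothesis $\tfrac{1}{B_\phi} f\circ\psi \in G$, so the inner integral is the specific witness $g = \tfrac{1}{B_\phi} f\circ\psi$ of the quantity $\int g(z)(p^\phi - q^\phi)\,dz$ that appears in $\text{IPM}_G$. Upper bounding the signed value by its absolute value and then by the supremum over all $g\in G$ gives
\begin{equation*}
\int_\mathcal{S} f(s)(p(s)-q(s))\,ds \le B_\phi \sup_{g\in G}\Bigl|\int_\mathcal{Z} g(z)(p^\phi(z)-q^\phi(z))\,dz\Bigr| = B_\phi\,\text{IPM}_G(p^\phi,q^\phi),
\end{equation*}
which is the claimed bound.

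The main obstacle is the change-of-variables step: one must be careful that the Jacobian identity is applied to \emph{conditional} densities (not joint), and that the image $\phi(\mathcal{S})$ equals the support of $p^\phi, q^\phi$ so the substitution does not drop mass. Both of these are secured by Lemma \ref{lem:changeofvar_seq}, whose proof explicitly handles conditioning on $a_{0:t}$, and by the assumed reversibility of $\phi$. Once that identity is in hand, the remainder is a one-line insertion of $B_\phi/B_\phi$ followed by the definition of the IPM, so the lemma follows without further difficulty.
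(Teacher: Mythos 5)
Your proposal is correct and follows essentially the same route as the paper's proof: change of variables to the representation space via the Jacobian (justified by Lemma \ref{lem:changeofvar_seq}), factoring out $B_{\phi}$ so that the pulled-back integrand lies in $G$, and then bounding by the absolute value and the supremum defining the IPM. No gaps.
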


The following lemma recursively bounds $\epsilon_{V}(\widehat{M},H-t)$ by $\epsilon_{V}(\widehat{M},H-t-1)$, whose result allows us to bound $\text{MSE}_{\pi} = \epsilon_{V}(\widehat{M},H)$.

The main idea to prove this is using Equation \ref{eqn:simulationlemmarecursive} from simulation lemma to decompose the loss of value functions into a one step reward loss, a transition loss and a next step value loss, with respect to the on-policy distribution. We can treat this as a contextual bandit problem, with the right side of Equation \ref{eqn:simulationlemmarecursive} as the loss function. For the distribution mismatch term, we follow the method in Shalit et al.'s work \cite{shalit2016estimating} about binary action bandits to bound the distribution mismatch by a representation distance penalty term. By converting the next step value error in the right side of Equation \ref{eqn:simulationlemmarecursive} into $\epsilon_V(\widehat{M},H-t-1)$,  we can repeat this process recursively to bound the value error for H steps.

\begin{lemma}
\label{lem:MSEVpirecursive}
For any MDP $M$, approximate MDP model $\widehat{M}$, behavior policy $\mu$ and deterministic evaluation policy $\pi$, let $B_{\phi,t}$ and $G_{t}$ be a scalar and a function family such that: 
\begin{multline}
\frac{1}{B_{\phi,t}}\left[ \bar{\ell}_{r}(\psi(z_t),\pi(\psi(z_t)),\widehat{M}) + \bar{\ell}_{T}(\psi(z),\pi(\psi(z_t)),\widehat{M}) \right. \\
+ \left. \frac{1}{2(H-t-1)}\int_{\mathcal{Z}}h_{T}(z'|\psi(z_t),\pi(\psi(z_t)))\bar{\ell}_{V}(\psi(z'),\widehat{M},H-t-1) dz' \right] 
\in G_{t}
\end{multline}
Then for any $t \le H-1$:
\begin{eqnarray*}
\epsilon_{V}(\widehat{M},H-t) \le (H-t) \left[ 2\int_{\mathcal{S}} \left[ \bar{\ell}_{r}(s_t,\pi(s_t),\widehat{M}) + \bar{\ell}_{T}(s_t,\pi(s_t),\widehat{M}) \right]p_{M,\mu}(s_t|a_{0:t} = \pi)ds_t \right. \\
\left. + \frac{\epsilon_{V}(\widehat{M},H-t-1)}{H-t-1}+ 2B_{\phi,t}\text{IPM}_{G_{t}}\left(p_{M,\mu}^{\phi}(z_t|a_{0:t}=\pi),p_{M,\mu}^{\phi}(z_t|a_t\neq \pi, a_{0:t-1}=\pi) \right) \right]
\end{eqnarray*}
\end{lemma}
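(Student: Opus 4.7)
The plan is to prove the recursive bound by a one-step application of the Simulation Lemma, followed by a weighted Young's inequality to separate the recursive term from the one-step losses, and finally a mixture decomposition of the conditional state distribution to pull out the IPM penalty.

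First, I would instantiate Equation~\ref{eqn:simulationlemmarecursive} at step $t$. Because $\pi$ is deterministic, the outer expectation over $a_t \sim \pi$ collapses to evaluation at $a_t = \pi(s_t)$, and the pointwise difference $V_{\widehat{M},H-t}^\pi(s_t) - V_{M,H-t}^\pi(s_t)$ splits into three summands: a reward residual, a transition-operator residual against $V^\pi_{\widehat{M},H-t-1}$, and a propagated next-step residual $\int T(s'|s_t,\pi(s_t))[V^\pi_{\widehat{M},H-t-1}(s') - V^\pi_{M,H-t-1}(s')]\,ds'$. Squaring and applying Young's inequality $(X+Y)^2 \le (1+\lambda)X^2 + (1+1/\lambda)Y^2$ with $\lambda = H-t-1$, grouping the reward and transition residuals into $X$ and using $(X_1+X_2)^2 \le 2(X_1^2+X_2^2)$, combined with Jensen's inequality on the probability measure $T(\cdot|s_t,\pi(s_t))$ applied to the squared propagated residual, yields $\bar{\ell}_V(s_t,\widehat{M},H-t) \le 2(H-t)\,g(s_t)$ where $g(s_t) = \bar{\ell}_r(s_t,\pi(s_t),\widehat{M}) + \bar{\ell}_T(s_t,\pi(s_t),\widehat{M}) + \frac{1}{2(H-t-1)}\int T(s'|s_t,\pi(s_t))\,\bar{\ell}_V(s',\widehat{M},H-t-1)\,ds'$ is exactly the combined loss in the $G_t$ scaling hypothesis.

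Second, I would integrate $g$ against $p_{M,\mu}(s_t|a_{0:t-1}=\pi)$ to obtain an upper bound on $\epsilon_V(\widehat{M},H-t)/[2(H-t)]$. By the law of total probability this conditional distribution is a convex combination $w\cdot p_{M,\mu}(s_t|a_{0:t}=\pi) + (1-w)\cdot p_{M,\mu}(s_t|a_t\neq \pi, a_{0:t-1}=\pi)$ with weight $w=p_{M,\mu}(a_t=\pi|a_{0:t-1}=\pi)$. Rewriting the integral as $\int g\, p_{M,\mu}(s_t|a_{0:t}=\pi)\,ds_t + (1-w)\bigl[\int g\, p_{M,\mu}(s_t|a_t\neq\pi,a_{0:t-1}=\pi)\,ds_t - \int g\, p_{M,\mu}(s_t|a_{0:t}=\pi)\,ds_t\bigr]$ and invoking Lemma~\ref{lem:IPMbound} with the hypothesized $G_t$ condition (and $1-w\le 1$) yields the IPM penalty. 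For the next-step piece in $\int g\, p_{M,\mu}(s_t|a_{0:t}=\pi)\,ds_t$, the Markov property under the conditioning $a_{0:t}=\pi$ gives $\int p_{M,\mu}(s_t|a_{0:t}=\pi)\,T(s_{t+1}|s_t,\pi(s_t))\,ds_t = p_{M,\mu}(s_{t+1}|a_{0:t}=\pi)$, so that piece collapses to $\epsilon_V(\widehat{M},H-t-1)/[2(H-t-1)]$, and collecting coefficients reproduces the claim.

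The main obstacle I anticipate is threading the numeric coefficients consistently: the choice $\lambda = H-t-1$ in Young's inequality is forced by the requirement that the recursive term carry weight $(H-t)/(H-t-1)$ while the reward and transition terms carry weight $2(H-t)$, so that the bound telescopes cleanly when unrolled in Theorem~\ref{thm:MSEVpi}. A secondary technical point is justifying the mixture decomposition as a genuine conditional-probability identity (which is where strong ignorability ensures $\mu(\pi(s_t)|s_t)$ and the conditioning on $a_t=\pi$ are well-defined), and handling the boundary case $t=H-1$, where $V^\pi_{\cdot,0}\equiv 0$ makes the propagated residual vanish so that the formal division by $H-t-1$ is vacuous.
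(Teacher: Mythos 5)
Your proposal is correct and follows essentially the same route as the paper's proof: a one-step application of the Simulation Lemma recursion, a coefficient split giving weights $2(H-t)$ on the reward/transition losses and $\tfrac{H-t}{H-t-1}$ on the propagated term (your weighted Young's inequality with $\lambda = H-t-1$ is the same computation the paper carries out via Cauchy--Schwarz), the mixture decomposition of $p_{M,\mu}(s_t|a_{0:t-1}=\pi)$ with Lemma~\ref{lem:IPMbound} supplying the IPM penalty, and the Markov-property collapse of the next-step piece into $\epsilon_V(\widehat{M},H-t-1)$. No gaps of substance; your handling of the $t=H-1$ boundary case is consistent with the paper's convention $V^\pi_{\cdot,0}\equiv 0$.
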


\begin{proof}
Note the recursive form in Lemma \ref{lem:simulation}. We could treat RL as dealing with a contextual bandit problem at each step. Here we view the right side of recursive result in simulation lemma (restated here)
\begin{multline}
\label{eqn:one-step}
V_{\widehat{M},H-t}^\pi(s_t) - V_{M,H-t}^\pi(s_t) = 
\left[ \widehat{r}(s_t,\pi(s_t)) - \bar{r}(s_t,\pi(s_t)) \right. \\
 + \int_{\mathcal{S}} \left( \widehat{T}(s'|s_t,\pi(s_t)) -T(s'|s_t,\pi(s_t)) \right)V_{\widehat{M},H-t-1}^{\pi}(s')  ds'\\
\left. + \int_{\mathcal{S}}T(s'|s_t,\pi(s_t)) \left(V_{\widehat{M},H-t-1}^\pi(s_t) - V_{M,H-t-1}^\pi(s_t) \right) ds' \right]
\end{multline}
as a kind of square loss for a one-step prediction problem, and we bound the whole loss by recursively bounding these one-step losses. The key here is to find the recursive form of this square loss.

Recall that the definition of $\bar{\ell}_r, \bar{\ell}_T$
We will apply Cauchy-Schwarz inequality to bound Equation \ref{eqn:one-step}. Note that if $X_n = X_{n-1}+a_n+b_n$. Then $X_n^2 = (\frac{X_{n-1}}{\sqrt{n-1}}\sqrt{n-1}+\sqrt{2}a_n\frac{1}{\sqrt{2}}+\sqrt{2}b_n\frac{1}{\sqrt{2}})^2 \le (\frac{X_{n-1}^2}{n-1}+2a_n^2+2b_n^2)n $. By applying this to Equation \ref{eqn:one-step}, we have that
\begin{eqnarray}
\epsilon_{V}(\widehat{M},H-t) &=& \int_{\mathcal{S}} \bar{\ell}_{V}(s_t,H-t) p_{M,\mu}(s_t|a_{0:t-1} = \pi) ds_t \\
&\le& (H-t) \int_{\mathcal{S}} \left[ 2\bar{\ell}_{r}(s_t,\pi(s_t),\widehat{M}) + 2\bar{\ell}_{T}(s_t,\pi(s_t),\widehat{M}) \right. \\
&& \left. + \frac{1}{H-t-1}\int_{\mathcal{S}}T(s'|s_t,\pi(s_t))\bar{\ell}_{V}(s,\widehat{M},H-t-1) ds' \right] p_{M,\mu}(s_t|a_{0:t-1} = \pi) ds_t \nonumber
\end{eqnarray}
Note that:
\begin{eqnarray}
p_{M,\mu}(s_t|a_{0:t-1} = \pi) &=& p_{M,\mu}(s_t,a_t=\pi|a_{0:t-1} = \pi) + p_{M,\mu}(s_t,a_t\neq \pi|a_{0:t-1} = \pi) \\
&=& p_{M,\mu}(s_t|a_{0:t} = \pi)p(a_t=\pi|a_{0:t-1} = \pi) \\ && + p_{M,\mu}(s_t|a_t\neq \pi,a_{0:t-1} = \pi)p(a_t\neq \pi|a_{0:t-1} = \pi)
\end{eqnarray}
Let $c_t = p(a_t=\pi|a_{0:t-1} = \pi)$ then $1-c_t = p(a_t\neq \pi|a_{0:t-1} = \pi)$. 
% If we define the one step factual evaluation loss as following:
% \begin{equation}
% \bar{\epsilon}_{F,(r,T),H-t}^{\pi} = \int_{\mathcal{S}} \left[ \bar{\ell}_{r}(s_t,\pi(s_t),\widehat{M}) + \bar{\ell}_{T}(s_t,\pi(s_t),\widehat{M}) \right]p_{M,\mu}(s_t|a_{0:t} = \pi)ds_t
% \end{equation}
Then \begin{eqnarray}
&& \epsilon_{V}(\widehat{M},H-t) \nonumber \\
&\le& c_t(H-t) \int_{\mathcal{S}} \left[ 2\bar{\ell}_{r}(s_t,\pi(s_t),\widehat{M}) + 2\bar{\ell}_{T}(s_t,\pi(s_t),\widehat{M}) \right. \nonumber \\
&& \left. + \frac{1}{H-t-1}\int_{\mathcal{S}}T(s'|s_t,\pi(s_t))\bar{\ell}_{V}(s,\widehat{M},H-t-1) ds' \right] p_{M,\mu}(s_t|a_{0:t} = \pi) ds_t \nonumber \\
&& + (1-c_t)(H-t) \int_{\mathcal{S}} \left[ 2\bar{\ell}_{r}(s_t,\pi(s_t),\widehat{M}) + 2\bar{\ell}_{T}(s_t,\pi(s_t),\widehat{M})+ \frac{1}{H-t-1} \right. \nonumber \\
&& \left. \int_{\mathcal{S}}T(s'|s_t,\pi(s_t))\bar{\ell}_{V}(s,\widehat{M},H-t-1) ds' \right] p_{M,\mu}(s_t|a_t \neq \pi, a_{0:t-1} = \pi) ds_t 
\label{eqn:before_IPMbound}
\end{eqnarray}

Let 
\[
f(s_t) = 2\bar{\ell}_{r}(s_t,\pi(s_t),\widehat{M}) + 2\bar{\ell}_{T}(s_t,\pi(s_t),\widehat{M}) + \frac{1}{H-t-1}\int_{\mathcal{S}}T(s'|s_t,\pi(s_t))\bar{\ell}_{V}(s,\widehat{M},H-t-1) ds' \]
We are going use Lemma \ref{lem:IPMbound} to bound the difference between $\int_{S} f(s_t)p_{M,\mu}(s_t|a_{0:t} = \pi) ds_t$ and $\int_{S} f(s_t)p_{M,\mu}(s_t|a_t \neq \pi, a_{0:t-1} = \pi) ds_t$. Let $G_{t}$ and $B_{\phi,t}>0$ be a function class and a constant satisfying that:
\begin{multline}
\label{eqn:bgcond1}
\frac{1}{B_{\phi,t}}\left[ \bar{\ell}_{r}(\psi(z_t),\pi(\psi(z_t)),\widehat{M}) + \bar{\ell}_{T}(\psi(z),\pi(\psi(z_t)),\widehat{M}) \right. \\
+ \left. \frac{1}{2(H-t-1)}\int_{\mathcal{Z}}h_{T}(z'|\psi(z_t),\pi(\psi(z_t)))\bar{\ell}_{V}(\psi(z'),\widehat{M},H-t-1) dz' \right] 
\in G_{t}
\end{multline}
Then following Lemma \ref{lem:IPMbound} we have that:
\begin{multline}
\int_{S} f(s_t)p_{M,\mu}(s_t|a_t \neq \pi, a_{0:t-1} = \pi) ds_t  \\
\le \int_{S} f(s_t)p_{M,\mu}(s_t|a_{0:t} = \pi) ds_t + 2B_{\phi,t}\text{IPM}_{G_{t}}\left(p_{M,\mu}^{\phi}(z_t|a_{0:t}=\pi),p_{M,\mu}^{\phi}(z_t|a_t\neq \pi, a_{0:t-1}=\pi) \right) \\
 = \int_{S} f(s_t)p_{M,\mu}(s_t|a_{0:t} = \pi) ds_t + 2B_{\phi,t}\text{IPM}_{G_{t}}\left(p_{M,\mu}^{\phi,F}(z_t),p_{M,\mu}^{\phi,CF}(z_t) \right) 
\end{multline}
Substituting this into Equation \ref{eqn:before_IPMbound} we have that
\begin{eqnarray}
&& \epsilon_{V}(\widehat{M},H-t) \nonumber \\
&\le& (H-t)\int_{S} f(s_t)p_{M,\mu}(s_t|a_{0:t} = \pi) ds_t + 2(1-c_t)(H-t)B_{\phi,t}\text{IPM}_{G_{t}}\left(p_{M,\mu}^{\phi,F}(z_t),p_{M,\mu}^{\phi,CF}(z_t) \right) \\
&\le& (H-t) \int_{\mathcal{S}} \left[ 2\bar{\ell}_{r}(s_t,\pi(s_t),\widehat{M}) + 2\bar{\ell}_{T}(s_t,\pi(s_t),\widehat{M}) + \frac{\int_{\mathcal{S}}\widehat{T}(s'|s_t,\pi(s_t))\bar{\ell}_{V}(s,\widehat{M},H-t-1) ds'}{H-t-1} \right] \nonumber \\
&& p_{M,\mu}(s_t|a_{0:t} = \pi) ds_t + 2(1-c_t)(H-t)B_{\phi,t}\text{IPM}_{G_{t}}\left(p_{M,\mu}^{\phi,F}(z_t),p_{M,\mu}^{\phi,CF}(z_t) \right)  \\
&\le& 2(H-t)\int_{\mathcal{S}} \left[ \bar{\ell}_{r}(s_t,\pi(s_t),\widehat{M}) + \bar{\ell}_{T}(s_t,\pi(s_t),\widehat{M}) \right]p_{M,\mu}(s_t|a_{0:t} = \pi)ds_t \nonumber \\ 
&& + \frac{H-t}{H-t-1}\epsilon_{V}(\widehat{M},H-t-1)  + 2(H-t)B_{\phi,t}\text{IPM}_{G_{t}}\left(p_{M,\mu}^{\phi,F}(z_t),p_{M,\mu}^{\phi,CF}(z_t) \right)
\end{eqnarray}
Thus we finish the proof.
\end{proof}
Iteratively applying this result for $t=0,1,\dots, H$ we will have Theorem \ref{thm:MSEVpi}. Note that $\frac{1}{p_{M,\mu}(a_{0:t}=\pi)} p_{M,\mu}(s_t,a_{0:t}=\pi) = p_{M,\mu}(s_t|a_{0:t}=\pi)$.

\begin{theorem}
(Restated)
For any MDP $M$, approximate MDP model $\widehat{M}$, behavior policy $\mu$ and deterministic evaluation policy $\pi$, let $B_{\phi,t}$ and $G_t$ be a real number and function family that satisfies the condition in Lemma \ref{lem:MSEVpirecursive}. Then:
% \begin{multline*}
% \mathbb{E}_{s_0}\left[ V^\pi_{\widehat{M}}(s_0) - V^\pi_{M}(s_0) \right]^2 \le 2H \sum_{t=0}^{H-1} \left[ B_{\phi,t} \text{IPM}_{G_{t}}\left(p_{M,\mu}^{\phi, F}(z_t),p_{M,\mu}^{\phi, CF}(z_t)\right) \right. \\
% \left. + \int_{\mathcal{S}}\left( \bar{\ell}_{r}(s_t,\pi(s_t),\widehat{M})+\bar{\ell}_{T}(s_t,\pi(s_t),\widehat{M}) \right) p_{M,\mu}(s_t|a_{0:t}=\pi) d s_t  \right]
% \end{multline*}

\begin{multline*}
\mathbb{E}_{s_0}\left[ V^\pi_{\widehat{M}}(s_0) - V^\pi_{M}(s_0) \right]^2 \le 2H \sum_{t=0}^{H-1} \left[ B_{\phi,t} \text{IPM}_{G_{t}}\left(p_{M,\mu}^{\phi, F}(z_t),p_{M,\mu}^{\phi, CF}(z_t)\right) \right. \\
\left. + \int_{\mathcal{S}}\frac{1}{p_{M,\mu}(a_{0:t}=\pi)}\left( \bar{\ell}_{r}(s_t,\pi(s_t),\widehat{M})+\bar{\ell}_{T}(s_t,\pi(s_t),\widehat{M}) \right) p_{M,\mu}(s_t,a_{0:t}=\pi) d s_t  \right]
\end{multline*}
\end{theorem}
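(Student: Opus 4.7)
The plan is to bound $\text{MSE}_\pi = \epsilon_V(\widehat{M},H)$ by unrolling it one step at a time, reducing each step to a contextual-bandit-style decomposition and then correcting for the action-distribution mismatch via an IPM penalty. The engine of the recursion is the finite-horizon Simulation Lemma, which writes $V^\pi_{\widehat M,H-t}(s_t)-V^\pi_{M,H-t}(s_t)$ as a sum of a one-step reward error, a one-step transition error against the next-step approximate value, and the next-step value error propagated through $T(\cdot|s_t,\pi(s_t))$. Squaring this identity and applying Cauchy--Schwarz of the form $(x+a+b)^2\le (H-t)\bigl(x^2/(H-t-1)+2a^2+2b^2\bigr)$ yields, after integration against the on-policy state marginal $p_{M,\mu}(s_t\mid a_{0:t-1}=\pi)$, a recursive inequality of the shape
\begin{equation*}
\epsilon_V(\widehat M,H-t)\le 2(H-t)\,\mathbb{E}_{s_t\sim p_{M,\mu}(\cdot|a_{0:t-1}=\pi)}\!\bigl[\bar\ell_r+\bar\ell_T\bigr]+\tfrac{H-t}{H-t-1}\epsilon_V(\widehat M,H-t-1).
\end{equation*}

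The next step is to exchange the unavailable expectation under $p_{M,\mu}(s_t\mid a_{0:t-1}=\pi)$ for the desired expectation under the factual distribution $p_{M,\mu}(s_t\mid a_{0:t}=\pi)$. I would decompose $p_{M,\mu}(s_t\mid a_{0:t-1}=\pi)=c_t\,p^F_{M,\mu}(s_t)+(1-c_t)\,p^{CF}_{M,\mu}(s_t)$ where $c_t=p_{M,\mu}(a_t=\pi\mid a_{0:t-1}=\pi)$, and absorb the factual branch directly into the desired reward/transition loss term. For the counterfactual branch, I would apply the change-of-variable identity $p_{M,\mu}(s\mid\cdot)=J_\psi(\phi(s))\,p^\phi_{M,\mu}(\phi(s)\mid\cdot)$ to lift the integrand into representation space, and then, using the hypothesis that $B_{\phi,t}^{-1}$ times the integrand (reward loss + transition loss + normalized next-step value loss, all composed with $\psi$) lies in $G_t$, invoke the IPM definition to bound the difference of the factual and counterfactual representation-space integrals by $B_{\phi,t}\,\text{IPM}_{G_t}(p^{\phi,F}_{M,\mu},p^{\phi,CF}_{M,\mu})$. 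This is precisely the multi-action sequential generalization of Shalit et al.'s trick; the extra care here is that because the transition portion contains the not-yet-controlled next-step value error, the IPM condition must be stated to include it.

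Iterating the resulting recursion for $t=0,1,\dots,H-1$ produces a telescoping sum. The factor $(H-t)/(H-t-1)$ multiplies across stages to give a leading $H$, and the per-stage additive terms accumulate into the claimed $2H\sum_{t=0}^{H-1}$ bound. The conditional expectation over $p_{M,\mu}(s_t\mid a_{0:t}=\pi)$ is then rewritten using Bayes' rule as $\frac{1}{p_{M,\mu}(a_{0:t}=\pi)}\int(\cdot)\,p_{M,\mu}(s_t,a_{0:t}=\pi)\,ds_t$, giving exactly the weighted-loss form stated in the theorem.

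The main obstacle I anticipate is getting the recursion clean: the simulation lemma's third term is a next-step value error weighted by the \emph{true} transition $T$, but to fit it back into an $\epsilon_V(\widehat M,H-t-1)$ whose reference distribution is $p_{M,\mu}(s_{t+1}\mid a_{0:t}=\pi)$, one has to carefully verify that pushing $T$ forward through the factual conditioning yields exactly that on-policy next-state marginal; and the IPM hypothesis must be formulated against $\widehat T$ (i.e., $h_T$), not $T$, because only $\widehat T$ is in the learner's control inside the representation space. A secondary technical issue is choosing the Cauchy--Schwarz split so that the $1/(H-t-1)$ coefficient on the value-error term exactly cancels the $(H-t)$ prefactor in the recursion; getting this balance wrong would either lose the linear-in-$H$ scaling or break the telescoping. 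Once those two alignments are handled, the rest is bookkeeping.
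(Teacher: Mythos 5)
Your proposal is correct and follows essentially the same route as the paper's proof of Lemma \ref{lem:MSEVpirecursive} and Theorem \ref{thm:MSEVpi}: the finite-horizon Simulation Lemma recursion, the Cauchy--Schwarz split producing the $(H-t)$ and $1/(H-t-1)$ factors, the decomposition of $p_{M,\mu}(s_t\mid a_{0:t-1}=\pi)$ into factual and counterfactual branches with weight $c_t$, the change of variables to representation space and the IPM bound applied to the full integrand (reward loss, transition loss, and normalized next-step value loss), and finally the telescoping plus the Bayes rewrite into the marginal-probability-weighted form. The $T$ versus $h_T$ alignment you flag is a genuine subtlety---the recursion closes by pushing the value loss forward through the true $T$ to recover $\epsilon_V(\widehat{M},H-t-1)$, while the membership condition on $G_t$ is stated with $h_T$---so your attention to it is warranted, and your intermediate "shape" display should be read as valid only after that distribution exchange, exactly as in the paper.
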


For $\text{MSE}_{\mu}$, we can apply the simulation lemma to bound it by the reward and transition losses since the data distribution matches the policy $\mu$. Then we combine it with the theorem above. Note that $\text{MSE}_{\pi} \le \text{MSE}_{\pi}+\text{MSE}_{\mu}$.

\begin{corollary}
\label{cor:generalization}
For any MDP $M$, approximate MDP model $\widehat{M}$, behavior policy $\mu$ and deterministic evaluation policy $\pi$, let $B_{\phi,t}$ and $G_t$ be a real number and function family that satisfies the condition in Lemma \ref{lem:MSEVpirecursive}. Let $u_{0:t} = p_{\mu,M}(a_{0:t}=\pi)$. Then:
\begin{multline*}
%\mathbb{E}_{s_0} \left[ V^\pi_{\widehat{M}}(s_0) - V^\pi_{M}(s_0) \right]^2 
\text{MSE}_{\pi} \le \text{MSE}_{\pi}+\text{MSE}_{\mu} \\
\le 2H\sum_{t=0}^{H-1} \left[ \frac{1}{u_{0:t}}\int_{\mathcal{S}} \left( \bar{\ell}_{r}(s_t,\pi(s_t),\widehat{M}) + \bar{\ell}_{T}(s_t,\pi(s_t),\widehat{M}) \right) p_{M,\mu}(s_t,a_{0:t} = \pi)ds_t \right.+ \\
 \left. \int_{\mathcal{S}} \sum_{a_t \in \mathcal{A}} \left( \bar{\ell}_r(s_t,a_t,\widehat{M}) + \bar{\ell}_T(s_t,a_t,\widehat{M}) \right) p_{\mu,M}(s_t,a_t) ds_t \right. \\
 \left. +  B_{\phi,t}\text{IPM}_{G_{t}}\left(p_{M,\mu}^{\phi, F}(z_t),p_{M,\mu}^{\phi, CF}(z_t) \right) \right]
\end{multline*}
\end{corollary}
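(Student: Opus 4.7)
The plan is to establish the stated bound by decomposing the right-hand-side into two pieces and bounding $\text{MSE}_{\pi}$ and $\text{MSE}_{\mu}$ separately, combining them at the end. The first inequality $\text{MSE}_{\pi} \le \text{MSE}_{\pi} + \text{MSE}_{\mu}$ is immediate because $\text{MSE}_{\mu} \ge 0$, so the entire content lies in the second inequality.

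For $\text{MSE}_{\pi}$, I would apply Theorem \ref{thm:MSEVpi} directly. Writing $u_{0:t} = p_{M,\mu}(a_{0:t}=\pi)$ as in the corollary, the integrand in the bound of Theorem \ref{thm:MSEVpi} is exactly
\[
\frac{1}{u_{0:t}}\bigl(\bar{\ell}_r(s_t,\pi(s_t),\widehat{M}) + \bar{\ell}_T(s_t,\pi(s_t),\widehat{M})\bigr) p_{M,\mu}(s_t, a_{0:t}=\pi),
\]
and the IPM term matches the third summand of the corollary verbatim. So Theorem \ref{thm:MSEVpi} already supplies the first and third (IPM) summands.

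For $\text{MSE}_{\mu}$, I would apply the Simulation Lemma (Equation \ref{eqn:onpolicyloss}) with the evaluation policy replaced by $\mu$. Since the batch data is on-policy with respect to $\mu$, no distribution adjustment is needed and I obtain
\[
\text{MSE}_{\mu} \le 2H \sum_{t=0}^{H-1} \mathbb{E}_{s_t,a_t \sim p_{M,\mu}} \bigl[\bar{\ell}_r(s_t,a_t,\widehat{M}) + \bar{\ell}_T(s_t,a_t,\widehat{M})\bigr].
\]
Expanding the expectation as $\int_{\mathcal{S}} \sum_{a_t \in \mathcal{A}} (\cdot)\, p_{\mu,M}(s_t,a_t)\,ds_t$ produces exactly the second summand of the corollary. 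Adding the two bounds and pulling the common factor $2H\sum_{t=0}^{H-1}$ outside completes the argument.

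The one place requiring care is that $\bar{\ell}_T$ in Definition \ref{def:lossoverexp} is written using $V^{\pi}_{\widehat{M},H-t-1}$ inside the inner integral, whereas a literal application of the Simulation Lemma to $\text{MSE}_{\mu}$ would naturally produce the same transition loss but with $V^{\mu}_{\widehat{M},H-t-1}$ in place of $V^{\pi}_{\widehat{M},H-t-1}$. I expect this to be handled by reading $\bar{\ell}_T$ in the $\text{MSE}_{\mu}$ summand as instantiated against $\mu$'s value function (a mild overloading of notation), so the only genuine work is verifying that both applications of the two upstream results fit together cleanly; no new technical machinery beyond Theorem \ref{thm:MSEVpi} and Lemma \ref{lem:simulation} is needed.
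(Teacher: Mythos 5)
Your proposal is correct and follows essentially the same route as the paper: the first inequality is trivial since $\text{MSE}_{\mu}\ge 0$, the $\pi$-part is exactly Theorem \ref{thm:MSEVpi} (using $\frac{1}{u_{0:t}}p_{M,\mu}(s_t,a_{0:t}=\pi)=p_{M,\mu}(s_t|a_{0:t}=\pi)$), and the $\mu$-part is the Simulation Lemma applied to $\mu$ followed by Jensen and Cauchy--Schwarz, which is precisely the on-policy bound you invoke via Equation \ref{eqn:onpolicyloss}. The $V^{\pi}_{\widehat{M}}$-versus-$V^{\mu}_{\widehat{M}}$ wrinkle you flag in $\bar{\ell}_T$ is present in the paper's own proof as well (it keeps $V^{\pi}_{\widehat{M},H-t-1}$ in the expansion for $\mu$), so your reading matches how the paper handles it.
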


\begin{proof}
According to Lemma \ref{lem:simulation}, the mean square error of estimating the behavior policy value could be written as:
\begin{eqnarray}
&& \left[ \mathbb{E}_{s_0 } V^\mu_{\widehat{M}}(s_0) - \mathbb{E}_{s_0 } V^\mu_{M}(s_0) \right]^2 \nonumber \\
& = & \left[ \mathbb{E}_{\mu,M}  \left( \sum_{t=0}^{H-1}\widehat{r}(s_t,a_t) - \bar{r}(s_t,a_t) + \int_{\mathcal{S}} \left( \widehat{T}(s'|s_t,a_t) -T(s'|s_t,a_t) \right)V_{\widehat{M},H-t-1}^{\pi}(s') ds' \mid s_0 \right) \right]^2 \nonumber \\
& \le & \mathbb{E}_{\mu,M} \left[  \left( \sum_{t=0}^{H-1}\widehat{r}(s_t,a_t) - \bar{r}(s_t,a_t) + \int_{\mathcal{S}} \left( \widehat{T}(s'|s_t,a_t) -T(s'|s_t,a_t) \right)V_{\widehat{M},H-t-1}^{\pi}(s') ds' \right)^2 \right]  \\
& \le & \mathbb{E}_{\mu,M} \left[ 2H \sum_{t=0}^{H-1} \bar{\ell}_r(s_t,a_t,\widehat{M}) + \bar{\ell}_T(s_t,a_t,\widehat{M})  \right]  \\
& = & 2H \sum_{t=0}^{H-1} \int_{\mathcal{S}} \sum_{a_t} \left( \bar{\ell}_r(s_t,a_t,\widehat{M}) + \bar{\ell}_T(s_t,a_t,\widehat{M}) \right) p_{\mu,M}(s_t,a_t) ds_t 
\end{eqnarray}
The first step follows from Lemma \ref{lem:simulation}. The second step follows from Jensen's inequality, and the third step follows from Cauchy-Schwarz inequality. By combining the results above with Theorem \ref{thm:MSEVpi}, we have that: 
\begin{eqnarray}
&& \mathbb{E}_{s_0 } \left[ V^\pi_{\widehat{M}}(s_0) - V^\pi_{M}(s_0) \right]^2 \nonumber \\
&\le&  \mathbb{E}_{s_0 } \left[ V^\pi_{\widehat{M}}(s_0) - V^\pi_{M}(s_0) \right]^2 + \mathbb{E}_{s_0 } \left[ V^\mu_{\widehat{M}}(s_0) - V^\mu_{M}(s_0) \right]^2 \\
&\le&  2H\sum_{t=0}^{H-1} \left[ \int_{\mathcal{S}} \left( \bar{\ell}_{r}(s_t,\pi(s_t),\widehat{M}) + \bar{\ell}_{T}(s_t,\pi(s_t),\widehat{M}) \right) p_{M,\mu}(s_t|a_{0:t} = \pi)ds_t \right. \nonumber \\
&& + \int_{\mathcal{S}} \sum_{a_t \in \mathcal{A}} \left( \bar{\ell}_r(s_t,a_t,\widehat{M}) + \bar{\ell}_T(s_t,a_t,\widehat{M}) \right) p_{\mu,M}(s_t,a_t) ds_t \nonumber \\
&& \left.  + B_{\phi,t} \text{IPM}_{G_{t}}\left(p_{M,\mu}^{\phi,F}(z_t),p_{M,\mu}^{\phi,CF}(z_t) \right) \right] \\
&=& 2H\sum_{t=0}^{H-1} \left[ \frac{1}{u_{0:t}}\int_{\mathcal{S}} \left( \bar{\ell}_{r}(s_t,\pi(s_t),\widehat{M}) + \bar{\ell}_{T}(s_t,\pi(s_t),\widehat{M}) \right) p_{M,\mu}(s_t,a_{0:t} = \pi)ds_t \right. \nonumber \\
&& \left. + \int_{\mathcal{S}} \sum_{a_t \in \mathcal{A}} \left( \bar{\ell}_r(s_t,a_t,\widehat{M}) + \bar{\ell}_T(s_t,a_t,\widehat{M}) \right) p_{\mu,M}(s_t,a_t) ds_t \right] \nonumber\\
&& + 2H\sum_{t=0}^{H-1} B_{\phi,t}\text{IPM}_{G_{t}}\left(p_{M,\mu}^{\phi,F}(z_t),p_{M,\mu}^{\phi,CF}(z_t)  \right) 
\end{eqnarray}
\end{proof}

\subsection{Proof of Theorem \ref{thm:finite}}
We showed in Theorem \ref{thm:MSEVpi} that we can bound MSE by expected losses under the behavior policy distribution and an IPM term. In this section, we are going to further bound this by empirical losses and a generalization gap. We will firstly define some loss terms that are based on observations, instead of losses that are on expected values, $\bar{r}$ and $T(\cdot|s,a)$. Then we will introduce some lemmas that allow us to bound the generalization gap of weighted losses and IPM terms from previous works. Finally we will prove the finite sample MSE bound by putting these generalization gaps together.

\setcounter{definition}{3}
\begin{definition} 
%\label{def:lossoversample}
(Restated)
Let $r_t$ and $s'_t$ be an observation of reward and next step given state action pair $s_t, a_t$. Define the loss function as:
\begin{align*}
\ell_{r}(s_t,a_t,r_t,\widehat{M}) &= \left( \widehat{r}(s_t,a_t) - r_t \right)^2 \\
\ell_{T}(s_t,a_t,s'_t,\widehat{M}) &= \left( \int_{\mathcal{S}} \left( \widehat{T}(s'|s_t,a_t) -\delta(s'-s'_t) \right)V_{\widehat{M},H-t-1}^{\pi}(s') ds' \right)^2 \\
&= \left( \int_{\mathcal{S}} \widehat{T}(s'|s_t,a_t) V_{\widehat{M},H-t-1}^{\pi}(s') ds' - V_{\widehat{M},H-t-1}^{\pi}(s'_t)  \right)^2
\end{align*}
where $\delta$ is the Dirac delta function.
\end{definition}
\setcounter{definition}{5}

Actually the difference between $\ell$ and $\bar{\ell}$ can be captured by the variance of the reward and transition dynamics, which only depend on the underlying dynamics. The following definition and lemmas show that.
\begin{definition}
Define the variance of $t$-th step reward and transition with respect to the state-action distribution $q(s_t,a_t)$ as:
\begin{gather*}
\sigma_{q,t} = \sigma_q(r)+\sigma_q(T) \quad \quad
\sigma_{q,t}(r) = \int_{\mathcal{S}}\sum_{a_t}\int_{\mathcal{R}} (r - \bar{r}(s_t,a_t))^2 p_{M}(r|s_t,a_t)q(s_t,a_t) drds_t \\
\sigma_{q,t}(T) = \int_{\mathcal{S}}\sum_{a_t}\int_{\mathcal{S}} \left( \int_{\mathcal{S}} T(s'|s_t,a_t) V_{\widehat{M},H-t-1}^{\pi}(s') ds' - V_{\widehat{M},H-t-1}^{\pi}(s'_t) \right)^2 p_{M}(s'_t|s_t,a_t)q(s_t,a_t) ds'_tds_t
\end{gather*}
\end{definition}
\begin{lemma} (Variance decomposition)
\label{lem:variance}
\begin{multline*}
\int_{\mathcal{S}} \sum_{a_t} \left( \bar{\ell}_r(s_t,a_t,\widehat{M}) + \bar{\ell}_T(s_t,a_t,\widehat{M}) \right) q(s_t,a_t) ds_t = \int_{\mathcal{S}} \sum_{a_t} \left( \int_{\mathcal{R}}\ell_r(s_t,a_t,r,\widehat{M})p(r|s_t,a_t)dr \right. \\
\left. + \int_{\mathcal{S}} \ell_T(s_t,a_t,s'_t,\widehat{M})p(s'_t|s_t,a_t,s'_t)ds'_t \right) q(s_t,a_t) ds_t -\sigma_{q,t} 
\end{multline*} 
\end{lemma}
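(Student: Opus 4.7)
The plan is to prove this as a straightforward pointwise bias-variance decomposition, applied separately to the reward and transition loss terms and then combined by integration against $q(s_t,a_t)$. The key identity is that for any random variable $X$ with mean $\mu$ and variance $\sigma^2$, and any constant $c$, one has $\mathbb{E}[(c-X)^2] = (c-\mu)^2 + \sigma^2$. Both $\ell_r$ and $\ell_T$ are squared differences between a quantity that depends only on $(s_t,a_t)$ (hence is constant in the sampling randomness) and a quantity that is the realization of a random variable whose mean matches the ``expected'' version inside $\bar{\ell}_r$ and $\bar{\ell}_T$.

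First I would handle the reward term. Fix $(s_t, a_t)$ and take $c = \widehat{r}(s_t,a_t)$ and $X = r$ with $r \sim p_M(r\mid s_t,a_t)$. Since $\mathbb{E}[X] = \bar{r}(s_t,a_t)$, the identity gives $\int_\mathcal{R} \ell_r(s_t,a_t,r,\widehat{M})\, p_M(r\mid s_t,a_t)\, dr = \bar{\ell}_r(s_t,a_t,\widehat{M}) + \int_\mathcal{R}(r-\bar{r}(s_t,a_t))^2 p_M(r\mid s_t,a_t)\, dr$. Integrating and summing both sides against $q(s_t,a_t)$ produces $\int\sum_{a_t} \bar{\ell}_r\, q\, ds_t + \sigma_{q,t}(r)$ on the right.

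Next I would handle the transition term in the same way, but with a slightly more careful choice of random variable. Fix $(s_t,a_t)$, set $c = \int_\mathcal{S} \widehat{T}(s'\mid s_t,a_t)\, V^\pi_{\widehat{M},H-t-1}(s')\, ds'$, and take $X = V^\pi_{\widehat{M},H-t-1}(s'_t)$ with $s'_t \sim T(\cdot\mid s_t,a_t)$. Then $\mathbb{E}[X] = \int T(s'\mid s_t,a_t) V^\pi_{\widehat{M},H-t-1}(s')\, ds'$, so $(c-\mathbb{E}[X])^2 = \bar{\ell}_T(s_t,a_t,\widehat{M})$ by Definition \ref{def:lossoverexp}, and $\mathbb{E}[(c-X)^2] = \int_\mathcal{S} \ell_T(s_t,a_t,s'_t,\widehat{M})\, p_M(s'_t\mid s_t,a_t)\, ds'_t$ by Definition \ref{def:lossoversample}. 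Applying the bias-variance identity and integrating against $q$ gives $\int\sum_{a_t} \bar{\ell}_T\, q\, ds_t + \sigma_{q,t}(T)$.

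Adding the two identities and using $\sigma_{q,t} = \sigma_{q,t}(r) + \sigma_{q,t}(T)$ yields the claimed equation after moving the variance term to the other side. There is no real obstacle beyond bookkeeping: the only subtlety is recognizing that in the transition case the ``observation'' enters as $V^\pi_{\widehat{M},H-t-1}(s'_t)$, a deterministic function of the sampled next state, rather than as a direct sample of $\int T V$, and that this is exactly the form encoded by the definition of $\ell_T$ so that its conditional expectation under $p_M(s'_t\mid s_t,a_t)$ matches $\bar{\ell}_T$ up to the variance of $V^\pi_{\widehat{M},H-t-1}$ along the transition.
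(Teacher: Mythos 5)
Your proposal is correct and is essentially the paper's own argument: the paper expands each squared loss around its conditional mean and observes that the cross term vanishes in expectation, which is exactly the bias--variance identity $\mathbb{E}[(c-X)^2]=(c-\mathbb{E}X)^2+\mathrm{Var}(X)$ you invoke, applied with $c=\widehat{r}(s_t,a_t)$, $X=r$ for the reward term and $c=\int\widehat{T}(s'\mid s_t,a_t)V^\pi_{\widehat{M},H-t-1}(s')\,ds'$, $X=V^\pi_{\widehat{M},H-t-1}(s'_t)$ for the transition term. Integrating against $q(s_t,a_t)$ and summing the two pieces gives the stated identity with $\sigma_{q,t}=\sigma_{q,t}(r)+\sigma_{q,t}(T)$, just as in the paper.
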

\begin{proof}
Let's start with the $\ell_r$ and $\ell_T$ terms:
\begin{multline}
\ell_{r}(s_t,a_t,r_t,\widehat{M}) = \left( \widehat{r}(s_t,a_t) - r_t \right)^2 = \left( \widehat{r}(s_t,a_t) - \bar{r}(s_t,a_t) \right)^2 + \left( \bar{r}(s_t,a_t) - r_t \right)^2  \\
+ 2\left( \widehat{r}(s_t,a_t)- \bar{r}(s_t,a_t) \right)\left( \bar{r}(s_t,a_t) - r_t \right)
\end{multline}
Note that $\mathbb{E}\left[ \bar{r}(s_t,a_t) - r_t \right] = 0$ so the last term will be zero after we apply the integral. Then: 
\[
\int_{\mathcal{S}} \sum_{a_t} \int_{\mathcal{R}}\ell_r(s_t,a_t,r,\widehat{M})p(r|s_t,a_t) q(s_t,a_t) dr  ds_t = \int_{\mathcal{S}} \sum_{a_t} \bar{\ell}_r(s_t,a_t,\widehat{M}) q(s_t,a_t) ds_t + \sigma_{q,t}(r) 
\]
Similarly, for $\ell_T$ we have that:
\begin{eqnarray}
&& \ell_{T}(s_t,a_t,s'_t,\widehat{M}) \\
&=& \left( \int_{\mathcal{S}}  \widehat{T}(s'|s_t,a_t)V_{\widehat{M},H-t-1}^{\pi}(s') ds' -V_{\widehat{M},H-t-1}^{\pi}(s'_t) \right)^2 \\
&=& \left( \mathbb{E}_{s' \sim \widehat{T}}V_{\widehat{M},H-t-1}^{\pi}(s') -V_{\widehat{M},H-t-1}^{\pi}(s'_t) \right)^2 \\
&=& \left( \mathbb{E}_{ \widehat{T}}V_{\widehat{M},H-t-1}^{\pi}(s') -\mathbb{E}_{T}V_{\widehat{M},H-t-1}^{\pi}(s') \right)^2 + \left( \mathbb{E}_{T}V_{\widehat{M},H-t-1}^{\pi}(s') -V_{\widehat{M},H-t-1}^{\pi}(s'_t) \right)^2 \nonumber \\
&& + 2\left( \mathbb{E}_{\widehat{T}}V_{\widehat{M},H-t-1}^{\pi}(s') -\mathbb{E}_{T}V_{\widehat{M},H-t-1}^{\pi}(s') \right)\left( \mathbb{E}_{T}V_{\widehat{M},H-t-1}^{\pi}(s') -V_{\widehat{M},H-t-1}^{\pi}(s'_t) \right)
\end{eqnarray}
Note that 
\[ \mathbb{E}_{s'_t \sim T} \left[  \mathbb{E}_{s' \sim T}V_{\widehat{M},H-t-1}^{\pi}(s') -V_{\widehat{M},H-t-1}^{\pi}(s'_t) \right] =\mathbb{E}_{s' \sim T}V_{\widehat{M},H-t-1}^{\pi}(s')- \mathbb{E}_{s'_t \sim T} V_{\widehat{M},H-t-1}^{\pi}(s'_t)=0\] 
So the last term here will also be zero when we apply integral over $s'_t$ to $\ell_{T}(s_t,a_t,s'_t,\widehat{M})$ and we have that:
\begin{multline}
\int_{\mathcal{S}} \sum_{a_t} \int_{\mathcal{S}} \ell_T(s_t,a_t,s'_t,\widehat{M})p(s'_t|s_t,a_t,s'_t) q(s_t,a_t)ds'_t ds_t \\
= \int_{\mathcal{S}} \sum_{a_t} \bar{\ell}_T(s_t,a_t,\widehat{M})  q(s_t,a_t) ds_t +\sigma_{q,t}(T) 
\end{multline}
Thus we finished the proof by combining the $\ell_r$ part with $\ell_T$ part.
\end{proof}

Now we are going to bound the expected value of $\ell_r$ and $\ell_T$ terms by the empirical mean of it. We restate our definition about empirical risk and add the definition about corresponding generalization risk:
\setcounter{definition}{4}
\begin{definition}
%\label{def:emprisk}
(Restate)
\begin{eqnarray*}
R_{\mu}(\widehat{M}) &=& \sum_{t=0}^{H-1}\int_{\mathcal{S}} \sum_{a_t} \left( \int_{\mathcal{R}}\ell_r(s_t,a_t,r,\widehat{M})p(r|s_t,a_t)dr \right. \\
&&+ \left. \int_{\mathcal{S}} \ell_T(s_t,a_t,\widehat{M})p(s'_t|s_t,a_t)ds'_t \right) p_{\mu,M}(s_t,a_t) ds_t \nonumber \\
\widehat{R}_{\mu}(\widehat{M}) &=& \sum_{t=0}^{H-1}\int_{\mathcal{S}} \sum_{a_t} \left( \int_{\mathcal{R}}\ell_r(s_t,a_t,r,\widehat{M})p(r|s_t,a_t)dr \right. \\
&&+ \left. \int_{\mathcal{S}} \ell_T(s_t,a_t,\widehat{M})p(s'_t|s_t,a_t)ds'_t \right) \widehat{p}_{\mu,M}(s_t,a_t) ds_t \\
&=& \frac{1}{n} \sum_{i=1}^n \sum_{t=0}^{H-1} \ell_r(s_t^{(i)},a_t^{(i)},r^{(i)},\widehat{M}) + \ell_T(s_t^{(i)},a_t^{(i)},s'^{(i)}_t,\widehat{M}),
\end{eqnarray*}
where n is the number of trajectories and $s^{(i)}_t$ is the state of the $t$th step in the $i$th trajectory. Similarly we define $R_{\pi,u}$ and $\widehat{R}_{\pi,u}$:
\begin{eqnarray*}
R_{\pi,u}(\widehat{M}) &=& \sum_{t=0}^{H-1}\int_{\mathcal{S}} \sum_{a_t} \frac{\mathds{1}(a_{0:t}=\pi)}{u_{0:t}} \left( \int_{\mathcal{R}}\ell_r(s_t,a_t,r,\widehat{M})p(r|s_t,a_t)dr \right.\\
&& + \left. \int_{\mathcal{S}} \ell_T(s_t,a_t,\widehat{M})p(s'_t|s_t,a_t)ds'_t \right) p_{M,\mu}(s_t,a_t) ds_t \\
\widehat{R}_{\pi,u}(\widehat{M}) &=& \frac{1}{n} \sum_{i=1}^n \sum_{t=0}^{H-1} \frac{\mathds{1}(a^{(i)}_{0:t}=\pi)}{\widehat{u}_{0:t}}\left[ \ell_r(s_t^{(i)},a_t^{(i)},r^{(i)},\widehat{M}) + \ell_T(s_t^{(i)},a_t^{(i)},s'^{(i)}_t,\widehat{M}) \right],
\end{eqnarray*}
where $\widehat{u}_{0:t} = \sum_{i=1}^n \frac{\mathds{1}(a^{(i)}_{0:t}=\pi)}{n}$
\end{definition}
\setcounter{definition}{6}

We could bound $R_{\mu}$ and $R_{\pi, u}$ and pseudo-dimension which is a complexity term of the model class. We will use the learning bound about importance sampling and weighted importance sampling in Cortes et al. \cite{cortes2010learning} to bound $R_{\mu}$ and $R_{\pi, u}$. The following lemma is an immediate consequence of Corollary 2 and section 6 in Cortes et al. \cite{cortes2010learning}.
\begin{lemma}
\label{lem:ISbound}
For a hypothesis class $\mathcal{H}$ over input space $\mathcal{X}$, let $d$ be the pseudo-dimension of a real valued loss function class $\left\{ \ell_h(x), h \in \mathcal{H}, x \in \mathcal{X} \right\}$. $w(x)$ is a weighting function such that $\mathbb{E}_{p}[w(x)] = 1$. Let $\widehat{p}$ be the empirical distribution over n samples, and $\widehat{w}(x_i) = nw(x_i)/\sum_{i=1}^n w(x_i) $ is the normalized weights. For any $\ell$ in the loss function class. with probability $1-\delta$
\begin{eqnarray}
\label{eqn:weightedlearningbound}
\left| \mathbb{E}_p[w(x)\ell(x)] -  \mathbb{E}_{\widehat{p}}[\widehat{w}(x)\ell(x)] \right|  
%&\le& \mathbb{V}_{p,\widehat{p}}[w, \ell]\frac{\mathcal{C}^{\mathcal{M}}_{n,\delta}}{n^{3/8}} + \left| \mathbb{E}_p[(w(x)-\widehat{w}(x))\ell(x)]\right|\\
&\le& \mathbb{V}_{p,\widehat{p}}[w, \ell]\frac{\mathcal{C}^{\mathcal{M}}_{n,\delta}}{n^{3/8}} + \ell_{\max}  \mathbb{V}_{p,\widehat{p}}[w,1]\frac{\mathcal{C}^{\mathcal{M}}_{n,\delta}}{n^{3/8}}
\end{eqnarray}
where $\mathcal{C}^{\mathcal{H}}_{n,\delta} = 2^{5/4}\left( d\log(2ne/d)+\log(4/\delta) \right)^{3/8}$, $\mathbb{V}_{p,\widehat{p}}[w, \ell] = \max \{\sqrt{\mathbb{E}[w(x)^2\ell(x)^2]}, \sqrt{\widehat{\mathbb{E}}[w(x)^2\ell(x)^2]} \}$, and $\ell_{\max} = \max_x |\ell(x)|$. 
\end{lemma}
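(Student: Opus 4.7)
The plan is to derive the lemma as a direct consequence of Cortes et al.'s Corollary~2 \cite{cortes2010learning} applied twice, combined with a triangle-inequality argument to handle the self-normalization of $\widehat w$. Cortes et al.'s Corollary~2 already provides, for a pseudo-dimension-$d$ loss class and an importance weight $w$ with $\mathbb{E}_p[w]=1$, a high-probability control on the deviation between $\mathbb{E}_p[w\,\ell]$ and the \emph{unnormalized} empirical weighted mean $\tfrac{1}{n}\sum_i w(x_i)\ell(x_i)$ at rate $n^{-3/8}$, with leading constant scaling as the max-of-true-and-empirical second moment quantity $\mathbb{V}_{p,\widehat p}[w,\ell]$; Section~6 of the same paper explains how to convert such a bound on the unnormalized estimator into one on the self-normalized estimator $\mathbb{E}_{\widehat p}[\widehat w\,\ell]$, and the lemma is essentially the restatement of that combination in our notation.

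First, I would apply Corollary~2 to the loss class $\{\ell_h : h \in \mathcal{H}\}$ with weight $w$ to obtain, with probability $1-\delta/2$,
\begin{equation*}
\Bigl| \mathbb{E}_p[w(x)\ell(x)] - \tfrac{1}{n}\sum_{i=1}^n w(x_i)\ell(x_i) \Bigr|
\;\le\; \mathbb{V}_{p,\widehat p}[w,\ell]\,\frac{\mathcal{C}^{\mathcal{M}}_{n,\delta}}{n^{3/8}},
\end{equation*}
which matches the first summand on the right-hand side of the lemma. The remaining work is to pass from this unnormalized estimator to the self-normalized one $\mathbb{E}_{\widehat p}[\widehat w\,\ell] = \sum_i w(x_i)\ell(x_i) / \sum_j w(x_j)$.

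For the normalization correction, I would use the algebraic identity
\begin{equation*}
\tfrac{1}{n}\sum_i w(x_i)\ell(x_i) - \mathbb{E}_{\widehat p}[\widehat w \ell]
= \mathbb{E}_{\widehat p}[\widehat w \ell]\,\Bigl(\tfrac{1}{n}\sum_j w(x_j) - 1\Bigr),
\end{equation*}
together with $|\mathbb{E}_{\widehat p}[\widehat w \ell]| \le \ell_{\max}$, which is valid because $\widehat w \ge 0$ integrates to $1$ under $\widehat p$, so $\mathbb{E}_{\widehat p}[\widehat w \ell]$ is a convex combination of values of $\ell$. The residual $|\tfrac{1}{n}\sum_j w(x_j) - 1| = |\tfrac{1}{n}\sum_j w(x_j) - \mathbb{E}_p[w]|$ is exactly what Corollary~2 controls when instantiated with the constant loss class $\{\ell \equiv 1\}$, and is therefore bounded with probability $1-\delta/2$ by $\mathbb{V}_{p,\widehat p}[w,1]\,\mathcal{C}^{\mathcal{M}}_{n,\delta}/n^{3/8}$. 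A union bound over the two events (total probability $1-\delta$) followed by a triangle inequality then produces the two-term bound in the statement.

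The main obstacle, as is usual when packaging a cited concentration result, is to make the constants and the variance-proxy term match exactly: one has to verify that the argument in Cortes et al.\ genuinely yields the \emph{maximum} of the population and empirical second moments (rather than only the population version, which is unobservable and would be useless here), and that the constant $\mathcal{C}^{\mathcal{M}}_{n,\delta} = 2^{5/4}(d\log(2ne/d)+\log(4/\delta))^{3/8}$ survives unchanged when Corollary~2 is invoked on the constant loss class in the normalization step (its pseudo-dimension is zero, which only shrinks the complexity term relative to the general loss class). These are exactly the points handled in Section~6 of Cortes et al., so the plan reduces to faithfully translating their statements into our setup and bookkeeping the union bound.
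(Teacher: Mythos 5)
Your proposal is correct and follows essentially the same route as the paper: both decompose the error around the unnormalized weighted empirical mean $\mathbb{E}_{\widehat p}[w\,\ell]=\tfrac{1}{n}\sum_i w(x_i)\ell(x_i)$, invoke Cortes et al.'s Corollary~2 for that term, and reduce the self-normalization correction to $\ell_{\max}\bigl|\tfrac{1}{n}\sum_j w(x_j)-1\bigr|$ controlled by the same concentration result. Your exact algebraic identity for the normalization gap is a slightly cleaner rendering of the paper's pointwise bound on $|w(x_i)-\widehat w(x_i)|$ (which sums to the same quantity), and your explicit $\delta/2$ union bound is if anything more careful than the paper's bookkeeping.
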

\begin{proof}
We can decompose the gap into two parts by adding $\mathbb{E}_{\widehat{p}}[w(x)\ell(x)]$:
\begin{multline}
\left| \mathbb{E}_p[w(x)\ell(x)] -  \mathbb{E}_{\widehat{p}}[\widehat{w}(x)\ell(x)] \right| \\
\le \left| \mathbb{E}_p[w(x)\ell(x)] -  \mathbb{E}_{\widehat{p}}[w(x)\ell(x)] \right| + \left| \mathbb{E}_{\widehat{p}}[w(x)\ell(x)] -  \mathbb{E}_{\widehat{p}}[\widehat{w}(x)\ell(x)] \right|
\end{multline}
For the first part, we can bound it by Corollary 2 from Cortes et al. \cite{cortes2010learning}:
\begin{eqnarray}
\left| \mathbb{E}_p[w(x)\ell(x)] -  \mathbb{E}_{\widehat{p}}[w(x)\ell(x)] \right| &\le& \mathbb{V}_{p,\widehat{p}}[w, \ell]\frac{\mathcal{C}^{\mathcal{M}}_{n,\delta}}{n^{3/8}}
\end{eqnarray}
For the second part, according to section 6 from Cortes et al. \cite{cortes2010learning}, we have that
\footnote{Note that the definition of $\widehat{w}(x_i)$ in Cortes et al. \cite{cortes2010learning} is different with ours by a constant $n$. Here $\widehat{w}(x)$ follows our definition.}:
\begin{equation}
\left| \frac{1}{n} \left( \widehat{w}(x_i) - w(x_i) \right) \right| = \frac{w(x_i)}{W}\left| 1- \frac{W}{n} \right| \le \frac{w(x_i)}{W}\mathbb{V}_{p,\widehat{p}}[w,1]\frac{\mathcal{C}^{\mathcal{M}}_{n,\delta}}{n^{3/8}}
\end{equation}
where $W=\sum_i w(x_i)$. Then:
\begin{eqnarray}
\left| \mathbb{E}_{\widehat{p}}[(w(x)-\widehat{w}(x))\ell(x)]\right| &\le& \ell_{\max} \mathbb{E}_p[|w(x)-\widehat{w}(x)|] \\
&=& \ell_{\max} \left[\frac{1}{n}\sum_{i}\left|w(x_i)-\widehat{w}(x_i)\right|\right] \\
&\le& \ell_{\max}\mathbb{V}_{p,\widehat{p}}[w,1]\frac{\mathcal{C}^{\mathcal{M}}_{n,\delta}}{n^{3/8}}
\end{eqnarray}
Thus we finished the proof.
\end{proof}

We will apply this lemma to the risk at each time step $t$ separately. Since $\mathbb{E}_{M,\mu}[\frac{\mathds{1}(a^{(i)}_{0:t}=\pi)}{u_{0:t}}] = 1$ for each $t$, we can let $w(x) = \frac{\mathds{1}(a^{(i)}_{0:t}=\pi)}{u_{0:t}}$. In that case $\frac{\mathds{1}(a^{(i)}_{0:t}=\pi)}{\widehat{u}_{0:t}}$ is the normalized weights $\widehat{w}(x)$. We can also bound of $R_{\mu}$ from this as well, by setting the weight function to be one. In that case $w = \widehat{w} = 1$ and $\mathbb{V}_{p,\widehat{p}}[1, \ell] \le \ell_{\max} \le R_{\max}^2 + V_{\max,t}^2$ for the $t$th step loss function $\ell$.

For the IPM term, using norm-1 reproducing kernel Hilbert space (RKHS) function class for $G$ leads to IPM being the maximum mean discrepancy (MMD) distance. We can bound the gap between MMD distance and its empirical estimation using the following lemma in Sriperumbudur et al.'s work \cite{sriperumbudur2009integral}. There are many other choice such as of 1-Lipschitz functions, leading to Wasserstein distance, and $l_\infty$ norm unit ball, leading to total variation distance. There are similar results with those function class and distance measure, with worse bounds. We also use norm-1 RKHS functions and MMD metric in our experiment section.
\begin{lemma} (Theorem 11 from Sriperumbudur et al. \cite{sriperumbudur2009integral})
\label{lem:IPM}
Let $\mathcal{X}$ be a measurable space. Suppose k is measurable kernel such that $\sup_{x\in \mathcal{X}} k(x, x) \le C \le \infty$ and $\mathcal{H}$ the reproducing kernel Hilbert space induced by k. Let $\mathcal{F} = \{ f:  \| f \|_\mathcal{H}=1 \}$, and $\nu = \sup_{x\in \mathcal{X}, f \in \mathcal{F}} |f(x)| < \infty$. Then, with $\widehat{p}$, $\widehat{q}$ the empirical distributions of $p$, $q$ from $m_1$ and $m_2$ samples respectively, and with probability at least 1-$\delta$,
\begin{equation}
\left| \text{IPM}_{\mathcal{F}}(p,q) - \text{IPM}_{\mathcal{F}}(\widehat{p},\widehat{q}) \right| \le \sqrt{18\nu^2\ln(4/\delta)C}\left( \frac{1}{\sqrt{m_1}} + \frac{1}{\sqrt{m_2}} \right)
\end{equation}
\end{lemma}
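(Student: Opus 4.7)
The statement is the standard concentration inequality for the empirical Maximum Mean Discrepancy (MMD), which is exactly $\text{IPM}_\mathcal{F}$ when $\mathcal{F}$ is the unit ball of an RKHS. My plan is first to rewrite the IPM as the $\mathcal{H}$-norm of a difference of kernel mean embeddings, then reduce the two-sample problem to concentration of each embedding separately, and finally apply McDiarmid's bounded-differences inequality together with a second-moment bound on the empirical mean embedding.

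\textbf{Reduction to kernel mean embeddings.} For any $f \in \mathcal{F}$, the reproducing property gives $\mathbb{E}_p f(X) = \langle f, \mu_p \rangle_\mathcal{H}$, where $\mu_p := \mathbb{E}_{X \sim p}[k(X,\cdot)]$ is the Bochner integral (well-defined because $\|k(X,\cdot)\|_\mathcal{H} = \sqrt{k(X,X)} \le \sqrt{C}$). Therefore
\[
\text{IPM}_\mathcal{F}(p,q) = \sup_{\|f\|_\mathcal{H} \le 1} \langle f, \mu_p - \mu_q \rangle_\mathcal{H} = \|\mu_p - \mu_q\|_\mathcal{H},
\]
and analogously for the empirical distributions. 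The reverse triangle inequality then yields
\[
\bigl| \text{IPM}_\mathcal{F}(p,q) - \text{IPM}_\mathcal{F}(\widehat{p},\widehat{q}) \bigr| \le \|\mu_p - \mu_{\widehat{p}}\|_\mathcal{H} + \|\mu_q - \mu_{\widehat{q}}\|_\mathcal{H},
\]
so it suffices to bound each single-sample embedding fluctuation with probability at least $1 - \delta/2$ and union bound.

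\textbf{Concentration of each mean embedding.} Write $Z := \|\mu_p - \mu_{\widehat{p}}\|_\mathcal{H} = \sup_{\|f\|_\mathcal{H} \le 1} |\mathbb{E}_p f - \mathbb{E}_{\widehat{p}} f|$ as a function of $X_1,\ldots,X_{m_1}$. Replacing a single $X_i$ shifts $\mathbb{E}_{\widehat{p}} f$ by at most $2\nu/m_1$ uniformly over $f \in \mathcal{F}$, so $Z$ has bounded differences with constants $c_i = 2\nu/m_1$; McDiarmid then yields $\Pr(Z \ge \mathbb{E}[Z] + t) \le \exp(-m_1 t^2 / (2\nu^2))$. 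For the mean, expand the square, use independence of the $X_i$ to cancel cross terms, and obtain
\[
\mathbb{E}[Z^2] = \mathbb{E}\|\mu_p - \mu_{\widehat{p}}\|_\mathcal{H}^2 = \frac{1}{m_1}\bigl(\mathbb{E}\,k(X,X) - \|\mu_p\|_\mathcal{H}^2\bigr) \le \frac{C}{m_1},
\]
so by Jensen $\mathbb{E}[Z] \le \sqrt{C/m_1}$.

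\textbf{Putting it together.} Inverting McDiarmid at level $\delta/2$ gives, with probability at least $1-\delta/2$, $Z \le \sqrt{C/m_1} + \nu\sqrt{2\ln(2/\delta)/m_1}$; an identical argument applies to $\|\mu_q - \mu_{\widehat{q}}\|_\mathcal{H}$. Union-bounding the two events, and using $\nu \le \sqrt{C}$ (which follows from $|f(x)| = |\langle f, k(x,\cdot)\rangle_\mathcal{H}| \le \|f\|_\mathcal{H}\sqrt{k(x,x)}$) to collapse the two summands in each bound into a single $\sqrt{\nu^2 C \ln(4/\delta)}$ factor, produces the claimed inequality with the $(1/\sqrt{m_1} + 1/\sqrt{m_2})$ structure. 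The main bookkeeping obstacle is matching the explicit constant $18$: it arises from using $(a+b)^2 \le 2(a^2+b^2)$ together with $\ln(2/\delta) \le \ln(4/\delta)$ and $\nu \le \sqrt{C}$ when merging the bias term $\sqrt{C/m_1}$ and the McDiarmid deviation term $\nu\sqrt{2\ln(2/\delta)/m_1}$ into the single product form stated in the lemma.
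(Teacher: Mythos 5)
The paper gives no proof of this lemma at all: it is imported verbatim as Theorem 11 of Sriperumbudur et al.\ and used as a black box, so there is no internal argument to compare yours against. Your proof is the standard route to such MMD concentration results (kernel mean embeddings, reverse triangle inequality to split into two one-sample problems, McDiarmid plus a Jensen/second-moment bound on the bias term), and every substantive step is correct: $\text{IPM}_{\mathcal{F}}(p,q)=\|\mu_p-\mu_q\|_{\mathcal{H}}$, the bounded-difference constant $2\nu/m_1$, the tail $\exp(-m_1t^2/(2\nu^2))$, and $\mathbb{E}\|\mu_p-\mu_{\widehat p}\|_{\mathcal{H}}\le\sqrt{C/m_1}$ are all right, yielding $\bigl(\sqrt{C}+\nu\sqrt{2\ln(2/\delta)}\bigr)\bigl(m_1^{-1/2}+m_2^{-1/2}\bigr)$ with probability $1-\delta$.

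The one genuine gap is the final constant-merging step, and you are right to flag it as the obstacle: it does not go through as described. To absorb your sum into the stated product form $\sqrt{18\nu^2\ln(4/\delta)C}$ you would need to bound $\sqrt{C}$ \emph{above} by a multiple of $\nu\sqrt{C\ln(4/\delta)}$, and the inequality you invoke, $\nu\le\sqrt{C}$, points in the wrong direction for that purpose. Concretely, for a small kernel (say $C=\nu^2=0.01$, $\delta=0.1$) your bound $\sqrt{C}+\nu\sqrt{2\ln(2/\delta)}\approx 0.35$ exceeds $\sqrt{18\nu^2C\ln(4/\delta)}\approx 0.08$, so the derivation cannot terminate in the lemma's exact statement without additional normalization assumptions (e.g.\ $C\ge 1$) or a different accounting of the bias term, as in the cited source's Rademacher-complexity formulation. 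What you have proved is a correct concentration inequality of the same $O\bigl(\sqrt{C\ln(1/\delta)}\,(m_1^{-1/2}+m_2^{-1/2})\bigr)$ order, which is all the paper actually uses (the lemma only feeds the $\mathcal{D}^{\mathcal{F}}_{\delta}(m_{t,1}^{-1/2}+m_{t,2}^{-1/2})$ slack in Theorem 2), but the specific constant $18$ in the statement is not established by your argument.
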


\begin{theorem}
(Restated)
Suppose $\mathcal{M}_{\Phi}$ is a model class of model MDP models based on twice-differentiable, invertible state representation $\phi$'s: $\widehat{M}_{\phi} =\langle \widehat{r}(s,a), \widehat{T}(s',s,a) \rangle  = \langle h_{r}(\phi(s),a), h_{T}(\phi(s'),\phi(s),a) \rangle$. Given n H-step trajectories sampled from policy $\mu$, let the loss function for $(s_t,a_t)$ pair at $t^{\text{th}}$ step be $\ell_t(s_t,a_t, \widehat{M}_{\phi}) = \ell_{r}(s_t,a_t,r_t,\widehat{M}) + \ell_{T}(s_t,a_t,s'_t,\widehat{M})$. Let $d_t$ be the pseudo-dimension of function class $\{ \ell_t(s_t,a_t, \widehat{M}_{\phi}), \widehat{M}_{\phi} \in \mathcal{M}_{\Phi}\}$. Suppose $\mathcal{H}$ the reproducing kernel Hilbert space induced by k such that $\sup_{z\in \mathcal{Z}} k(z, z) \le C \le \infty$, and $\mathcal{F} = \{ f:  \| f \|_\mathcal{H}=1 \}$, and $\nu = \sup_{z\in \mathcal{X}, f \in \mathcal{F}} |f(z)| < \infty$. Assume there exist a constant $B_{\phi,t}$ such that $\frac{1}{B_{\phi,t}}\ell_t(\psi(z),\pi(\psi(z)), \widehat{M}_{\phi}) \in \mathcal{F}$. Then with probability $1-3\delta$, for any $\widehat{M} \in \mathcal{M}_{\Phi}$:
\begin{multline*}
%\begin{eqnarray*}
\mathbb{E}_{s_0} \left[ V^\pi_{\widehat{M}}(s_0) - V^\pi_{M}(s_0) | \widehat{M} \right]^2 \le \text{MSE}_{\mu} + \text{MSE}_{\pi} \le 2H\widehat{R}_{\mu}(\widehat{M}) + 2H\widehat{R}_{\pi,u}(\widehat{M}) \\ 
+ 2H\sum_{t=0}^{H-1} B_{\phi,t} \left( \text{IPM}_{\mathcal{F}}\left(\widehat{p}_{M,\mu}^{\phi,F}(z_t),\widehat{p}_{M,\mu}^{\phi, CF}(z_t) \right) + \min \left\{ \mathcal{D}^{\mathcal{F}}_{\delta} \left( \frac{1}{\sqrt{m_{t,1}}}+\frac{1}{\sqrt{m_{t,2}}} \right), 2\nu \right\} \right) \\
+ 2H\sum_{t=0}^{H-1}\frac{\mathcal{C}^{\mathcal{M}}_{n,\delta, t}}{n^{3/8}} \left( \mathbb{V}_{p.\hat{p}}[\frac{\mathds{1}(a_{0:t}=\pi)}{\widehat{u}_{0:t}},\ell_t] + \mathbb{V}_{p.\hat{p}}[1,\ell_t] +\ell_{t,\max}\mathbb{V}_{p.\hat{p}}[\frac{\mathds{1}(a_{0:t}=\pi)}{u_{0:t}},1] \right)
%\end{eqnarray*}
\end{multline*}
$m_{t,1}$ and $m_{t,2}$ are the number of samples that used to estimate $\widehat{p}_{M,\mu}^{\phi,F}(z_t)$ and $\widehat{p}_{M,\mu}^{\phi,CF}(z_t)$ respectively. $\mathcal{D}^{\mathcal{F}}_{\delta} = \sqrt{18\nu^2\ln(4/\delta)C}$. $\mathcal{C}^{\mathcal{M}}_{n,\delta, t} = 2^{5/4}\left( d_t\log(2ne/d_t)+\log(4/\delta) \right)^{3/8}$. $\mathbb{V}_{p.\hat{p}}[w,\ell_t] = \max \{ \sqrt{\mathbb{E}_{p_{M,\mu}}[w(s_t,a_t)^2\ell_t(s_t,a_t)^2] }, \sqrt{\mathbb{E}_{\widehat{p}_{M,\mu}}[w(s_t,a_t)^2\ell_t(s_t,a_t)^2] } \}$. $\ell_{t,\max} = \max_{s_t,a_t} |\ell_t(s_t,a_t)| \le R_{\max}^2 + V_{\max,t}^2 $.
\end{theorem}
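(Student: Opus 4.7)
The plan is to assemble the finite-sample bound by composing three concentration results with the population-level bound in Corollary~\ref{cor:generalization}, which already controls $\text{MSE}_\pi + \text{MSE}_\mu$ by a sum (over $t$) of an on-policy expected loss $\int \sum_{a_t} (\bar\ell_r + \bar\ell_T)\, p_{\mu,M}(s_t,a_t)\,ds_t$, a re-weighted expected loss with marginal weights $1/u_{0:t}$, and a population IPM between the factual and counterfactual representation distributions. Starting from that corollary, I would carry out three substitutions, each introducing at most one term of the target right-hand side.

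First, I would pass from the expected losses $\bar\ell_r,\bar\ell_T$ to losses that depend on the observed $(r_t,s'_t)$, namely $\ell_r,\ell_T$, by invoking Lemma~\ref{lem:variance}. Since the variance term $\sigma_{q,t}$ appears with a negative sign in that identity, replacing $\bar\ell$ by $\mathbb{E}[\ell\mid s_t,a_t]$ only increases each integrand, so the resulting inequality retains the direction needed for an upper bound. This step is purely algebraic and introduces no probabilistic slack.

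Second, I would apply Lemma~\ref{lem:ISbound} separately at each $t=0,\dots,H-1$ in two modes. Setting $w\equiv 1$ controls $R_\mu(\widehat M)$ by $\widehat R_\mu(\widehat M)$ with gap proportional to $\mathbb{V}_{p,\hat p}[1,\ell_t]\,\mathcal{C}^{\mathcal{M}}_{n,\delta,t}/n^{3/8}$; setting $w(s_t,a_t)=\mathds 1(a_{0:t}=\pi)/u_{0:t}$, noting $\mathbb{E}_{p_{M,\mu}}[w]=1$, controls $R_{\pi,u}(\widehat M)$ by $\widehat R_{\pi,u}(\widehat M)$ with gap proportional to $(\mathbb{V}_{p,\hat p}[w,\ell_t]+\ell_{t,\max}\mathbb{V}_{p,\hat p}[w,1])\,\mathcal{C}^{\mathcal{M}}_{n,\delta,t}/n^{3/8}$; this is exactly where the three variance terms of the last line appear, and where the pseudo-dimension $d_t$ enters through $\mathcal{C}^{\mathcal{M}}_{n,\delta,t}$. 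Third, I would bound the population IPM$_{\mathcal{F}}(p^{\phi,F}_{M,\mu}(z_t),p^{\phi,CF}_{M,\mu}(z_t))$ by its empirical counterpart via Lemma~\ref{lem:IPM}, paying $\mathcal{D}^{\mathcal{F}}_\delta(1/\sqrt{m_{t,1}}+1/\sqrt{m_{t,2}})$; the $\min\{\cdot,2\nu\}$ in the statement comes from the trivial bound that any function in $\mathcal{F}$ has range $\le 2\nu$, so the IPM itself is at most $2\nu$. Summing over $t$ and multiplying by $2H B_{\phi,t}$ produces the IPM line; the assumption $\ell_t(\psi(z),\pi(\psi(z)),\widehat M_\phi)/B_{\phi,t}\in\mathcal{F}$ lets the constant $B_{\phi,t}$ be pulled outside the IPM cleanly.

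Finally I would union-bound the three high-probability events (weighted generalization for $R_\mu$, for $R_{\pi,u}$, and IPM concentration), each at confidence $1-\delta$, to conclude at confidence $1-3\delta$ uniformly over $\widehat M\in\mathcal{M}_\Phi$. The main obstacle is bookkeeping rather than insight: one must (i) verify that Lemma~\ref{lem:ISbound} applies uniformly over the whole model class (hence the pseudo-dimension of the loss class $\{\ell_t(\cdot,\cdot,\widehat M_\phi):\widehat M_\phi\in\mathcal{M}_\Phi\}$ and not a single hypothesis), and (ii) handle the $t$-dependence of the weights $\mathds 1(a_{0:t}=\pi)/u_{0:t}$ by applying the lemma per time step rather than once on the summed loss, since the weight function changes with $t$. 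A subtle but non-routine point is replacing $u_{0:t}$ by its empirical version $\widehat u_{0:t}$ inside $\widehat R_{\pi,u}$; this is precisely what the ``normalized weights'' part of Lemma~\ref{lem:ISbound} accomplishes, yielding the $\ell_{t,\max}\mathbb{V}_{p,\hat p}[\mathds 1(a_{0:t}=\pi)/u_{0:t},1]$ contribution in the final line.
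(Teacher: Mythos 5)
Your proposal is correct and follows essentially the same route as the paper's proof: start from Corollary~\ref{cor:generalization}, drop the nonnegative variance term via Lemma~\ref{lem:variance}, apply the weighted/unweighted generalization bound of Lemma~\ref{lem:ISbound} per time step (with $w\equiv 1$ for $R_\mu$ and $w=\mathds{1}(a_{0:t}=\pi)/u_{0:t}$, normalized to $\widehat u_{0:t}$, for $R_{\pi,u}$), bound each population IPM by its empirical version plus $\min\{\mathcal{D}^{\mathcal{F}}_{\delta}(1/\sqrt{m_{t,1}}+1/\sqrt{m_{t,2}}),2\nu\}$ via Lemma~\ref{lem:IPM}, and union-bound the three events to get $1-3\delta$. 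No gaps beyond the bookkeeping you already identify.
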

\begin{proof}
Applying Lemma \ref{lem:variance} to the result in Corollary \ref{cor:generalization} and plugging the definition of $R_{\mu}$ and $R_{\pi,u}$ in, we have that:
\begin{eqnarray}
&& \mathbb{E}_{s_0 } \left[ V^\pi_{\widehat{M}}(s_0)  - V^\pi_{M}(s_0)  \right]^2 \nonumber \\
&=& 2H\left( \sum_{t=0}^{H-1} B_{\phi,t}\text{IPM}_{\mathcal{F}}\left(p_{M,\mu}^{\phi,F}(z_t),p_{M,\mu}^{\phi,CF}(z_t)  \right) + R_{\mu}(\widehat{M}) + R_{\pi,u}(\widehat{M}) - \sigma \right) \\
&\le& 2H\left( \sum_{t=0}^{H-1} B_{\phi,t}\text{IPM}_{\mathcal{F}}\left(p_{M,\mu}^{\phi,F}(z_t),p_{M,\mu}^{\phi,CF}(z_t)  \right) + R_{\mu}(\widehat{M}) + R_{\pi,u}(\widehat{M}) \right)
\end{eqnarray}
where $\sigma$ is $\sum_{t=0}^{H-1}\sigma_{p_{M,\mu},t} +\sigma_{p_{M,\mu}(\cdot|a_{0:t}=\pi),t} \ge 0$. We will work term by term. First, we can use Lemma \ref{lem:IPM} for the IPM term:
\begin{equation}
\text{IPM}_{\mathcal{F}}\left(p_{M,\mu}^{\phi,F}(z_t),p_{M,\mu}^{\phi,CF}(z_t)  \right) \le \text{IPM}_{\mathcal{F}}\left(\widehat{p}_{M,\mu}^{\phi,F}(z_t),\widehat{p}_{M,\mu}^{\phi,CF}(z_t) \right) + \mathcal{D}^{\mathcal{F}}_{\delta} \left( \frac{1}{\sqrt{m_{t,1}}}+\frac{1}{\sqrt{m_{t,2}}} \right)
\end{equation}
At the same time, we know that for any two distribution $p,q$, $0\le \text{IPM}_{\mathcal{F}}(p,q) \le 2\nu$. So:
\begin{multline}
\label{eqn:IPMempiricalIPM}
\text{IPM}_{\mathcal{F}}\left(p_{M,\mu}^{\phi,F}(z_t),p_{M,\mu}^{\phi,CF}(z_t)  \right) \\
\le \text{IPM}_{\mathcal{F}}\left(\widehat{p}_{M,\mu}^{\phi,F}(z_t),\widehat{p}_{M,\mu}^{\phi,CF}(z_t) \right) + \min \left\{ \mathcal{D}^{\mathcal{F}}_{\delta} \left( \frac{1}{\sqrt{m_{t,1}}}+\frac{1}{\sqrt{m_{t,2}}} \right), 2\nu \right\}
\end{multline}
For $R_{\mu}$, if we plug $w(s,a)=\widehat{w}(s,a)=1$, $\ell = \ell_{t}(s_t,a_t,\widehat{M})$, and $p=p_{M,\mu}(s_t,a_t)$ into Lemma \ref{lem:ISbound}, we have that:
\begin{eqnarray}
R_{\mu} = \sum_{t=0}^{H-1} \mathbb{E}_{p}[\ell_t(s_t,a_t)] &\le& \sum_{t=0}^{H-1} \left( \mathbb{E}_{\widehat{p}}[\ell_t(s_t,a_t)]  +  \frac{\mathcal{C}^{\mathcal{M}}_{n,\delta,t}}{n^{3/8}}\mathbb{V}_{p,\widehat{p}}[1,\ell_t]\right) \\
&=& \widehat{R}_{\mu} + \sum_{t=0}^{H-1} \frac{\mathcal{C}^{\mathcal{M}}_{n,\delta,t}}{n^{3/8}}\mathbb{V}_{p,\widehat{p}}[1,\ell_t] 
\label{eqn:RmuempiricalRmu}
\end{eqnarray}
An analogous argument can be made for $R_{\pi,u}$ by letting $w(s_t,a_t) = \frac{\mathds{1}(a_{0:t}=\pi)}{u_{0:t}}$ which leads to that $\widehat{w}(s_t,a_t) = \frac{\mathds{1}(a_{0:t}=\pi)}{\widehat{u}_{0:t}}$:
\begin{eqnarray}
R_{\pi,u} &=& \sum_{t=0}^{H-1} \mathbb{E}_{p}[w(s_t,a_t)\ell_t(s_t,a_t)] \\
&\le& \sum_{t=0}^{H-1} \left( \mathbb{E}_{\widehat{p}}[\widehat{w}(s_t,a_t)\ell_t(s_t,a_t)]  +  \mathbb{V}_{p,\widehat{p}}[\widehat{w}, \ell_t]\frac{\mathcal{C}^{\mathcal{M}}_{n,\delta}}{n^{3/8}} + \ell_{t,\max}  \mathbb{V}_{p,\widehat{p}}[w,1]\frac{\mathcal{C}^{\mathcal{M}}_{n,\delta}}{n^{3/8}} \right) \\
&=& \widehat{R}_{\pi,u} + \sum_{t=0}^{H-1} \left( \mathbb{V}_{p,\widehat{p}}[\widehat{w}, \ell_t]\frac{\mathcal{C}^{\mathcal{M}}_{n,\delta}}{n^{3/8}} + \ell_{t,\max}  \mathbb{V}_{p,\widehat{p}}[w,1]\frac{\mathcal{C}^{\mathcal{M}}_{n,\delta}}{n^{3/8}} \right) 
\label{eqn:RpiuempiricalRpiu}
\end{eqnarray}
Thus we finish the proof by combining IPM terms, $R_{\mu}$ and $R_{\pi,u}$ together.
\end{proof}

\subsection{IS weights and marginal action probability ratio}
\label{appendix:theory_variance}
\begin{theorem}
\label{thm:variance}
For any evaluation policy $\pi$ and behavior policy $\mu$ satisfying that the support set of $\mu$ covers the support set of $\pi$, we have that the variance of importance sampling weights is no less than the variance of marginal action probability ratios:
\begin{eqnarray*}
    \mathrm{Var}_{\mu,M} \left[ \frac{\prod_{t=0}^{H-1}\pi(a_i|s_i) }{\prod_{t=0}^{H-1}\mu(a_i|s_i)} \right] 
    &\ge& \mathrm{Var}_{\mu,M} \left[ \frac{p_{\pi,M}(a_{0:H-1}) }{p_{\mu,M}(a_{0:H-1})} \right] \\
    &=& 
    \mathrm{Var}_{\mu,M} \left[ \frac{\int_{\mathcal{S}^H} \prod_{t=0}^{H-1}\pi(a_i|s_i)\prod_{t=0}^{H-1}T(s_i|s_{i-1},a_{i-1}) \mathrm{d} s_{0:t} }
    {\int_{\mathcal{S}^H} \prod_{t=0}^{H-1}\mu(a_i|s_i)\prod_{t=0}^{H-1}T(s_i|s_{i-1},a_{i-1}) \mathrm{d} s_{0:t}} \right] 
\end{eqnarray*}
where $T(s_0|s_{-1},a_{-1})$ is defined as the initial distribution $p_0(s_0)$.
\end{theorem}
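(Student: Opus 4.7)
The plan is to recognize this as a direct application of the law of total variance (Rao-Blackwellization) by showing that the marginal action probability ratio is exactly the conditional expectation of the trajectory IS weight given the action sequence.

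First I would establish the key identity that, under $\mu$ and $M$, the marginal ratio equals a conditional expectation of the trajectory weight. Let $W_\tau = \prod_{t=0}^{H-1} \pi(a_t|s_t)/\mu(a_t|s_t)$ and $W_a = p_{\pi,M}(a_{0:H-1})/p_{\mu,M}(a_{0:H-1})$. Because the transition kernel $T(s_{t+1}|s_t,a_t)$ and the initial distribution $p_0$ appear identically in $p_{\pi,M}(\tau)$ and $p_{\mu,M}(\tau)$, they cancel, giving $W_\tau = p_{\pi,M}(\tau)/p_{\mu,M}(\tau)$. Then
\begin{equation*}
\mathbb{E}_{\mu,M}[W_\tau \mid a_{0:H-1}] = \int_{\mathcal{S}^H} \frac{p_{\pi,M}(\tau)}{p_{\mu,M}(\tau)}\, \frac{p_{\mu,M}(\tau)}{p_{\mu,M}(a_{0:H-1})}\, \mathrm{d}s_{0:H-1} = \frac{p_{\pi,M}(a_{0:H-1})}{p_{\mu,M}(a_{0:H-1})} = W_a.
\end{equation*}

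Second, I would apply the law of total variance under $p_{\mu,M}$, conditioning on $a_{0:H-1}$:
\begin{equation*}
\mathrm{Var}_{\mu,M}[W_\tau] = \mathbb{E}_{\mu,M}\bigl[\mathrm{Var}_{\mu,M}[W_\tau \mid a_{0:H-1}]\bigr] + \mathrm{Var}_{\mu,M}\bigl[\mathbb{E}_{\mu,M}[W_\tau \mid a_{0:H-1}]\bigr].
\end{equation*}
The first summand is non-negative, and by the identity above the second summand equals $\mathrm{Var}_{\mu,M}[W_a]$, which gives the claimed inequality. The equality in the statement between $W_a$ and the displayed integral ratio is just the definition of the marginals $p_{\pi,M}(a_{0:H-1})$ and $p_{\mu,M}(a_{0:H-1})$ obtained by integrating the joint trajectory density over $\mathcal{S}^H$, using the convention $T(s_0|s_{-1},a_{-1}) = p_0(s_0)$.

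There is essentially no obstacle here: the only care needed is in the cancellation of the transition probabilities inside the ratio $p_{\pi,M}(\tau)/p_{\mu,M}(\tau)$ (which holds pointwise wherever $p_{\mu,M}(\tau) > 0$, guaranteed by the standard coverage assumption on the support of $\mu$) and in the validity of Fubini when exchanging the conditional expectation with the state integral. Once the Rao-Blackwell identity $W_a = \mathbb{E}_{\mu,M}[W_\tau \mid a_{0:H-1}]$ is in hand, the variance inequality is immediate from the non-negativity of the conditional-variance term.
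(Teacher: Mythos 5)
Your proof is correct, and it takes a cleaner route than the paper's. The paper proves the result by brute-force moment comparison: it first computes explicitly that both the trajectory IS weight and the marginal action ratio have expectation $1$ under $p_{\mu,M}$, so only the second moments need to be ordered; it then writes out both second moments as sums over action sequences of integrals over $\mathcal{S}^H$ and establishes, for each fixed action sequence $a_{0:H-1}$, the inequality
\begin{equation*}
\int_{\mathcal{S}^H}\frac{\bigl(\prod_t T\,\prod_t \pi\bigr)^2}{\prod_t T\,\prod_t \mu}\,\mathrm{d}s_{0:H-1}\;\int_{\mathcal{S}^H}\prod_t \mu\,\prod_t T\,\mathrm{d}s_{0:H-1}\;\ge\;\Bigl(\int_{\mathcal{S}^H}\prod_t \pi\,\prod_t T\,\mathrm{d}s_{0:H-1}\Bigr)^2
\end{equation*}
via Cauchy--Schwarz. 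Your argument packages all of this into the single Rao--Blackwell identity $W_a=\mathbb{E}_{\mu,M}[W_\tau\mid a_{0:H-1}]$ followed by the law of total variance: the equality of the means comes for free from the tower property (no need to compute that both equal $1$), and the per-action-sequence Cauchy--Schwarz step in the paper is exactly the conditional Jensen inequality $\mathbb{E}[W_\tau^2\mid a_{0:H-1}]\ge(\mathbb{E}[W_\tau\mid a_{0:H-1}])^2$ in disguise, so the two arguments are mathematically equivalent at their core. What your version buys is brevity, a conceptual explanation of \emph{why} the marginal ratio is lower-variance (it is a conditional expectation, i.e.\ a Rao--Blackwellization, of the IS weight), and an immediate characterization of when the inequality is strict (whenever $W_\tau$ is not a.s.\ a function of the action sequence alone); what the paper's explicit computation buys is that it never invokes conditional expectations on a continuous state space, keeping the measure-theoretic bookkeeping elementary. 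Your side remarks are also sound: the cancellation $W_\tau=p_{\pi,M}(\tau)/p_{\mu,M}(\tau)$ only needs to hold $p_{\mu,M}$-almost everywhere, which the support assumption guarantees.
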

\begin{proof}
\begin{gather}
    \mathrm{Var}_{\mu,M} \left[ \frac{\prod_{t=0}^{H-1}\pi(a_i|s_i) }{\prod_{t=0}^{H-1}\mu(a_i|s_i)} \right] = \mathbb{E}_{\mu,M} \left[ \left( \frac{\prod_{t=0}^{H-1}\pi(a_i|s_i) }{\prod_{t=0}^{H-1}\mu(a_i|s_i)} \right)^2 \right] - \left[ \mathbb{E}_{\mu,M} \left(  \frac{\prod_{t=0}^{H-1}\pi(a_i|s_i) }{\prod_{t=0}^{H-1}\mu(a_i|s_i)} \right) \right]^2 \label{eqn:variance_is} \\
    \mathrm{Var}_{\mu,M} \left[ \frac{p_{\pi,M}(a_{0:H-1}) }{p_{\mu,M}(a_{0:H-1}} \right] =  \mathbb{E}_{\mu,M} \left[ \left( \frac{p_{\pi,M}(a_{0:H-1}) }{p_{\mu,M}(a_{0:H-1})} \right)^2 \right] - \left[ \mathbb{E}_{\mu,M} \left( \frac{p_{\pi,M}(a_{0:H-1}) }{p_{\mu,M}(a_{0:H-1})} \right) \right]^2 \label{eqn:variance_margin}
\end{gather}
Among them, the expectation of marginal action probability ratio is equal to the expectation of IS weights: 
\begin{eqnarray}
    && \mathbb{E}_{\mu,M} \left( \frac{p_{\pi,M}(a_{0:H-1}) }{p_{\mu,M}(a_{0:H-1})} \right) \\
    &=& \int_{\mathcal{S}^H} \sum_{a_0, \dots, a_{H-1}} \prod_{t=0}^{H-1}\mu(a_i|s_i)\prod_{t=0}^{H-1}T(s_i|s_{i-1},a_{i-1}) \frac{p_{\pi,M}(a_{0:H-1}) }{p_{\mu,M}(a_{0:H-1})} \mathrm{d} s_{0:t}  \\
    &=& \sum_{a_0, \dots, a_{H-1}} \left( \int_{\mathcal{S}^H} \prod_{t=0}^{H-1}\mu(a_i|s_i)\prod_{t=0}^{H-1}T(s_i|s_{i-1},a_{i-1})\mathrm{d} s_{0:t} \right) \frac{p_{\pi,M}(a_{0:H-1}) }{p_{\mu,M}(a_{0:H-1})}   \\
    &=& \sum_{a_0, \dots, a_{H-1}}p_{\mu,M}(a_{0:H-1})\frac{p_{\pi,M}(a_{0:H-1}) }{p_{\mu,M}(a_{0:H-1})}  \\
    &=& \sum_{a_0, \dots, a_{H-1}}p_{\pi,M}(a_{0:H-1}) = 1\\
    && \mathbb{E}_{\mu,M} \left(  \frac{\prod_{t=0}^{H-1}\pi(a_i|s_i) }{\prod_{t=0}^{H-1}\mu(a_i|s_i)} \right) \\
    &=& \int_{\mathcal{S}^H} \sum_{a_0, \dots, a_{H-1}} \prod_{t=0}^{H-1}\mu(a_i|s_i)\prod_{t=0}^{H-1}T(s_i|s_{i-1},a_{i-1}) \left(  \frac{\prod_{t=0}^{H-1}\pi(a_i|s_i) }{\prod_{t=0}^{H-1}\mu(a_i|s_i)} \right) \mathrm{d} s_{0:t} \\
    &=& \sum_{a_0, \dots, a_{H-1}} \int_{\mathcal{S}^H}  \prod_{t=0}^{H-1}T(s_i|s_{i-1},a_{i-1}) \prod_{t=0}^{H-1}\pi(a_i|s_i)  \mathrm{d} s_{0:t} \\
    &=& \sum_{a_0, \dots, a_{H-1}}p_{\pi,M}(a_{0:H-1}) = 1
\end{eqnarray}
Thus the second term in Equation \ref{eqn:variance_is} and Equation \ref{eqn:variance_margin} are the same. Now we are going to prove that
\begin{equation}
    \mathbb{E}_{\mu,M} \left[ \left( \frac{\prod_{t=0}^{H-1}\pi(a_i|s_i) }{\prod_{t=0}^{H-1}\mu(a_i|s_i)} \right)^2 \right] \ge  \mathbb{E}_{\mu,M} \left[ \left( \frac{p_{\pi,M}(a_{0:H-1}) }{p_{\mu,M}(a_{0:H-1})} \right)^2 \right] 
\end{equation}
\begin{eqnarray}
&& \mathbb{E}_{\mu,M} \left[ \left( \frac{\prod_{t=0}^{H-1}\pi(a_i|s_i) }{\prod_{t=0}^{H-1}\mu(a_i|s_i)} \right)^2 \right] \\
&=& \int_{\mathcal{S}^H} \sum_{a_0, \dots, a_{H-1}} \prod_{t=0}^{H-1}\mu(a_i|s_i)\prod_{t=0}^{H-1}T(s_i|s_{i-1},a_{i-1}) \left( \frac{\prod_{t=0}^{H-1}\pi(a_i|s_i) }{\prod_{t=0}^{H-1}\mu(a_i|s_i)} \right)^2 \mathrm{d} s_{0:t} \\
&=& \sum_{a_0, \dots, a_{H-1}} \int_{\mathcal{S}^H}  \prod_{t=0}^{H-1}T(s_i|s_{i-1},a_{i-1})  \frac{\left( \prod_{t=0}^{H-1}\pi(a_i|s_i) \right)^2 }{\prod_{t=0}^{H-1}\mu(a_i|s_i)}  \mathrm{d} s_{0:t} \\
&=& \sum_{a_0, \dots, a_{H-1}} \int_{\mathcal{S}^H}  \frac{\left(\prod_{t=0}^{H-1}T(s_i|s_{i-1},a_{i-1})  \prod_{t=0}^{H-1}\pi(a_i|s_i) \right)^2 }{\prod_{t=0}^{H-1}T(s_i|s_{i-1},a_{i-1}) \prod_{t=0}^{H-1}\mu(a_i|s_i)}  \mathrm{d} s_{0:t}
\end{eqnarray}
\begin{eqnarray}
&&\mathbb{E}_{\mu,M} \left[ \left( \frac{p_{\pi,M}(a_{0:H-1}) }{p_{\mu,M}(a_{0:H-1})} \right)^2 \right] \\
&=& \int_{\mathcal{S}^H} \sum_{a_0, \dots, a_{H-1}} \prod_{t=0}^{H-1}\mu(a_i|s_i)\prod_{t=0}^{H-1}T(s_i|s_{i-1},a_{i-1}) \left( \frac{p_{\pi,M}(a_{0:H-1}) }{p_{\mu,M}(a_{0:H-1})} \right)^2 \mathrm{d} s_{0:t} \\
&=& \sum_{a_0, \dots, a_{H-1}} \left( \int_{\mathcal{S}^H}  \prod_{t=0}^{H-1}\mu(a_i|s_i)\prod_{t=0}^{H-1}T(s_i|s_{i-1},a_{i-1}) \mathrm{d} s_{0:t} \right) \left( \frac{p_{\pi,M}(a_{0:H-1}) }{p_{\mu,M}(a_{0:H-1})} \right)^2  \\
&=& \sum_{a_0, \dots, a_{H-1}}p_{\mu,M}(a_{0:H-1}) \left( \frac{p_{\pi,M}(a_{0:H-1}) }{p_{\mu,M}(a_{0:H-1})} \right)^2 \\
&=& \sum_{a_0, \dots, a_{H-1}} \frac{\left(p_{\pi,M}(a_{0:H-1}) \right)^2 }{p_{\mu,M}(a_{0:H-1})} \\
&=& \sum_{a_0, \dots, a_{H-1}} \frac{ \left(\int_{\mathcal{S}^H} \prod_{t=0}^{H-1}\pi(a_i|s_i)\prod_{t=0}^{H-1}T(s_i|s_{i-1},a_{i-1}) \mathrm{d} s_{0:t} \right)^2 }
{\int_{\mathcal{S}^H}\prod_{t=0}^{H-1}\mu(a_i|s_i)\prod_{t=0}^{H-1}T(s_i|s_{i-1},a_{i-1}) \mathrm{d} s_{0:t}} 
\end{eqnarray}
Now we only need to prove that for any $a_0, a_1, \dots a_{H-1}$:
\begin{eqnarray}
    \int_{\mathcal{S}^H}  \frac{\left(\prod_{t=0}^{H-1}T(s_i|s_{i-1},a_{i-1})  \prod_{t=0}^{H-1}\pi(a_i|s_i) \right)^2 }{\prod_{t=0}^{H-1}T(s_i|s_{i-1},a_{i-1}) \prod_{t=0}^{H-1}\mu(a_i|s_i)}  \mathrm{d} s_{0:t} \\
    \ge \frac{ \left(\int_{\mathcal{S}^H} \prod_{t=0}^{H-1}\pi(a_i|s_i)\prod_{t=0}^{H-1}T(s_i|s_{i-1},a_{i-1}) \mathrm{d} s_{0:t} \right)^2 }
{\int_{\mathcal{S}^H}\prod_{t=0}^{H-1}\mu(a_i|s_i)\prod_{t=0}^{H-1}T(s_i|s_{i-1},a_{i-1}) \mathrm{d} s_{0:t}} 
\end{eqnarray}
Multiplying both sides $\int_{\mathcal{S}^H}\prod_{t=0}^{H-1}\mu(a_i|s_i)\prod_{t=0}^{H-1}T(s_i|s_{i-1},a_{i-1}) \mathrm{d} s_{0:t}$ we have that:
\begin{eqnarray}
\int_{\mathcal{S}^H}\prod_{t=0}^{H-1}\mu(a_i|s_i)\prod_{t=0}^{H-1}T(s_i|s_{i-1},a_{i-1}) \mathrm{d} s_{0:t} \int_{\mathcal{S}^H}  \frac{\left(\prod_{t=0}^{H-1}T(s_i|s_{i-1},a_{i-1})  \prod_{t=0}^{H-1}\pi(a_i|s_i) \right)^2 }{\prod_{t=0}^{H-1}T(s_i|s_{i-1},a_{i-1}) \prod_{t=0}^{H-1}\mu(a_i|s_i)}  \mathrm{d} s_{0:t} \\
    \ge \left(\int_{\mathcal{S}^H} \prod_{t=0}^{H-1}\pi(a_i|s_i)\prod_{t=0}^{H-1}T(s_i|s_{i-1},a_{i-1}) \mathrm{d} s_{0:t} \right)^2 
\end{eqnarray}
This inequality holds by applying Cauchy-Schwarz inequality. Thus we finish the proof.
\end{proof}

\section{Proof of Section \ref{sec:algorithm}}
\begin{corollary}
\label{cor:consistency}
Let $\widehat{M}^*_{\phi^*} = \arg\min_{\widehat{M}_{\phi}}\mathcal{L}(\widehat{M}_{\phi}; \alpha_t)$ for a large enough $\alpha$ such that $\alpha_t > B_{\phi^*,t}$. Under the same definition and assumption in Theorem \ref{thm:finite}, we have that:
\begin{multline*}
\mathbb{E}_{s_0} \left[ V^\pi_{\widehat{M}^*_{\phi^*}}(s_0) - V^\pi_{M}(s_0)  \right]^2 \le O\left(\frac{1}{n^{3/8}}\right) + \sum_{t=1}^{H-1} O\left(\frac{1}{\sqrt{m_{t,1}}} + \frac{1}{\sqrt{m_{t,2}}}\right)  \\
+ 2H \min_{\widehat{M}_{\phi} \in \mathcal{M}} \left(R_{\mu}(\widehat{M}_{\phi}) + R_{\pi,u}(\widehat{M}_{\phi}) + \sum_{t=0}^{H-1}\alpha_t \text{IPM}_{\mathcal{F}}\left(p_{M,\mu}^{\phi,F}(z_t),p_{M,\mu}^{\phi,CF}(z_t) \right) \right) \nonumber
\end{multline*}
\end{corollary}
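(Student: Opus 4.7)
The plan is to derive the stated bound by combining Theorem \ref{thm:finite}, applied to the empirical minimizer $\widehat{M}^*_{\phi^*}$, with a symmetric concentration argument to move from the empirical objective of an arbitrary comparison model to its population counterpart. First, I would apply Theorem \ref{thm:finite} directly to $\widehat{M}^*_{\phi^*}$. This yields, with high probability,
\begin{multline*}
\mathbb{E}_{s_0}\!\left[V^\pi_{\widehat{M}^*_{\phi^*}}(s_0)-V^\pi_M(s_0)\right]^2 \le 2H\,\widehat{R}_\mu(\widehat{M}^*_{\phi^*}) + 2H\,\widehat{R}_{\pi,u}(\widehat{M}^*_{\phi^*}) \\
+ 2H\sum_{t=0}^{H-1} B_{\phi^*,t}\,\mathrm{IPM}_\mathcal{F}\!\left(\widehat{p}^{\phi^*,F}_{M,\mu}(z_t),\widehat{p}^{\phi^*,CF}_{M,\mu}(z_t)\right) + E_n,
\end{multline*}
where $E_n$ collects the $O(n^{-3/8})$ and $O(m_{t,1}^{-1/2}+m_{t,2}^{-1/2})$ generalization terms from Theorem \ref{thm:finite}. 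Because $\alpha_t > B_{\phi^*,t}$ by hypothesis, each $B_{\phi^*,t}\cdot\mathrm{IPM}$ term is upper-bounded by $\alpha_t\cdot\mathrm{IPM}$, and so the right-hand side is at most $2H\,\mathcal{L}(\widehat{M}^*_{\phi^*};\alpha) + E_n$ (absorbing the model-complexity term $\mathfrak{R}/n^{3/8}$ into $E_n$).

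Next, I would invoke the defining optimality property $\mathcal{L}(\widehat{M}^*_{\phi^*};\alpha)\le\mathcal{L}(\widehat{M}_\phi;\alpha)$ for every $\widehat{M}_\phi\in\mathcal{M}_\Phi$, so that the bound holds with $\widehat{M}^*_{\phi^*}$ replaced on the right-hand side by an arbitrary reference $\widehat{M}_\phi$. It then remains to replace the three empirical quantities $\widehat{R}_\mu$, $\widehat{R}_{\pi,u}$ and $\widehat{\mathrm{IPM}}$ by their population versions $R_\mu$, $R_{\pi,u}$ and $\mathrm{IPM}$ at the (fixed) reference $\widehat{M}_\phi$, up to additive generalization terms. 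For the two risk terms I would invoke Lemma \ref{lem:ISbound} once more, now run in the opposite direction, giving $\widehat{R}_\mu(\widehat{M}_\phi) \le R_\mu(\widehat{M}_\phi) + O(n^{-3/8})$ and analogously for $\widehat{R}_{\pi,u}$; the pseudo-dimension constants $\mathcal{C}^\mathcal{M}_{n,\delta,t}$ are the same as before. For the empirical MMD I would use Lemma \ref{lem:IPM} symmetrically to obtain $\widehat{\mathrm{IPM}} \le \mathrm{IPM} + O(m_{t,1}^{-1/2}+m_{t,2}^{-1/2})$. After absorbing all these remainders into the $O(n^{-3/8})$ and $\sum_t O(m_{t,1}^{-1/2}+m_{t,2}^{-1/2})$ error terms, taking the minimum over $\widehat{M}_\phi\in\mathcal{M}_\Phi$ on the right-hand side yields exactly the stated bound.

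The main obstacle I anticipate is bookkeeping around uniform convergence and the hidden constants: the first application of Theorem \ref{thm:finite} to $\widehat{M}^*_{\phi^*}$ already needed uniform convergence over $\mathcal{M}_\Phi$, but the second application (for the empirical-to-population step at the reference $\widehat{M}_\phi$) must use the same uniform bound, and one must check that a single high-probability event of probability at least $1-6\delta$ or so supports both inequalities simultaneously, so that the bound holds for the random minimizer $\widehat{M}^*_{\phi^*}$ and for every reference model $\widehat{M}_\phi$ at once. The consistency claim $\text{MSE}\to 0$ then follows because all three error types vanish as $n,m_{t,1},m_{t,2}\to\infty$ and, under the realizability hypothesis stated in the main text, the minimum over $\widehat{M}_\phi$ on the right-hand side can be driven to zero.
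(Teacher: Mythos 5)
Your proposal is correct and follows essentially the same route as the paper: apply Theorem \ref{thm:finite} to the empirical minimizer, use $\alpha_t > B_{\phi^*,t}$ and the ERM optimality of $\widehat{M}^*_{\phi^*}$ to pass to the empirical objective of a comparison model, then apply Lemma \ref{lem:ISbound} and Lemma \ref{lem:IPM} in the reverse direction to bound that empirical objective by its population counterpart plus $O(n^{-3/8})$ and $O(m_{t,1}^{-1/2}+m_{t,2}^{-1/2})$ terms (the paper simply fixes the population minimizer $M^*_{\phi^*_0}$ as the comparator rather than taking the minimum at the end, which is equivalent). Your bookkeeping concern about supporting both concentration steps on a single high-probability event is a fair point that the paper itself leaves implicit.
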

\begin{proof}
Let $M^*_{\phi^*_0}$ be the model that minimizes the expected risk and IPM term:
\[ M^*_{\phi^*_0} = \arg\min_{\widehat{M}_{\phi}} \left(R_{\mu}(\widehat{M}_{\phi}) + R_{\pi,u}(\widehat{M}_{\phi}) + \sum_{t=0}^{H-1}\alpha_t \text{IPM}_{\mathcal{F}}\left(p_{M,\mu}^{\phi,F}(z_t),p_{M,\mu}^{\phi,CF}(z_t) \right) \right)\]
From theorem \ref{thm:finite} we have that:
\begin{eqnarray}
&& \mathbb{E}_{s_0} \left[ V^\pi_{\widehat{M}^*_{\phi^*}}(s_0) - V^\pi_{M}(s_0)  \right]^2 \nonumber \\
&\le& O\left(\frac{1}{n^{3/8}}\right) + \sum_{t=1}^{H-1} O\left(\frac{1}{\sqrt{m_{t,1}}} + \frac{1}{\sqrt{m_{t,2}}}\right) + 2H\mathcal{L}(\widehat{M}^*_{\phi^*}; B_{\phi^*,t})  \\
&\le& O\left(\frac{1}{n^{3/8}}\right) + \sum_{t=1}^{H-1} O\left(\frac{1}{\sqrt{m_{t,1}}} + \frac{1}{\sqrt{m_{t,2}}}\right) + 2H\mathcal{L}(\widehat{M}^*_{\phi^*}; \alpha_t) \\
&\le& O\left(\frac{1}{n^{3/8}}\right) + \sum_{t=1}^{H-1} O\left(\frac{1}{\sqrt{m_{t,1}}} + \frac{1}{\sqrt{m_{t,2}}}\right) + 2H\mathcal{L}(M^*_{\phi^*_0}; \alpha_t)
\label{eqn:boundMSEbyoptimalL}
\end{eqnarray}
The first step follows from Theorem \ref{thm:finite}. The second step is from the fact that $\alpha_t > B_{\phi^*,t}$. The third step is from that $\widehat{M}^*_{\phi^*} = \arg\min_{\widehat{M}_{\phi}}\mathcal{L}(\widehat{M}_{\phi}; \alpha_t)$. Then we can bound the empirical loss term $\mathcal{L}(M^*_{\phi^*_0}; \alpha_t)$ by the expected value of it:
\begin{eqnarray}
\mathcal{L}(M^*_{\phi^*_0}; \alpha_t) &=& \widehat{R}_{\mu}(M^*_{\phi^*_0}) + \widehat{R}_{\pi,u}(M^*_{\phi^*_0}) + \sum_{t=0}^{H-1}\alpha_t \text{IPM}_{\mathcal{F}}\left(\widehat{p}_{M,\mu}^{\phi^*_0,F}(z_t),\widehat{p}_{M,\mu}^{\phi^*_0,CF}(z_t) \right) \nonumber \\
&& + \frac{\mathfrak{R}(\widehat{M}_{\phi})}{n^{3/8}} \\
&\le& R_{\mu}(M^*_{\phi^*_0}) + R_{\pi,u}(M^*_{\phi^*_0}) + \sum_{t=0}^{H-1}\alpha_t \text{IPM}_{\mathcal{F}}\left(p_{M,\mu}^{\phi^*_0,F}(z_t),p_{M,\mu}^{\phi^*_0,CF}(z_t) \right) \nonumber \\
&& + O\left(\frac{1}{n^{3/8}}\right) + O\left(\frac{1}{\sqrt{m_{t,1}}} + \frac{1}{\sqrt{m_{t,2}}}\right)
\end{eqnarray}
This follows from using Lemma \ref{lem:ISbound} and Lemma \ref{lem:IPM} similarly with Equation \ref{eqn:IPMempiricalIPM}, \ref{eqn:RmuempiricalRmu}, \ref{eqn:RpiuempiricalRpiu} but in different direction, together with the fact that $ \frac{\mathfrak{R}(\widehat{M}_{\phi})}{n^{3/8}} = O\left(\frac{1}{n^{3/8}}\right)$.

Put this into equation \ref{eqn:boundMSEbyoptimalL}, we have that
\begin{multline}
\mathbb{E}_{s_0} \left[ V^\pi_{\widehat{M}^*_{\phi^*}}(s_0) - V^\pi_{M}(s_0)  \right]^2 \le 2H \left( R_{\mu}(M^*_{\phi^*_0}) + R_{\pi,u}(M^*_{\phi^*_0}) \right. \\
+ \left. \sum_{t=0}^{H-1}\alpha_t \text{IPM}_{\mathcal{F}}\left(p_{M,\mu}^{\phi^*_0,F}(z_t),p_{M,\mu}^{\phi^*_0,CF}(z_t) \right) \right) + O\left(\frac{1}{n^{3/8}}\right) + \sum_{t=1}^{H-1} O\left(\frac{1}{\sqrt{m_{t,1}}} + \frac{1}{\sqrt{m_{t,2}}}\right)
\end{multline}
Thus we finished the proof.
\end{proof}
Under assumption about support set of $\mu$, $m_{t,1}, m_{t,2} \to \infty$ when $n \to \infty$. Then an immediate consequence from this corollary is that, if there exists an MDP and representation model in our model class that could achieve no generalization error,
\[
\min_{\widehat{M}_{\phi}} \left(R_{\mu}(\widehat{M}_{\phi}) + R_{\pi,u}(\widehat{M}_{\phi}) + \sum_{t=0}^{H-1}\alpha_t \text{IPM}_{\mathcal{F}}\left(p_{M,\mu}^{\phi,F}(z_t),p_{M,\mu}^{\phi,CF}(z_t) \right) \right) = 0,
\]
then $\lim_{n\to \infty}\mathbb{E}_{s_0} \left[ V^\pi_{\widehat{M}^*_{\phi^*}}(s_0) - V^\pi_{M}(s_0)  \right]^2 \to 0$ and estimator $V^\pi_{\widehat{M}^*_{\phi^*}}(s_0)$ is a consistent estimator for any $s_0$.

\section{Details of Experiment}

%\subsection{Synthetic control domains: Cart Pole and Mountain Car}
We will clarify the details of the Cart Pole and Mountain Car experiment and provide results from additional OPPE methods.

\textbf{Details of the domain} For Cart Pole domain, we follow the same settings as in the OpenAI Gym \cite{brockman2016openai} CartPole-v0 environment. The state consists of 4 features: position, speed, angle and angular speed. The agent can take two actions: move to the left or to the right. The trajectory will end either when the time step is larger than 200 or when the absolute value of position or angle is larger than the threshold. The goal in this domain is to control a cart as long as possible. We will receive the reward after each time step if the cart is under control, and the trajectory ends when the cart falls. 

We include two different variants in this domain: long horizon and short horizon. For long horizon, we learn a near-optimal Q function, and use the greedy policy as evaluation policy and $\epsilon-$greedy policy with $\epsilon=0.2$ as behavior policy. The average value, which is also the average length of trajectories, of the evaluation policy is $195$ and the average value of the behavior policy is $190$. For shorter horizon, we learn a weaker Q function and generate the policies in the same way, with the average value of $23.8$ and $24$ respectively. The reason that we learn a near optimal but not optimal policy for the long horizon is that the optimal policy can always hold the cart for 200 steps (max length), which makes it easy to estimate since there is no possibility of overestimating it.

For Mountain Car domain, we follow the same settings as in the OpenAI Gym \cite{brockman2016openai} MountainCar-v0 environment. The state consists of 4 features: position and velocity. The agent can take two actions: accelerate to the left or to the right. The trajectory will end either when the time step is larger than 200 or when the position exceeds the threshold. The goal in this domain is to control the car to reach the top of mountain as soon as possible. We will receive a negative reward after each time step. 

\textbf{Details of our model}
Our model has three parts: a representation module, a reward module, and a transition module. The representation module is a one layer feed-forward network that takes the state as input and outputs a 32-dimension representation. The reward module takes the representation as input and outputs $A=2$ predictions, corresponding to 2 different actions. The transition module is similar to the reward module, but it predicts the difference between state and next state which is a widely-used trick for transition dynamics modeling. Both the reward module and transition module are feed-forward networks with no hidden layer. We optimize the model using Adam. Since this domain has variable length of trajectories, we also learn the condition of terminal state. The only domain prior we assume is that we know the maximum length of a trajectory is 200.

We also need to explain the details of transition loss $\ell_T$. Since this domain is a deterministic domain, the loss function turns to be:
\begin{equation}
    \ell_T(s_t,a_t,s'_t,\widehat{M}) = \left( V^{\pi}_{\widehat{M},H-t-1}(s'_t) - V^{\pi}_{\widehat{M},H-t-1}(s') \right)^2,
\end{equation}
where $s'$ is the predicted next state prediction and $s'_{t}$ is the logged next state in dataset. Since repeatedly performing planning at training time is very computationally-intensive, it is difficult to get the function $V^{\pi}_{\widehat{M},H-t-1}(s)$. It is also challenging to compute the the derivative of this with respect to $s$. If we assume the resulting value is $L$-Lipschitz, then this loss can be bounded by $L(s'-s'_t)^2$. This is slightly different to the algorithmic part in the main body but it will still be an upper bound of $\ell_T$ in the main body. In this experiment we set $L=1$.

If we are in a discrete state space, the transition loss $\ell_T$ turns to be:
\begin{equation}
    \ell_T(s_t,a_t,s'_t,\widehat{M}) = \sum_{s' \in \mathcal{S}} V^{\pi}_{\widehat{M},H-t-1}(s') \left( \widehat{T}(s'|s,a) - \mathds{1}(s'_t-s')\right)^2
\end{equation}
We can use similar trick with double Q learning for DQN: doing value iteration to generate a target value vector $V^{\pi}_{\widehat{M},H-t-1}(s')$, and view this as constant vector when we compute derivative with $s'$. Then this loss becomes a weighted MSE loss. We can update the target value vector $V^{\pi}_{\widehat{M},H-t-1}(s')$ every several episodes.

\textbf{Methods} We compare several different methods: \textbf{1) \ourmethod} The proposed method. \textbf{2) AM} We compare our method $\ourmethod$ with a baseline approximate model, which uses the exactly same model class as our model, with the objective of minimizing the on-policy loss $R_{\mu}$. This is a straight-forward way to fit a regression model without any off-policy adjustment. \textbf{3) MRDR} we also compare with the more robust doubly robust (MRDR) method, which proposed a new way to train a Q function and use it in doubly robust. MRDR trains the Q function to minimize:
\begin{equation}
    \frac{1}{n} \sum_{i=1}^n \sum_{t=0}^{H-1} (w_{0:t}^{(i)})^2 \frac{1-\mu(a^{(i)}_t|s^{(i)}_t)}{\mu(a^{(i)}_t|s^{(i)}_t)} \left( \bar{R}_{t:H-1}^{(i)} - \widehat{Q}^{\pi}(s^{(i)}_t,a^{(i)}_t) \right),
\end{equation}
where $\bar{R}_{t:H-1}^{(i)}=\sum_{j=t+1}^{H-1} w_{t+1:j}^{(i)}r^{(i)}_j$ is the per-decision IS return from $t+1$ to $H-1$, and $w$'s are IS weights. \textbf{4) MRDR-WIS} Since this objective function can be very noisy and hard to fit when IS weights are high-variance, we also test another variant of MRDR by changing $\bar{R}_{t:H-1}^{(i)}$ to a weighted per-decision IS return from $t+1$ to $H-1$.

Within each one of the methods above, we test five different kinds of estimator. We have a pure MDP/Q model estimator, doubly robust (DR) using that MDP/Q model and weighted doubly robust (WDR) using that MDP/Q model. We evaluate a deterministic evaluation policy, which will result in most of the IS weights being zero; once an IS weight at one timestep is zero, then the product of all IS weights after that step will be zero. This setting is challenging for importance sampling and DR. We also test a very simple idea to avoid this problem -- we add a slight noise perturbation into the evaluation policy ($\epsilon$ = 0.01), and treat it as the true evaluation policy to generate IS weights for DR and WDR. The additional noise is small enough so that the error introduced by this is negligible compared with the MSEs of estimators. We call these variants of DR and WDR soft DR and soft WDR respectively.

We also compare with importance sampling (IS), weighted IS (WIS), soft IS, soft WIS, per-decision importance sampling (PDIS), weighted PDIS (WPDIS), soft PDIS, soft WPDIS. The soft methods are produced by changing the IS weights using the soft evaluation policy.

We report the results in Table \ref{table:cartfullresult} and \ref{table:mcfullresult}. Note that in the long horizon case, IS weights are all zero so WIS estimator is not defined. Though it is clear that for a single individual in continuous state space, IS and DR would not produce meaningful results due to the fact they only estimate from one trajectory, here we still include the IS and DR estimates for MSE for individual policy values. Not surprisingly we observe that those results are enormous which verifies that plain IS and DR are not reasonable estimators for individual value.

\begin{table}[ht!]
\caption{Root MSE for Cart Pole}
\label{table:cartfullresult}
\small
\centering

\begin{tabular}{lcccccccc}
\toprule
\multicolumn{1}{c}{} & 
\multicolumn{2}{c}{Long horizon} & 
\multicolumn{2}{c}{Short horizon}  
\\
\multicolumn{1}{c}{} & 
\multicolumn{1}{c}{MSE (mean)} & 
\multicolumn{1}{c}{MSE (individual)} & 
\multicolumn{1}{c}{MSE (mean)} & 
\multicolumn{1}{c}{MSE (individual)} 
\\
\midrule
\ourmethod	&\textbf{0.412}	&\textbf{1.033}	&0.078	&\textbf{0.481} \\
DR(\ourmethod)	&1.359	&40.820	&0.021	&0.789 \\
WDR(\ourmethod)	&0.619	&17.760	&0.026	&0.857 \\ 
Soft DR(\ourmethod)	&1.608	&53.390	&\textbf{0.020}	&0.686 \\
Soft WDR(\ourmethod)	&0.730	&24.95	&20.59	&634.7 \\
\midrule
AM	&0.754	&1.313	&0.125	&0.551 \\
DR(AM)	&1.786	&58.66	&0.024	&0.863 \\
WDR(AM)	&0.706	&19.73	&0.025	&0.929 \\
Soft DR(AM)	&1.613	&52.56	&\textbf{0.020}	&0.744 \\
Soft WDR(AM)	&0.848	&29.71	&20.28	&640.4 \\
AM ($\pi$) &41.80 &47.63 & 0.1233 &0.5974 \\
\midrule
MRDR's Q	&151.1	&151.9	&3.013	&3.823 \\
MRDR	&202.0	&7055	&0.258	&8.266 \\
WMRDR	&123.6	&1049	&2.343	&59.640 \\
Soft MRDR	&813.8	&2590	&0.211	&6.758 \\
Soft WMRDR	&92.00 &2669	&22.550	&601.4 \\
\midrule
MRDR-WIS's Q	&143.9	&145.1	&2.486	&3.440 \\
MRDR-WIS	&190.9	&6106	&0.248	&8.075 \\
WMRDR-WIS	&122.0	&1054	&2.599	&68.60 \\
Soft MRDR-WIS	&746.9	&23610	&0.199	&6.626 \\
Soft WMRDR-WIS	&108.3	&2992 &21.26	&570.6 \\
\midrule
IS	&194.500 &194.7	&2.860	&93.87\\ 
WIS	& -  &-  &0.505	&93.86 \\
Soft IS	&187.9	&1115	&2.179	&70.78 \\
Soft WIS	&8.144	&4698	&0.380	&70.55 \\
\midrule
PSIS	&477.5	&1526	&1.083	&36.67 \\
WPSIS	&125.9	&622.5	&1.819	&63.59 \\ 
Soft PSIS	&215.2	&6853	&0.903	&30.08 \\
Soft WPSIS	&4.225	&1983	&24	&678.2 \\
\bottomrule
\end{tabular}
\end{table}

\begin{table}[ht!]
\caption{Root MSE for Mountain Car}
\label{table:mcfullresult}
\small
\centering

\begin{tabular}{lccccc}
\toprule
\multicolumn{1}{c}{} & 
\multicolumn{1}{c}{MSE (mean)} & 
\multicolumn{1}{c}{MSE (individual)} 
\\

\midrule
\ourmethod &\textbf{12.31} & \textbf{31.38}  \\
DR(\ourmethod)&135.8 &	4352  \\
WDR(\ourmethod) &27.27 &	790.7 \\
Soft DR(\ourmethod) &59.9 &	1929 \\
Soft WDR(\ourmethod) &22.6 &	825.4 \\

\midrule
AM& 17.15&	36.36 \\
DR(AM)& 141.6&	4548\\
WDR(AM)& 24.89&	756.9\\
Soft DR(AM)& 66.45&	2129\\
Soft WDR(AM)& 23.79&	831.7\\
AM ($\pi$)& 72.61&	79.46\\
% 106.3&	2468\\
% 93.63&	520.9\\
% 205.8&	6880\\
% 42.18&	2312\\
\midrule
MRDR's Q &135.4&	138.1\\
MRDR &172.7&	5427\\
WMRDR &78.34&	1400\\
Soft MRDR &4481&	125100\\
Soft WMRDR &5631&	104500\\
\midrule
MRDR-WIS's Q &140.5&	143.1\\
MRDR-WIS&212&	6975\\
WMRDR-WIS&110.1&	2101\\
Soft MRDR-WIS&5308&	139000\\
Soft WMRDR-WIS&8564&	167700\\
\midrule
IS &149.7&	152.2\\
WIS &nan&	nan\\
Soft IS &208.5&	3936\\
Soft WIS &301.3&	3862\\
\midrule
PSIS &108.6&	2334\\
WPSIS &99.79&	440.7\\
Soft PSIS &117.1&	3597\\
Soft WPSIS &45.8&	1924\\
\bottomrule
\end{tabular}
\end{table}

\textbf{Evaluation}
Thomas and Brunskill \cite{thomas2016data} discussed that it is not obvious how to use the trajectories to fairly compare DR, IS and AM estimators, in Appendix D.4 from \cite{thomas2016data}. There are three ways that are reasonable: the first way is that AM and DR estimators should be provided with additional trajectories that are not available to IS, which are used to learn the model. This can be viewed as the additional domain prior knowledge. This is the setting in MRDR's experiment \cite{farajtabar2018more}. The second way is that all methods should have the same amount of data. DR methods should split the data into two parts to learn the model and IS weights separately. That partition keeps the unbiasedness of DR, but reduces the size of available samples for model learning in DR. The third way is that all methods should have the same amount of data. The DR method reuses the data to learn the model and compute IS weights. This helps DR methods to achieve best empirical performance in Thomas and Brunskill \cite{thomas2016data}. There is not necessarily a "correct" answer to this question. We follow the third setting to make both DR and IS stronger baselines. 

We sample 1024 trajectories to generate off-policy estimators. For our method and AM method, we split the data into a training set (90$\%$) and a validation set ($10\%$) and use the validation set to tune the model structure and optimization settings. To compute the MSE of an individual value, we record the initial state of the 1024 trajectories and roll-out from true environment to get the true policy over those initial states as ground truth. We use the average policy value over these initial states as the ground truth for MSE of mean value. We repeat the whole process for $N=100$ runs and report the square root of averaged MSEs (for both individual and mean).

\textbf{Effect of parameter $\alpha$}

We study the effect of the hyper-paramter $\alpha$ in the IPM terms on the estimation results. We show the MSE of \ourmethod~trained using different $\alpha$. 

\begin{table}[ht!]
\caption{Root MSEs of \ourmethod~with different $\alpha$ for the cartpole domain}
\small
\centering

\begin{tabular}{lccccc}
\toprule
\multicolumn{1}{c}{Long horizon} & 
\multicolumn{1}{c}{$\alpha=0$} & 
\multicolumn{1}{c}{$\alpha=0.01$} & 
\multicolumn{1}{c}{$\alpha=0.1$} & 
\multicolumn{1}{c}{$\alpha=1$} &
\multicolumn{1}{c}{$\alpha=10$} 
\\
\midrule

Mean &0.554 &0.412 &0.406 &\textbf{0.389} &2.287 \\
Individual &1.178 &1.033 &\textbf{1.008} &1.023 &3.903 \\
\toprule
\multicolumn{1}{c}{Short horizon} & 
\multicolumn{1}{c}{$\alpha=0$} & 
\multicolumn{1}{c}{$\alpha=0.01$} & 
\multicolumn{1}{c}{$\alpha=0.1$} & 
\multicolumn{1}{c}{$\alpha=1$} &
\multicolumn{1}{c}{$\alpha=10$} 
\\
\midrule
Mean &0.114 &\textbf{0.078} &0.114 &0.357 &0.365 \\
Individual  &0.672 &\textbf{0.481} &0.702 &1.684 &1.545 \\
\bottomrule
\end{tabular}
\end{table}

\subsection{Further discussion}
An interesting issue is about the effect of the horizon. Although the "marginal" IS weights have less variance than IS weights, there is still a concern when the horizon is very long and the overlap of the behavior policy and the evaluation policy is small. That also has an effect on the IPM term: we would not have enough factual/counterfactual samples to estimate the IPMs, for large time steps $t$. In that case, the IPMs only effectively adjust the representation for the earlier of the trajectories. Both of the experimental domains actually encounter this case, and the experimental results show that \ourmethod~still outperforms other methods. In the Cart Pole domain it is clear that \ourmethod~can still benefit from IPM.

\end{document}